\documentclass{article}

\usepackage[preprint]{neurips_2026}

\usepackage[utf8]{inputenc}
\usepackage[T1]{fontenc}
\usepackage{hyperref}
\usepackage{url}
\usepackage{booktabs}
\usepackage{array}
\usepackage{graphicx}
\usepackage{algorithm}
\usepackage{algpseudocode}
\usepackage{algorithmicx}
\usepackage{amsmath,amssymb,amsfonts,amsthm,mathtools}
\usepackage{nicefrac}
\usepackage{microtype}
\usepackage{xcolor}
\usepackage{bm}
\usepackage{aliascnt}
\usepackage[capitalize,noabbrev]{cleveref}
\usepackage{comment}

\theoremstyle{plain}
\newtheorem{theorem}{Theorem}[section]
\newaliascnt{lemma}{theorem}
\newtheorem{lemma}[lemma]{Lemma}
\aliascntresetthe{lemma}
\newaliascnt{corollary}{theorem}

\aliascntresetthe{corollary}
\newaliascnt{proposition}{theorem}

\aliascntresetthe{proposition}
\newaliascnt{claim}{theorem}

\aliascntresetthe{claim}

\theoremstyle{definition}
\newaliascnt{definition}{theorem}
\newtheorem{definition}[definition]{Definition}
\aliascntresetthe{definition}
\newaliascnt{assumption}{theorem}

\aliascntresetthe{assumption}
\newaliascnt{example}{theorem}

\aliascntresetthe{example}
\newaliascnt{question}{theorem}

\aliascntresetthe{question}

\newaliascnt{obs}{theorem}
\newtheorem{obs}[obs]{Observation}
\aliascntresetthe{obs}

\newaliascnt{remark}{theorem}
\newtheorem{remark}[remark]{Remark}
\aliascntresetthe{remark}

\crefname{theorem}{Theorem}{Theorems}
\Crefname{theorem}{Theorem}{Theorems}
\crefname{lemma}{Lemma}{Lemmas}
\Crefname{lemma}{Lemma}{Lemmas}
\crefname{corollary}{Corollary}{Corollaries}
\Crefname{corollary}{Corollary}{Corollaries}
\crefname{proposition}{Proposition}{Propositions}
\Crefname{proposition}{Proposition}{Propositions}
\crefname{claim}{Claim}{Claims}
\Crefname{claim}{Claim}{Claims}
\crefname{definition}{Definition}{Definitions}
\Crefname{definition}{Definition}{Definitions}
\crefname{assumption}{Assumption}{Assumptions}
\Crefname{assumption}{Assumption}{Assumptions}
\crefname{example}{Example}{Examples}
\Crefname{example}{Example}{Examples}
\crefname{question}{Question}{Questions}
\Crefname{question}{Question}{Questions}
\crefname{remark}{Remark}{Remarks}
\Crefname{remark}{Remark}{Remarks}
\crefname{appendix}{Appendix}{Appendices}
\Crefname{appendix}{Appendix}{Appendices}

\newcommand{\N}{\mathbb{N}}
\newcommand{\E}{\mathbb{E}}
\newcommand{\Prob}{\mathbb{P}}
\newcommand{\R}{\mathbb{R}}

\newcommand{\Z}{\mathbb{Z}}

\newcommand{\G}{\mathcal{G}}

\newcommand{\U}{\mathcal{U}}
\newcommand{\V}{\mathcal{V}}
\newcommand{\W}{\mathcal{W}}

\newcommand{\x}{\mathbf{x}}
\newcommand{\y}{\mathbf{y}}
\newcommand{\z}{\mathbf{z}}
\newcommand{\w}{\mathbf{w}}
\renewcommand{\a}{\mathbf{a}}
\renewcommand{\b}{\mathbf{b}}

\newcommand{\q}{\mathbf{q}}

\renewcommand{\u}{\mathbf{u}}
\renewcommand{\v}{\mathbf{v}}
\newcommand{\0}{\mathbf{0}}
\newcommand{\1}{\mathbf{1}}

\newcommand{\old}[1]{{}}

\DeclareMathOperator{\argmin}{arg\,min}

\title{Sample Complexity of Stochastic Optimization with Integer Variables}

\author{%
Hongyu Cheng\\
Dept. of Applied Mathematics \& Statistics\\
Johns Hopkins University\\
Baltimore, MD 21218\\
\texttt{hongyucheng@jhu.edu}
\And
Yinghao Zheng\\
Dept. of Applied Mathematics \& Statistics\\
Johns Hopkins University\\
Baltimore, MD 21218\\
\texttt{yzheng80@jhu.edu}
\AND
Marco Molinaro\\
Microsoft Research (Redmond)\\
Dept. of Computer Science, PUC-Rio\\
\texttt{mmolinaro@microsoft.com}
\And
Amitabh Basu\\
Dept. of Applied Mathematics \& Statistics\\
Johns Hopkins University\\
Baltimore, MD 21218\\
\texttt{basu.amitabh@jhu.edu}
}

\begin{document}

\maketitle

\begin{abstract}
We establish sample complexity results for stochastic optimization over the integers, especially with a view to understand the complexity with respect to the corresponding continuous optimization problem. We show that integer optimization can sometimes require strictly more samples and sometimes strictly smaller number of samples, depending on the structure of the objective and constraints. 
\begin{enumerate}
    \item For Lipschitz objectives over subsets of the $\ell_\infty$ ball, the statistical complexity of general stochastic mixed-integer, nonlinear, nonconvex optimization is exactly the same as stochastic linear optimization with just bound constraints.
    \item For Lipschitz objectives over subsets of the $\ell_2$ ball, we show that integer optimization can require strictly {\em smaller} sample size compared to the continuous setting in a certain regime. To get to this result, we also establish tight sample complexity results for nonconvex continuous stochastic optimization which, to the best of our knowledge, do not appear in prior work.
    \item For strongly convex, smooth objectives, integer optimization has high statistical complexity compared to the continuous setting. In particular, we show that integer optimization requires $\Omega(1/\epsilon^2)$ samples to report an $\epsilon$-approximate solution, compared to the well-known $O(1/\epsilon)$ sample complexity from the continuous optimization literature.
\end{enumerate}
\end{abstract}

\section{Introduction}

The fundamental question in statistical machine learning is a stochastic optimization problem:

\begin{equation}\label{eq:SCO}\min_{\x \in X} F_{D}(\x) := \mathbb{E}_{\z \sim D}[f(\x; z)]\end{equation} where $X \subseteq \R^d$ denotes the constraints the decision variable $\x$ must satisfy, $f : X \times \mathcal{Z} \to \R$ is a fixed function, $\mathcal{Z}$ is a measurable space and $D$ is a probability distribution on $\mathcal{Z}$. The goal is to solve this problem (to within some desired error tolerance) without having explicit knowledge of the distribution $D$ and only having access to i.i.d samples from $D$. We must ``learn'' enough about the objective function $F_D$, which depends on the distribution $D$, from the sample, to be able to report a good solution with high confidence, i.e., high probability over the random sample~\cite{shalev2014understanding}. In this paper, we investigate the central notion of sample complexity, i.e., the smallest number of i.i.d samples from $D$ that are needed to solve the problem to a given level of accuracy with high confidence. This obviously depends on the nature of the function $f$ (e.g., assuming $f(\cdot\,; z)$ is convex for every $z \in \mathcal{Z}$), the constraint set $X$, as well as the family of distributions $D$ one wants to target. The focus is typically on the structure of $f$ and $X$, with minimal assumptions on $D$.

While the sample complexity question has been studied extensively in the continuous convex optimization setting (we summarize the state-of-the-art results below), the setting with integer decision variables seems to have received much less attention, to the best of our knowledge. This paper is an attempt to close this gap and in the process understand the relative difficulty of continuous versus discrete stochastic optimization in terms of quantitative sample complexity bounds. It has been known for a while that the algorithmic and information complexity of convex optimization with integer variables is much higher (exponential in the dimension) than continuous convex optimization~\cite{SchrijverBOOK,conforti2014integer,basu2025convexity}. To the best of our knowledge, the statistical aspect has not been rigorously studied: Does stochastic optimization with integer variables need a much larger number of samples compared to its continuous counterpart? This paper makes progress on this statistical question.

\begin{table}[htbp]
\centering
\caption{Summary of sample complexity bounds.}\label{table:results}
\label{tab:discrete}
\begingroup
\scriptsize
\setlength{\tabcolsep}{2.2pt}
\renewcommand{\arraystretch}{1.45}
\newcommand{\ratecell}[1]{%
  \begingroup
  \setbox0=\hbox{$#1$}%
  \ifdim\wd0>\linewidth
    \makebox[\linewidth][c]{\resizebox{\linewidth}{!}{\box0}}%
  \else
    \makebox[\linewidth][c]{\box0}%
  \fi
  \endgroup}
\begin{tabular}{@{}>{\raggedright\arraybackslash}m{0.15\linewidth}>{\centering\arraybackslash}m{0.28\linewidth}>{\centering\arraybackslash}m{0.28\linewidth}>{\centering\arraybackslash}m{0.25\linewidth}@{}}
\toprule
Problem class & Any data-driven algorithm & ERM & UC \\
\midrule
\begin{tabular}[c]{@{}l@{}}
$X \subseteq B_R^{(\infty)}$,  \\
$1$-Lipschitz $f$
\end{tabular}
&
\ratecell{\Theta\!\left(\tfrac{R^2}{\epsilon^2}(d+\log(1/\delta))\right)}
&
\ratecell{\Theta\!\left(\tfrac{R^2}{\epsilon^2}(d+\log(1/\delta))\right)}
&
\parbox[c]{\linewidth}{\centering
\ratecell{\Theta\!\left(\tfrac{R^2}{\epsilon^2}(d+\log(1/\delta))\right)}%
}
\\
\addlinespace[0.55em]
\begin{tabular}[c]{@{}l@{}}
$X \subseteq B_R^{(2)}$,  \\
$1$-Lipschitz $f$
\end{tabular}
&
\ratecell{\Theta\!\left(\tfrac{R^2}{\epsilon^2}(d+\log(1/\delta))\right)}
&
\ratecell{\Theta\!\left(\tfrac{R^2}{\epsilon^2}(d+\log(1/\delta))\right)}
&
\parbox[c]{\linewidth}{\centering
\ratecell{\Theta\!\left(\tfrac{R^2}{\epsilon^2}(d+\log(1/\delta))\right)}%
}
\\
\addlinespace[0.55em]
\begin{tabular}[c]{@{}l@{}}
$X = B_R^{(2)} \cap \Z^d$, \\
$1$-Lipschitz $f$
\end{tabular}
&
\ratecell{\Theta\!\left(\tfrac{R^2}{\epsilon^2}(H_2(d,R)+\log(1/\delta))\right)}
&
\ratecell{\Theta\!\left(\tfrac{R^2}{\epsilon^2}(H_2(d,R)+\log(1/\delta))\right)}
&
\ratecell{\Theta\!\left(\tfrac{R^2}{\epsilon^2}(H_2(d,R)+\log(1/\delta))\right)}%
\\
\addlinespace[1.28em]
\begin{tabular}[c]{@{}l@{}}
$X=B_R^{(\infty)} \cap \Z^d$\\
$L$-smooth and\\
$\mu$-strongly convex $f$
\end{tabular}
&
\ratecell{\Theta\!\left(\tfrac{\sigma^2d\,\bar{\kappa}}{\epsilon^2}(d+\log(1/\delta))\right)}
&
\ratecell{\Theta\!\left(\tfrac{\sigma^2d\,\bar{\kappa}}{\epsilon^2}(d+\log(1/\delta))\right)}
&
\ratecell{\Theta\!\left(\tfrac{\sigma^2\lfloor R\rfloor^2 d}{\epsilon^2}(d+\log(1/\delta))\right)}
\\
\bottomrule
\end{tabular}
\vspace{0.35em}
\begin{minipage}{0.98\linewidth}
Here %
$H_2(d,R)=0$ for $0<R<1$, while $H_2(d,R)=\min\{d,\lfloor R^2\rfloor\}\log(ed/\min\{d,\lfloor R^2\rfloor\})$ for $R\ge1$, and $\bar{\kappa}:=\min\{L/\mu,\lfloor R\rfloor^2\}$. For the smooth strongly convex row, we allow $R=\infty$ with the interpretation that $B_\infty^{(\infty)}=\R^d$ and $\lfloor\infty\rfloor=\infty$. The uniform convergence bounds require an {\em anchoring} assumption stated formally in Section~\ref{sec:setup} (without such an assumption, the sample complexity of UC can be shown to be $+\infty$ in all the cases above).%
\end{minipage}
\endgroup
\end{table}

\subsection{Our contributions and relation to prior work}
We investigate three standard notions of sample complexity within stochastic optimization: for uniform convergence, for empirical risk minimization, and sample complexity for any decision rule/algorithm. {\em Uniform convergence (UC)} analysis aims to provide bounds on the minimum number of samples needed so that $|F_S(\x) - F_D(\x)| \leq \epsilon$ for all $\x \in X$, for a given tolerance $\epsilon > 0$ with probability $1 - \delta$ for some confidence parameter $\delta > 0$, where \begin{equation}\label{eq:ERM-SAA}
F_S(\x):=\frac{1}{m}\sum_{i=1}^m f(\x;z_i)
,\end{equation} is the so-called {\em empirical risk minimization (ERM)} or {\em sample average approximation (SAA)} objective based on the i.i.d. samples $z_1, \ldots, z_m \sim D$. If uniform convergence holds, then minimizing the ERM objective~\eqref{eq:ERM-SAA} to within $\epsilon/2$ error will guarantee a solution to~\eqref{eq:SCO} to within $\epsilon$ error with probability $1-\delta$. While uniform convergence is thus sufficient for ERM to work, it is not necessary in general; for example, one could use a smaller number of samples to approximate the objective $F_D$ in a small neighborhood around the minimizer (so that ERM can succeed), as opposed to over all of the domain $X$. This was first rigorously established in~\cite{shalev2009stochastic} for stochastic convex optimization and sharpened in~\cite{feldman2016generalization} and~\cite{carmon2024sample} (see the discussion in Appendix~\ref{sec:SCO-summary}). Thus, an important question within stochastic optimization is to decide whether ERM has strictly smaller sample complexity compared to UC. Finally, minimizing the ERM objective~\eqref{eq:ERM-SAA} is just one way to use the data $z_1, \ldots, z_m$. In general, one could consider any function $A_m(z_1, \ldots, z_m)$ that returns a solution $\hat x \in X$ for the problem~\eqref{eq:SCO}. Potentially, one could get away with an even smaller number of samples compared to ERM because one is allowed to do other kinds of computations on the data. In fact, this can happen for stochastic convex optimization as a result of lower bounds on ERM in~\cite{feldman2016generalization} and results on stochastic gradient descent and online optimization as discussed in~\cite{shalev2009stochastic} (see Appendix~\ref{sec:SCO-summary}). In Table~\ref{table:results}, we put the sample complexity for a general algorithm/solution estimator under the column ``Any data-driven algorithm''. As the above discussion shows, the sample complexity of UC is always at least as large as the sample complexity of ERM which is at least as large as the sample complexity when allowing for any solution estimator.

\paragraph{Characterization of Lipschitz stochastic optimization over $\ell_\infty$.} We first consider stochastic optimization where each loss function  $\x \mapsto f(\x\,;z)$ is 1-Lipschitz with respect to $\ell_\infty$ over the box of radius $R$, namely $B^{(\infty)}_R := \{\x \in \R^d: \|\x\|_\infty \leq R\}$. Here we give a \emph{complete characterization of the sample complexity} across the different modes of convergence: uniform convergence, ERM, and any data-driven algorithm.%
We show that the sample complexity is the same in all these settings, and equal to $\Theta\big(\tfrac{R^2}{\epsilon^2}(d+\log(\tfrac{1}{\delta}))\big)$. This is summarized in the first row of Table~\ref{table:results}, and formally stated in Section~\ref{sec:linfty-ball}. An important message of this set of results is that the most complicated mixed-integer nonlinear, nonconvex problems have the same sample complexity as simple stochastic linear programming problems with just bound constraints.

To establish this, we prove an upper bound on uniform convergence for Lipschitz functions (possibly nonconvex) over the whole of $B^{(\infty)}_R$, which implies the same upper bound for all the other settings; note it includes the particularly interesting case of \emph{mixed-integer convex} optimization, where $X$ is the set of mixed-integer points in some convex subset of $B^{(\infty)}_R$, i.e., $X = C \cap (\Z^{n} \times \R^{d-n})$ for some convex set $C$ and $n \leq d$. The characterization is then obtained by a matching lower bound for any learning algorithm for linear loss functions over the whole box $X = B^{(\infty)}_R$, i.e., it is achieved by  stochastic linear programming with simple variable bound constraints; adding integrality does not change anything, since the optimal solutions are attained at the vertices of the box, and so the lower bound holds even for $X = B^{(\infty)}_R \cap \Z^d$.

\paragraph{Equivalences and separations under $\ell_2$-norm.}
Next, we show how the geometry of the bounding region (scaled box, in the previous part) can affect the sample complexity of the different problems. More concretely, we now consider loss functions $\x \mapsto f(\x\,;z)$ is 1-Lipschitz with respect to $\ell_2$ over the \emph{Euclidean ball} of radius $R$, namely $B^{(2)}_R := \{\x \in \R^d: \|\x\|_2 \leq R\}$. As discussed above, when the loss functions are \emph{convex} and the feasible region is the continuous set $X = B^{(2)}_R$, existing results imply a separation between the sample complexity of uniform convergence, ERM, and a general algorithm~\cite{shalev2009stochastic,feldman2016generalization,carmon2024sample}; see Table~\ref{table:SCO} in Appendix~\ref{sec:SCO-summary} for the precise bounds.

This leads the question of what happens when one allows nonconvexity in the objective and integrality constraints: we saw above that they do not make any difference over the $\ell_\infty$ ball. First, we show that if we do not have integrality constraints, but allow nonconvex Lipschitz objectives over the $\ell_2$ ball, the sample complexity is the same as the $\ell_\infty$ case, namely  $\Theta\!\big(\tfrac{R^2}{\epsilon^2}(d+\log(\tfrac{1}{\delta}))\big)$ for UC, ERM, and general algorithms. This is summarized in the second row of Table~\ref{table:results} and stated in \Cref{thm:l2-continuous-nonconvex-upper-lower}. %
In particular, the separation between anchored UC, ERM and general proper learning algorithms for the convex case does not translate to the nonconvex setting, and there is no gain to be had using anything other than ERM. This is significant because many important problems in modern machine learning and data science are nonconvex stochastic optimization problems, e.g., neural network based approaches.

Allowing integrality constraints on top of nonconvexity in the $\ell_2$ ball makes the story even more intriguing. Consider minimizing over the integer points in the $\ell_2$ ball, i.e., $X = B_R^{(2)} \cap \Z^d$. We show that the sample complexity is now characterized by a term $H_2(d,R)$ arising out of the geometry of the integer points in the $\ell_2$ ball, where $H_2(d,R)=0$ for $0<R<1$ and $H_2(d,R)=\min\{d,\lfloor R^2\rfloor\}\log(ed/\min\{d,\lfloor R^2\rfloor\})$ for $R\ge1$. More precisely, the sample complexity is now $\Theta\big(\tfrac{R^2}{\epsilon^2}(H_2(d,R)+\log(\frac{1}{\delta}))\big)$, for all modes of convergence: UC, ERM, any algorithm. This is summarized in the third row of Table~\ref{table:results}. Comparing with the bound for the continuous feasible set $X = B^{(2)}_R$ we see that this is actually {\em strictly smaller than the continuous bound} when $R^2 < d$. That is, perhaps surprisingly, this stochastic integer-constrained problem can be statistically easier than the corresponding continuous version.
Intuitively, this happens because the convex hull of the integer points in the $\ell_2$ ball has a very different geometry compared to the ball when the radius is small (less than $\sqrt{d}$); which does not happen in the $\ell_\infty$ case. %

\old{

\paragraph{Lipschitz objective functions.} We first consider the problem where $f(\x\,;z)$ is 1-Lipschitz in $\x$ over a bounded domain $B\subseteq \R^d$ for every $z \in \mathcal{Z}$, and $X \subseteq B$. We allow $f(\x\,;z)$ to be nonconvex for some $z \in \mathcal{Z}$.

When $B$ is the $\ell_\infty$ ball of radius $R$, i.e., $B = B^{(\infty)}_R := \{\x \in \R^d: \|\x\|_\infty \leq R\}$ and $f(\x\,;z)$ is 1-Lipschitz in $\x$ with respect to the $\ell_\infty$ norm, we show that the uniform convergence sample complexity is upper bounded by $O\!\left(\tfrac{R^2}{\epsilon^2}(d+\log(\tfrac{1}{\delta}))\right)$, over all of $B^{(\infty)}_R$ and therefore for any subset $X \subseteq B^{(\infty)}_R$. This follows from fairly standard tools for bounding uniform convergence rates using covering number and chaining; for completeness, we include a proof in the appendix. We establish a matching lower bound of $\Omega\!\left(\tfrac{R^2}{\epsilon^2}(d+\log(\tfrac{1}{\delta}))\right)$ on the number of samples needed by any algorithm or solution estimator to solve~\eqref{eq:SCO} in this setting. This is summarized in the first row of Table~\ref{table:results} and the results are formally stated in Section~\ref{sec:linfty-ball}. As noted above, this settles the sample complexity of UC, ERM and any algorithm for Lipschitz functions over an arbitrary subset $X$ of the $\ell_\infty$ ball. Note, in particular, this includes the case where $X$ is the set of mixed-integer points in some convex subset of $B^{(\infty)}_R$, i.e., $X = C \cap (\Z^{n} \times \R^{d-n})$ for some convex set $C$ and $n \leq d$. Moreover, the lower bound is attained even in the setting when $X = B^{(\infty)}_R$, i.e., there are no further constraints beyond bound constraints on the variables, and the distribution $D$ is supported on $z \in \mathcal{Z}$ such that $f(\x, \z)$ is linear in $\x$. In other words, the lower bound comes from the sample complexity of stochastic linear programming with simple bound constraints. Adding integer constraints does not change anything, i.e., this lower bound continues to hold if $X = B^{(\infty)}_R \cap \Z^d$,  simply because minimizing linear functions over $X$ is the same as minimizing over the convex hull which is $B^{(\infty)}_R$. This shows, amongst other things, that the sample complexity of solving~\eqref{eq:SCO} with integrality constraints on some subset of $B^{(\infty)}_R$ is the same as that of linear programming with bound constraints. Therefore, within the $\ell_\infty$ ball, the sample complexity of nonconvex and convex stochastic optimization with Lipschitz objectives, with or without integer constrained decision variables is exactly the same as the simplest stochastic linear optimization problem! Note that for very simple feasible regions $X\subseteq B^{(\infty)}_R$, the sample complexity could be smaller; for example, if $X$ is a singleton, then we need zero samples. The point here is that even something as simple as (integer) linear programming with just bound constraints has the same sample complexity as the most complicated mixed-integer nonlinear, nonconvex problem.

This should be contrasted with the situation in continuous convex optimization over the $\ell_2$ ball $B^{(2)}_R:= \{\x\in\R^d:\ \|\x\|_2\le R\}$ when $f(\x\,;z)$ is 1-Lipschitz in $\x$ with respect to the $\ell_2$ norm. Here, the sample complexity of UC is $\Theta\!\left(\tfrac{R^2}{\epsilon^2}(d+\log(\tfrac{1}{\delta}))\right)$; the upper bound again comes from a similar covering numbers argument as the $\ell_\infty$ case, and the lower bound comes about by combining~\cite[Theorem 3.9]{feldman2016generalization} and the lower bound stated above for $B^{(\infty)}_R$ with $d=1$ (so $B^{(2)}_R = B^{(\infty)}_R$). The sample complexity of ERM is $\Theta\!\left(\tfrac{Rd}{\epsilon}+\tfrac{1}{\epsilon^2}\log(\tfrac{1}{\delta}))\right)$; the upper bound follows from Theorems 1 and 2 in~\cite{carmon2024sample} and the lower bound is from~\cite[Theorem 3.10]{feldman2016generalization}. Finally, one can use stochastic approximation (SA) methods or stochastic gradient descent (SGD) to obtain an $\epsilon$-approximate solution to~\eqref{eq:SCO} with probability $1-\delta$ with at most $O\left(\tfrac{1}{\epsilon^2}\log(\tfrac{1}{\delta})\right)$ samples; and no algorithm can do better because of the lower bound stated above for $B^{(\infty)}_R$ with $d=1$. Thus, over the $\ell_2$ ball, the sample complexity of uniform convergence for {\em convex}, Lipschitz functions is strictly larger than for ERM, which itself can be strictly improved by using SGD. 

This leads one to ask what happens in the $\ell_2$ ball if we allow nonconvexity in the objective and integrality constraints: we saw above that they do not make any difference over the $\ell_\infty$ ball. First, we show that if we do not have integrality constraints, but allow nonconvex Lipschitz objectives over the $\ell_2$ ball, the sample complexity is the same as the $\ell_\infty$ case. This is summarized in the second row of Table~\ref{table:results} and stated in \Cref{thm:l2-continuous-nonconvex-upper-lower}. %
In particular, the difference between UC, ERM and SCO for the convex case does not translate to the nonconvex setting, and there is no gain to be had using anything other than ERM, and in fact, one might as well get enough samples to approximate the true objective over the entire domain. This is significant because many important problems in modern machine learning and data science are nonconvex stochastic optimization problems, e.g., neural network based approaches.

Allowing integrality constraints on top of nonconvexity in the $\ell_2$ ball makes the story even more intriguing. Consider minimizing over the integer points in the $\ell_2$ ball, i.e., $X = B_R^{(2)} \cap \Z^d$ in~\eqref{eq:SCO}. We show that if $R^2 \geq d$, then the sample complexity for UC, ERM and any algorithm are all the same and equal to the continuous optimization sample complexity, i.e., $\Theta\!\left(\tfrac{R^2}{\epsilon^2}(d+\log(\tfrac{1}{\delta}))\right)$. On the other hand, if $R^2 < d$ then the sample complexity for UC, ERM and any algorithm are again all the same, but {\em strictly smaller than the continuous bound}. This is summarized in the third row of Table~\ref{table:results}, where the $H_2(d,R)$ term is strictly smaller than $d$ when $\lfloor R^2 \rfloor < d$, and is equal to $d$ otherwise.}

\paragraph{Strongly convex, smooth objectives.} We also investigate stochastic optimization of smooth and strongly convex functions. This is an important setting where, in continuous convex optimization, one can obtain sample-complexity  bounds with faster rates as a function of the error $\epsilon$~\cite[Chapter 8]{shapiro2021lectures}. As in prior works (see~\cite{harvey2019tight,harvey2019simple} and the references therein), we consider a compact feasible set $X$ and further assume the loss functions to  $1$-Lipschitz over $X$.

Our first main result in this setting is that stochastic integer optimization has a {\em much higher sample complexity} compared to continuous optimization. More specifically, we show that any proper learning algorithm requires at least $\Omega\left(\frac{1}{\epsilon^2}\right)$ samples for optimizing over the integer points in a box, as a function of $\epsilon$. In comparison, it is known that the sample-complexity of optimizing over a continuous box is $O\left(\frac{1}{\epsilon}\right)$~\cite{nemirovski2009robust,harvey2019tight} (see also \Cref{thm:sc-continuous-erm} in this paper). Thus, unlike in the previous $\ell_2$ case, imposing integrality constraints makes the problem harder. 

Moreover, we also consider stochastic smooth/strongly convex optimization over all integer points $\Z^d$. For that, we impose an assumption that the distribution over the loss functions satisfies a sub-Gaussian increment condition. This natural condition generalizes the previous  1-Lipschitz assumption to unbounded domains (\Cref{rem:lipschitz-implies-increment}), and we show that some distributional assumption is required for finite sample-complexity in this setting (\Cref{rem:sc-increment-necessity}). Practically relevant problems like regularized logistic regression satisfy this condition (\Cref{rem:sc-logistic-example}) and cannot be directly handled with the older assumptions from the literature. Here we prove another separation phenomenon: while uniform convergence is not possible even with this sub-Gaussian assumption~\Cref{thm:sc-anchored-uc}, 
ERM solves the unbounded integer problem with the optimal number of samples (\Cref{thm:sc-erm-upper-lower}). Even over bounded domains, ERM has strictly better sample complexity than uniform convergence in the regime where the condition number of the loss function is smaller than $\lfloor R\rfloor^2$. This is the only situation out of all those we consider in which we see a separation between uniform convergence and ERM. This is summarized in the last row of Table~\ref{table:results}.

\old{
\paragraph{Strongly convex, smooth objectives.} In continuous convex optimization, one can obtain better sample complexity bounds by imposing smoothness and strong convexity assumptions on the loss function~\cite[Chapter 8]{shapiro2021lectures}. We also investigate such loss functions with integrality constraints, with a view to compare with the corresponding results in the continuous world. %
In prior work on strongly convex objectives (see~\cite{harvey2019tight} and the references therein), the constraint set $X$ is assumed to be a compact convex set and $f(\x\,; z)$ is assumed to be $1$-Lipschitz over $X$ for all $z \in \mathcal{Z}$. This is equivalent to making an assumption on the distribution $D$ so that it is supported on $z \in \mathcal{Z}$ such that $f(\x\,; z)$ is $1$-Lipschitz over $X$. Some distributional assumption is necessary beyond just restricting to strongly convex and smooth functions, because otherwise one cannot solve~\eqref{eq:SCO} with finitely many samples\footnote{As far as we are aware, a formal proof of this necessity for a distributional assumption does not appear explicitly anywhere in the literature. In this paper, we give a short rigorous argument for this in Remark~\ref{rem:sc-increment-necessity}.}. %
We make a weaker distributional assumption to establish our results. In particular, any distribution supported on $z \in \mathcal{Z}$ such that $f(\x\,; z)$ is $1$-Lipschitz over a compact convex set $X$ will satisfy our assumptions, but our assumption is satisfied for a strictly larger class of distributions that includes practically relevant problems like regularized logistic regression that cannot be directly handled with the older assumptions from the literature; see Remark~\ref{rem:sc-logistic-example}. %
Under this assumption, we first prove a lower bound of $\Omega\left(\frac{1}{\epsilon^2}\right)$ samples for optimizing over the integer points in an $\ell_\infty$, as a function of the error $\epsilon$; in comparison, it is well-known that with continuous variables, the sample complexity is $O\left(\frac{1}{\epsilon}\right)$ -- this first appeared in~\cite{nemirovski2009robust};  see~\cite{harvey2019tight} for a discussion of follow up work. Thus, integer optimization with strongly convex, smooth objectives has a {\em much higher sample complexity} compared to continuous optimization. A second interesting phenomena that we uncover here is that ERM solves the integer problem with the optimal number of samples with a matching upper bound (Theorem~\ref{thm:sc-erm-upper-lower}), which is strictly better than what is needed for uniform convergence in the integer setting (Theorem~\ref{thm:sc-anchored-uc}). This is summarized in the last row of Table~\ref{table:results}.
}

\paragraph{Related work.} Most work on sample complexity of stochastic optimization has focused on the convex setting, i.e., $f(\cdot\,; z)$ is convex for all $z \in \mathcal{Z}$ and $X$ is a closed, convex set. As discussed above, the results in~\cite{shalev2009stochastic,feldman2016generalization,carmon2024sample} clear up the picture for Lipschitz convex functions; see Appendix~\ref{sec:SCO-summary} for a detailed discussion. \citep{agarwal2009information} provides information-theoretic lower bounds for stochastic convex optimization and identifies the roles of dimension, norm geometry, strong convexity, and sparsity. Work in the strongly convex setting, with and without smoothness assumptions, is summarized in~\cite{harvey2019tight,harvey2019simple}. See also~\cite[Chapter 4]{lan2020first} for a textbook exposition of stochastic convex optimization.

In the nonconvex setting, research effort has concentrated on methods to find stationary points or local minima. State-of-the-art results are summarized in~\cite[Chapter 6]{lan2020first}. Our focus in this paper is different from most work in the nonconvex stochastic optimization literature, and closer to work cited above in the convex setting, since we focus on global minimizers. Moreover, we consider the setting where the samples reveal complete information about $f(\cdot\,; z_i)$; in contrast, most work in the nonconvex regime assumes only pointwise (subgradient and function value) access to the stochastic loss function.

There is also an enormous literature on empirical optimization in stochastic programming and stochastic discrete optimization, often through asymptotic, gap-dependent, or model-specific guarantees. The textbook~\cite{shapiro2021lectures} is a good summary of these developments. We do not discuss these lines of work in any further detail as they are not directly relevant to our emphasis on finite-sample bounds.  %

\section{Preliminaries}
\label{sec:setup}

In this section we define some basic notions from stochastic convex optimization and sample complexity. We assume that $f:X\times \mathcal{Z}\to \R$ from~\eqref{eq:SCO} is such that $z\mapsto f(\x;z)$ is measurable for every $\x\in X$. For any probability distribution $D$ on $\mathcal{Z}$ satisfying $\E_{z\sim D}[f(\x;z)]\in\R$, we call the function $F_D$ defined in~\eqref{eq:SCO} as the {\em population objective or loss}, and the function $F_S$ defined in~\eqref{eq:ERM-SAA} based on a sample $S=(z_1,\ldots,z_m)\sim D^m$ as the {\em empirical objective or loss}. 
We write
\[
\hat \x_S\in \argmin_{\x\in X} F_S(\x),\qquad \x_D^\star\in \argmin_{\x\in X} F_D(\x)
\]
for the selected empirical and population minimizers. We assume throughout that the minimizers of both $F_D$ and $F_S$ exist (for all samples $S$). In this paper, this is without loss of generality since $f(\cdot\,; z)$ is continuous on its domain for all $z\in \mathcal{Z}$; in \Cref{sec:linfty-ball,sec:l2-ball} this implies the existence of minimizers because of compactness of the ambient balls, and in \Cref{sec:smooth-strongly-convex} it follows from strong convexity. %

Given a family $\mathfrak{D}$ of distributions on $\mathcal{Z}$ and parameters $\epsilon>0$ and $\delta\in(0,1)$, the sample complexities for uniform convergence, ERM, and the best proper algorithm are defined, respectively, as\footnote{We use the standard measurability convention that the uniform-deviation and excess events are measurable for the selected ERM output and for the learning rules considered.}
\begin{gather*}
m_{X}^{\mathrm{UC}}(\epsilon,\delta;\mathfrak{D}) := \inf\left\{m\in \N:\ \forall D\in \mathfrak{D},\ \Prob_{S\sim D^m}\left[\sup_{\x\in X}\left|F_D(\x)-F_S(\x)\right|\le \epsilon\right]\ge 1-\delta\right\},\\[0.3em]
m_{X}^{\mathrm{ERM}}(\epsilon,\delta;\mathfrak{D}) := \inf\bigg\{m\in \N:\ \forall D\in \mathfrak{D},\ \Prob_{S\sim D^m}\Big[F_D(\hat \x_S)-F_D(\x^\star_D) \le \epsilon\Big]\ge 1-\delta\bigg\}, \\[0.3em]
m_{X}^\star(\epsilon,\delta;\mathfrak{D}) :=\inf\left\{m\in \N:
\begin{array}{l}
\exists A_m:\mathcal{Z}^m\to X \text{ such that }
\forall D\in \mathfrak{D}, \\
\Prob_{S\sim D^m}\left[F_D(A_m(S))-F_D(\x^\star_D)\le \epsilon\right]\ge 1-\delta
\end{array}
\right\}.
\end{gather*}

We use the convention that the infimum over the empty set equals $+\infty$.

Notice that optimization is not affected by shifting functions by a constant value, that is, the optimal solution of $g(\x)$ is the same as that of $g(\x) + c$. While ERM sample complexity satisfies this shift-invariance, since it works directly with the optimal population/empirical solution, the standard uniform convergence definition above does \textbf{not} satisfy this natural invariance. Indeed, it is easy to see that if the family $\mathfrak{D}$ contains for all $p \in (0,1), M \ge 0$ the distributions that put mass $p$ on the constant function equal to $M$ everywhere and mass $1-p$ on the all-zeros function, then $m_{X}^{\mathrm{UC}}(\epsilon,\delta;\mathfrak{D}) = +\infty$.

Thus, it will be convenient to define the \emph{anchored uniform convergence (a-UC)} sample complexity where we ``quotient out'' all constant function shifts and anchor the functions $f(\cdot\,;z)$ at the origin. More precisely, define $\bar{f} : X \times \mathcal{Z}$ where $\bar{f}(\cdot\,;z) := f(\cdot\,;z) - f(\0; z)$, so that $\bar{f}(\0;z) = 0$ for all $z \in \mathcal{Z}$. We define the population and empirical objectives $\bar{F}_D$ and $\bar{F}_S$ with respect to $\bar{f}$ in the natural way, and the anchored uniform convergence sample complexity is given by 
\begin{align*}
m_{X}^{\mathrm{a-UC}}(\epsilon,\delta;\mathfrak{D}) := \inf\left\{m\in \N:\ \forall D\in \mathfrak{D},\ \Prob_{S\sim D^m}\left[\sup_{\x\in X}\left|\bar{F}_D(\x)-\bar{F}_S(\x)\right|\le \epsilon\right]\ge 1-\delta\right\},
\end{align*}

We note the standard domination relationship relationship between these notions: (anchored) uniform convergence is stronger than ERM, and ERM itself is a particular algorithm for processing the data sample and allowing other data-driven algorithms can reduce the sample complexity.

\begin{obs}\label{obs:sampleComplexity} For every $\epsilon>0$, every $\delta\in(0,1)$, and every family $\mathfrak{D}$, we have
\[
m_{X}^\star(\epsilon,\delta;\mathfrak{D})
\le m_{X}^{\mathrm{ERM}}(\epsilon,\delta;\mathfrak{D})
\le m_{X}^{\mathrm{a-UC}}(\epsilon/2,\delta;\mathfrak{D})
\]
\end{obs}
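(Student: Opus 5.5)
The two inequalities are proved separately, each by exhibiting an admissible witness in the infimum on the left-hand side.

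\emph{First inequality.} If $m_{X}^{\mathrm{ERM}}(\epsilon,\delta;\mathfrak{D})=+\infty$ there is nothing to show. Otherwise, let $m$ be any integer in the set defining $m_{X}^{\mathrm{ERM}}$. Define the rule $A_m(S):=\hat\x_S$, the selected empirical minimizer; it exists for every $S$ by the standing assumption. This $A_m$ is a map $\mathcal{Z}^m\to X$. By the choice of $m$, for every $D\in\mathfrak{D}$ the excess-risk event has probability at least $1-\delta$. So $m$ lies in the set defining $m_X^\star$, and taking the infimum over such $m$ gives the first inequality.

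\emph{Second inequality.} Again we may assume the right-hand side is finite. Let $m$ be in the set defining $m_{X}^{\mathrm{a-UC}}(\epsilon/2,\delta;\mathfrak{D})$, and fix $D\in\mathfrak{D}$. The key step is shift invariance. Write $c_S:=\frac1m\sum_i f(\0;z_i)$ and $c_D:=\E_{z\sim D}[f(\0;z)]$. Then
\[
F_S(\x)=\bar F_S(\x)+c_S,\qquad F_D(\x)=\bar F_D(\x)+c_D \quad\text{for all }\x\in X.
\]
Here I should note that $f(\0;\cdot)$ needs to be integrable, so that $c_D$ is finite and $\bar F_D$ is well defined. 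This holds in the paper's settings, since $\0\in$ the ambient domain and $F_D$ is assumed real-valued; if $\0\notin X$, one anchors on the ambient ball where $f$ is defined. Consequently $\hat\x_S$ also minimizes $\bar F_S$, and $\x_D^\star$ also minimizes $\bar F_D$. Moreover, excess risks coincide: $F_D(\hat\x_S)-F_D(\x^\star_D)=\bar F_D(\hat\x_S)-\bar F_D(\x^\star_D)$.

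Now work on the event $\sup_{\x\in X}|\bar F_D(\x)-\bar F_S(\x)|\le\epsilon/2$, which has probability at least $1-\delta$. On it we have the standard three-term chain
\[
\bar F_D(\hat\x_S)\le \bar F_S(\hat\x_S)+\tfrac{\epsilon}{2}\le \bar F_S(\x^\star_D)+\tfrac{\epsilon}{2}\le \bar F_D(\x^\star_D)+\epsilon .
\]
The middle inequality uses that $\hat\x_S$ minimizes $\bar F_S$. Hence the ERM excess risk is at most $\epsilon$ with probability at least $1-\delta$, for every $D\in\mathfrak{D}$. So $m$ lies in the set defining $m_X^{\mathrm{ERM}}(\epsilon,\delta;\mathfrak{D})$, and taking the infimum over such $m$ gives the second inequality.

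There is no real obstacle here. The only point needing care is the anchoring step: the a-UC event controls $\bar F$ rather than $F$, so one must check that the empirical and population shifts are constants in $\x$. That makes argmins and excess risks invariant, although the shifts $c_S$ and $c_D$ themselves differ.
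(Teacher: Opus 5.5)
Your proposal is correct and follows the same reasoning the paper gives (only informally, in the sentence preceding the observation). ERM is one admissible learning rule, which gives the first inequality, and the $\epsilon/2$ anchored deviation bound yields $\epsilon$-optimality of $\hat\x_S$ via the standard three-term chain; your check that anchoring only subtracts $\x$-independent constants $c_S$ and $c_D$, so argmins and excess risks are unchanged, is the detail the paper leaves implicit.
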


\section{Lipschitz Stochastic Optimization over a Subset of a Box}
\label{sec:linfty-ball}

In this section we show that when the domain is the $\ell_\infty$-ball, there is no separation between convex vs nonconvex and continuous vs discrete Lipschitz  stochastic optimization.

More precisely, we focus on the domain being the $\ell_\infty$-ball $B_R^{(\infty)}$ of radius $R$.  
Throughout this section, we equip $\R^d$ with $\ell_\infty$ norm, and consider a loss map  $f:B_R^{(\infty)}\times\mathcal{Z}\to\R$ such that each function $\x \mapsto f(\x;z)$ is $1$-Lipschitz (not necessarily convex)  on $B_R^{(\infty)}$ with respect to $\ell_\infty$, for every $z\in\mathcal{Z}$. 

\begin{theorem}\label{thm:upper-bound}
Suppose $f:B_R^{(\infty)}\times\mathcal{Z}\to\R$ is such that $f(\cdot\,;z)$ is $1$-Lipschitz with respect to $\ell_\infty$ for every $z\in\mathcal{Z}$. Then there exists an absolute constant $C>0$ such that for every family $\mathfrak{D}$ of distributions, every $\epsilon>0$, and every $\delta\in(0,1)$,
\[
m_{B_R^{(\infty)}}^{\mathrm{a-UC}}(\epsilon,\delta;\mathfrak{D})\le \left\lceil C\frac{R^2}{\epsilon^2}\left(d+\log\frac{1}{\delta}\right)\right\rceil.
\]
Moreover, there exist absolute constants $c,c_0>0$ such that, for every $d,R\ge1$, there is a family $\mathfrak{D}^{\mathrm{lin}}_{R}$ of distributions over $1$-Lipschitz linear functions such that, for every $0<\epsilon\le c_0R$ and every $0<\delta\le1/4$,
\[
m_{B_R^{(\infty)}}^\star(\epsilon,\delta;\mathfrak{D}^{\mathrm{lin}}_{R})
\ge c\frac{R^2}{\epsilon^2}\left(d+\log\frac{1}{\delta}\right).
\]
\end{theorem}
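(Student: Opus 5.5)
For the upper bound I would use a chaining argument on the anchored empirical process. For each $\x\in B_R^{(\infty)}$ let $Z_\x := \bar F_S(\x)-\bar F_D(\x) = \frac1m\sum_{i=1}^m\big(\bar f(\x;z_i)-\E\bar f(\x;z)\big)$. Since each $\bar f(\cdot\,;z)$ is $1$-Lipschitz in $\ell_\infty$ and vanishes at $\0$, for every pair $\x,\y$ the summands $\bar f(\x;z_i)-\bar f(\y;z_i)$ are bounded in absolute value by $\|\x-\y\|_\infty$. Hoeffding's inequality then shows that the increments $Z_\x-Z_\y$ are sub-Gaussian with parameter $\|\x-\y\|_\infty/\sqrt m$. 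Note that no moment assumption on $D$ is needed, because anchoring removes the constants and $\bar f(\x;z)$ is bounded by $\|\x\|_\infty\le R$.

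The process therefore has sub-Gaussian increments with respect to the metric $\|\cdot\|_\infty/\sqrt m$, and $Z_\0=0$. The covering numbers satisfy $N(B_R^{(\infty)},\|\cdot\|_\infty,t)\le(1+2R/t)^d$. I would apply the tail form of Dudley's chaining bound, e.g.\ the generic-chaining tail bound $\sup_\x|Z_\x|\lesssim \frac{1}{\sqrt m}\big(\int_0^{R}\sqrt{\log N(t)}\,dt + R\sqrt{\log(1/\delta)}\big)$ with probability at least $1-\delta$. The entropy integral is $\int_0^R\sqrt{d\log(1+2R/t)}\,dt = O(R\sqrt d)$, so $\sup_\x|Z_\x|\le C'R(\sqrt d+\sqrt{\log(1/\delta)})/\sqrt m$. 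Setting this to $\epsilon$ gives the claimed $m$. Alternatively, one can avoid citing the tail version by doing explicit dyadic chaining: at level $k$, take nets $N_k$ at scale $R2^{-k}$, apply Hoeffding with a union bound over at most $|N_k||N_{k-1}|$ links, and use deviation $R2^{-k}\sqrt{(d k+\log(2^k/\delta))/m}$. The sum over $k$ converges to the same bound. The passage to the continuum follows by Lipschitz continuity.

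For the lower bound I would take linear losses $f(\x;z)=\langle z,\x\rangle$ with $z\in\{\pm e_j/1\}$-type vectors so that each loss is $1$-Lipschitz in $\ell_\infty$, i.e.\ $\|z\|_1\le1$. The family $\mathfrak D^{\mathrm{lin}}_R$ is indexed by $\sigma\in\{\pm1\}^d$. Under $D_\sigma$, draw $j$ uniformly from $[d]$ and a sign $s$ with $\Prob[s=\sigma_j]=\tfrac12+\gamma$, and set $z=-s e_j$. Then $F_{D_\sigma}(\x)=-\frac{2\gamma}{d}\langle\sigma,\x\rangle$, which is minimized at $R\sigma$. Any output $\hat\x$ has excess $\frac{2\gamma}{d}\sum_j(R-\sigma_j\hat x_j)\ge\frac{2\gamma R}{d}\cdot\#\{j:\mathrm{sign}(\hat x_j)\ne\sigma_j\}$. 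Choose $\gamma\asymp\epsilon/R$, so that excess at most $\epsilon$ forces at most a small constant fraction $d/8$ of the coordinates to be wrong.

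For the $d$ term I would use Assouad's lemma or a Fano argument over a Gilbert--Varshamov packing. Each coordinate receives about $m/d$ samples, each carrying KL divergence $O(\gamma^2)$ about $\sigma_j$. Recovering $\ge 7/8$ of the signs with probability $3/4$ therefore requires $(m/d)\gamma^2\gtrsim1$, i.e.\ $m\gtrsim dR^2/\epsilon^2$. The cleanest route is Fano: the packing has $|\mathcal P|\ge e^{cd}$ at Hamming distance $\ge d/4$, and the KL divergence between any two members is $\le O(m\gamma^2)$. The condition $\epsilon\le c_0R$ ensures $\gamma\le1/4$, so the KL bounds hold.

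For the $\log(1/\delta)$ term, a two-point argument with $d=1$ embedded suffices, taking $z=-s e_1$ with bias $\pm\gamma$. Distinguishing the two biases with error $\le\delta$ requires $m\gtrsim\log(1/\delta)/\gamma^2$, by the Bretagnolle--Huber inequality $\max$ error $\ge\frac14 e^{-\mathrm{KL}}$ and $\mathrm{KL}=O(m\gamma^2)$. Taking the union of the two families and the maximum of the two bounds gives $c\frac{R^2}{\epsilon^2}(d+\log\frac1\delta)$.

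I expect the main obstacle to be the $d$-dependence in the lower bound, specifically converting excess risk into Hamming error for non-vertex outputs $\hat\x$. The sign-rounding inequality above handles this, but the constants need care so that the threshold $d/8$ sits below the packing distance.
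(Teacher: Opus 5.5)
Your proposal is correct. Your lower-bound instances are exactly the paper's: $f(\x;(j,k))=-kx_j$ with $j$ uniform and a sign of bias $\rho b_j$, plus a one-coordinate two-point family. Both halves of the argument, however, are carried out differently.

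\emph{Upper bound.} The paper does not pass through sub-Gaussian increments. It rescales the anchored losses to $g_\x=(\bar f(\x;\cdot)+R)/(2R)\in[0,1]$ and bounds the $L_2(D)$ bracketing number by the $\ell_\infty$ covering number $(3/\alpha)^d$ of the box. It then applies the van der Vaart--Wellner tail bound for bounded classes with polynomial bracketing; this is \Cref{lem:anchored-lipschitz-uc-vdw}, reused verbatim for the $\ell_2$ ball and the integer ball. Your Hoeffding-plus-Dudley route is the machinery the paper brings in only for the strongly convex section (\Cref{lem:subgaussian-chaining,lem:subgaussian-euclidean-ball}). Your route is more elementary, since your explicit dyadic chaining needs nothing beyond Hoeffding and union bounds, and it uses only sub-Gaussianity of increments. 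The paper's version is a one-line black-box citation.

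\emph{Dimension term.} The paper avoids packings and sign rounding altogether. It works with the continuous correlation $\frac1d\langle B,Y\rangle$, where $Y=A_m(S)/R$. It conditions on everything except $b_j$ and the signs of the $N_j$ samples landing on coordinate $j$, shows $\E[B_jY_j\mid T_j]\le\rho\sqrt{2N_j}$ via Pinsker, and finishes with Jensen and $\E N_j=m/d$. This is Assouad in correlation form. It handles non-vertex outputs automatically and has no small-$d$ exception. Your Fano/Gilbert--Varshamov route also works, but it needs two extra steps. The regime where $\log|\mathcal P|$ is comparable to $\log 2$ must be absorbed by the confidence term. You also need the tie-breaking adjustment you mention, e.g.\ $\gamma=8\epsilon/R$ so that the error threshold $d/16$ is strictly below half the packing distance.

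Two small points to fix:
\begin{enumerate}
\item As written, your family depends on $\epsilon$ through $\gamma$, but the theorem asks for a single $\mathfrak{D}^{\mathrm{lin}}_R$ valid for all $0<\epsilon\le c_0R$. Index the family by all biases $\gamma\in(0,1/4]$, as the paper does with all $\rho\in(0,1/2]$. This is harmless, since enlarging the family can only increase $m^\star$.
\item Bretagnolle--Huber gives only $m\,\mathrm{KL}(P_+\|P_-)\ge\log\frac{1}{4\delta}$, which degenerates as $\delta\to1/4$. So for $\delta\in(1/16,1/4]$ you must fall back on the $d$ term. The paper's data-processing bound $\mathrm{KL}\ge\frac12\log\frac{1}{2\delta}\ge\frac14\log\frac1\delta$ (\Cref{lem:testing-inequalities}) avoids this case split.
\end{enumerate}
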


The formal proof of Theorem~\ref{thm:upper-bound} is given in \Cref{app:linfty-proofs}, but we sketch the main ideas in the proof here.

The upper bound follows from a bracketing and chaining argument for the anchored loss class. After anchoring at the origin, Lipschitzness gives $|f(\x;z) - f(\0;z)|\le R$ on $B_R^{(\infty)}$, and the $\ell_\infty$ covering entropy of the box is of order $d$. This gives anchored UC with the stated sample size, and hence the same upper bound for every feasible subset of the box.

The lower bound already holds for stochastic linear optimization with only bound constraints. The hard instances are indexed by a hidden sign vector $\b\in\{\pm1\}^d$, with population objective $F_\b(\x)=-(\rho/d)\langle \b,\x\rangle$. Thus the unknown distribution determines which vertex $R\b$ of the box is exposed by the linear objective. An $\epsilon$-optimal output must have enough average alignment with this exposed vertex. Each sample is a coordinate-wise biased coin experiment: it first chooses a coordinate $j\in[d]$ uniformly at random, and then returns a sign with mean $\rho b_j$. Therefore, unless $m$ is of order $d/\rho^2$, no proper algorithm can learn the objective direction well enough to output a near-optimal point. Taking $\rho$ of order $\epsilon/R$ gives the dimension term, and the one-coordinate coin flip gives the $\log(1/\delta)$ confidence term. Unlike the ERM lower bounds in~\cite{feldman2016generalization}, this construction does not rely on a bad empirical minimizer. The ambiguity is in the population linear objective itself, and hence the lower bound applies to every proper algorithm. 

We remark that since the lower bound uses only linear functions, it actually also holds when $X = B_R^{(\infty)}\cap\Z^d=\{-\lfloor R\rfloor,\ldots,\lfloor R\rfloor\}^d$, with $R$ replaced by $\lfloor R\rfloor$. Together with the upper bound from \Cref{thm:upper-bound}, this shows that continuous and integral feasible regions, as well as convex and nonconvex losses, have the same sample complexity in this $\ell_\infty$ setting.

\section{Lipschitz Stochastic Optimization over a Subset of a Euclidean Ball}
\label{sec:l2-ball}

In the previous section we observe that when the domain is the $\ell_\infty$-ball, different variants of Lipschitz  stochastic optimization collapsed to the easiest setting: linear optimization with variable constraints. We now show that when the domain is the $\ell_2$-ball, new phenomena occur and there is a strict separation between convex vs nonconvex functions and continuous vs integral feasible sets. 

More precisely, we now consider the domain being the Euclidean ball $B_R^{(2)}$ of radius $R$,
equip $\R^d$ with the Euclidean norm, and consider a loss map  $f:B_R^{(2)}\times\mathcal{Z}\to\R$ such that each function $\x \mapsto f(\x;z)$ is $1$-Lipschitz (not necessarily convex) on $B_R^{(2)}$ with respect to $\ell_2$, for every $z\in\mathcal{Z}$. 

We start by obtaining tight sample complexity bounds for the continuous feasible set, nonconvex function setting. The proofs for this section are given in \Cref{app:l2-proofs}.

\begin{theorem}\label{thm:l2-continuous-nonconvex-upper-lower}
Suppose $f:B_R^{(2)}\times\mathcal{Z}\to\R$ is such that $f(\cdot\,;z)$ is $1$-Lipschitz with respect to $\ell_2$ for every $z\in\mathcal{Z}$. Then there exists an absolute constant $C>0$ such that for every family $\mathfrak{D}$ of distributions, every $\epsilon>0$, and every $\delta\in(0,1)$,
\[
m_{B_R^{(2)}}^{\mathrm{a-UC}}(\epsilon,\delta;\mathfrak{D})
\le \left\lceil C\frac{R^2}{\epsilon^2}\left(d+\log\frac{1}{\delta}\right)\right\rceil.
\]
Moreover, there exist absolute constants $c, c_0>0$ such that, for every $d\ge1$ and $R>0$, there is a family $\mathfrak{D}$ of probability distributions over functions that are $1$-Lipschitz with respect to $\ell_2$ such that for every $0<\epsilon\le c_0R$ and every $0<\delta\le1/4$,
\[
m_{B_R^{(2)}}^\star(\epsilon,\delta;\mathfrak{D})\ge c\frac{R^2}{\epsilon^2}\left(d+\log\frac{1}{\delta}\right).
\]
\end{theorem}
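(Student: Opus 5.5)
The upper bound will follow by reducing to \Cref{thm:upper-bound}, or by rerunning its chaining argument with Euclidean covering numbers. Anchoring gives $|\bar f(\x;z)|\le\|\x\|_2\le R$ on $B_R^{(2)}$, and increments satisfy $|\bar f(\x;z)-\bar f(\y;z)|\le\|\x-\y\|_2$. The centered process $\x\mapsto \bar F_S(\x)-\bar F_D(\x)$ is then an average of $m$ i.i.d. bounded processes, hence has sub-Gaussian increments with parameter $\|\x-\y\|_2/\sqrt m$. The covering numbers satisfy $\log N(B_R^{(2)},\|\cdot\|_2,t)\le d\log(3R/t)$. Dudley's entropy integral gives
\[
\E\sup_{\x}\bigl|\bar F_S(\x)-\bar F_D(\x)\bigr|\lesssim \frac1{\sqrt m}\int_0^{R}\sqrt{d\log(3R/t)}\,dt\lesssim R\sqrt{d/m}.
\]
McDiarmid's inequality then applies, since replacing one sample changes the supremum by at most $2R/m$, and adds $R\sqrt{\log(1/\delta)/m}$. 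Setting $m=C R^2(d+\log(1/\delta))/\epsilon^2$ concludes. The bound only uses a doubling-type entropy bound of order $d\log(R/t)$, so the proof of \Cref{thm:upper-bound} transfers verbatim and I would simply cite it with the Euclidean covering estimate substituted.

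For the lower bound, the $\log(1/\delta)$ term comes from the $d=1$ case. There $B_R^{(2)}=B_R^{(\infty)}$, so the second part of \Cref{thm:upper-bound} with $d=1$ gives $\Omega(R^2\log(1/\delta)/\epsilon^2)$ directly. That statement assumes $R\ge1$, but rescaling $\x\mapsto \x/R$ together with function values scaled by $R$ handles any $R>0$. The two-point construction can also be embedded in the first coordinate for general $d$.

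The dimension term is the main obstacle. The $\ell_\infty$ construction fails here: the linear objective $-(\rho/d)\langle\b,\x\rangle$ over the Euclidean ball has gap only $\rho R/\sqrt d$, because the Euclidean geometry lets a single vector partially align with all sign patterns. Linear losses alone cannot work at all, since for linear (convex) losses SGD needs only $O(R^2/\epsilon^2)$ samples independent of $d$. Nonconvexity must therefore be used. I would use a packing: fix a $1/2$-separated set $\{\v_1,\dots,\v_N\}$ on the sphere of radius $R$ with $N=e^{cd}$, and bump functions $\phi_k(\x)=\max\{0,\,R/4-\|\x-\v_k\|_2\}$, which are $1$-Lipschitz with disjoint supports. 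The hidden index $k^\star$ is encoded as follows: with probability $1/2$ the sample reveals $-\phi_k$ for a uniformly random $k$ and a sign biased by $\rho$ toward $k=k^\star$. The population objective is then roughly $-(\rho/N)\phi_{k^\star}$, which is too small because of the $1/N$ factor.

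Better is a Fano-style family indexed by $\b\in\{\pm1\}^{d}$ in which the objective is a sum of localized nonconvex bumps arranged so the gap is $\Theta(\rho R)$. One option is the product structure $f(\x;(j,s))=s\,g(x_j)$, with $g$ a $1$-Lipschitz, $R/\sqrt d$-scale sawtooth. That yields only $\rho R/\sqrt d$ per coordinate, which is again insufficient. I would therefore settle on Fano over the $e^{cd}$ packing. Each sample reveals a random pair $(k,\text{sign})$, and the objective is $-\rho\,\phi_{k^\star}$ up to the $1/N$ dilution. The $1/N$ dilution must be avoided by letting each sample carry a random Lipschitz function $\sum_k s_k\phi_k$ with independent signs $s_k$ biased only at $k^\star$. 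Because the $\phi_k$ have disjoint supports, this sum remains $1$-Lipschitz. Its mean is $\rho\phi_{k^\star}$, giving a gap of $\rho R/4$, so an $\epsilon$-optimal point identifies $k^\star$ once $\rho\asymp\epsilon/R$. The KL divergence between hypotheses is $O(\rho^2)$ per sample, so Fano requires $m\gtrsim\log N/\rho^2\asymp dR^2/\epsilon^2$. Verifying this KL and identification step carefully is where I expect the real work to be.
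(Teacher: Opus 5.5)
Your proof is correct and follows essentially the paper's route. The upper bound is the same covering-number chaining argument: you use Dudley plus McDiarmid where the paper uses a bracketing bound from van der Vaart--Wellner, and either works. Your final dimension-term construction is exactly \Cref{lem:tent-lower-from-packing}: disjoint radius-$R/4$ tents on an $e^{\Theta(d)}$ packing of the radius-$R$ sphere, with independent per-sample activations biased only at the hidden center, followed by Fano. The paper gets the confidence term from a two-point test inside the tent family, whereas you use the linear instance embedded in $x_1$; that works too once you take the union of the two families.

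Two small fixes are needed. First, the sample loss must be $-\sum_k s_k\phi_k$, so that the population objective is $-\rho\phi_{k^\star}$. With $+\sum_k s_k\phi_k$, every point outside the hidden tent is optimal and identification fails. Second, Fano is vacuous when $\log N=O(1)$, so you should say that the confidence term covers that bounded-$d$ case.
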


This is in contrast to the convex Lipschitz setting discussed in the Introduction, where the dimension-dependent term in the ERM sample complexity is only of order $Rd/\epsilon$ (see Table~\ref{table:SCO} in Appendix~\ref{sec:SCO-summary}). The upper bound follows from the same bracketing and chaining argument used in \Cref{thm:upper-bound}. The lower bound uses the additional freedom provided by nonconvexity. We take an exponential-size packing $\W\subseteq B_R^{(2)}$ with pairwise distances of order $R$, and place a small $1$-Lipschitz tent around each point of $\W$, with disjoint supports. A distribution then hides one center $\u\in\W$ by biasing the activation probability of the tent at $\u$. The population objective has its unique minimizer at this hidden center, and every $\epsilon$-optimal proper output must fall in the corresponding tent. Thus optimization reduces to identifying the hidden element of $\W$ from biased Bernoulli observations. Since $\log|\W|=\Theta(d)$ and the bias is of order $\epsilon/R$, Fano's inequality and a two-point testing argument give the two terms in the lower bound.

\medskip

We now consider integral feasible sets, that is, subsets of $$X_R^{(2)}:=B_R^{(2)}\cap\Z^d.$$ We show that stochastic optimization over these smaller feasible sets can have a provably smaller sample complexity. The fact that this phenomenon happens in the $\ell_2$ domain, and not on the $\ell_\infty$ domain from the previous section is due to the mismatch between the geometry of the integer lattice and that of the standard Euclidean ball. More concretely, integer points in a Euclidean ball have sparse support: at most $\lfloor R^2\rfloor$ coordinates can be nonzero. When $R^2\ll d$, this sparsity makes the combinatorial scale much smaller than that of the full integer cube.

To capture this, define $H_2(d,R)$ by $H_2(d,R)=0$ for $0<R<1$, and
\[
H_2(d,R):=\min\{d,\lfloor R^2\rfloor\}\log\left(ed/{\min\{d,\lfloor R^2\rfloor\}}\right)
\]
for $R\ge1$. When $1\le\lfloor R^2\rfloor<d$, this becomes $H_2(d,R) = \lfloor R^2\rfloor \log (ed / \lfloor R^2\rfloor)$ and corresponds to (as shown in \Cref{app:l2-proofs}) the log number of integer points in the ball $B^{(2)}_{R}$, while when $\lfloor R^2\rfloor\ge d$, we have $H_2(d,R)=d$, which corresponds to the usual $d$-dimensional Euclidean covering scale. We show that this term $H_2(d,R)$ precisely characterizes the sample complexity of optimization over subsets of the integer set $X^{(2)}_R$.

\begin{theorem}\label{thm:l2-uc-upper}
Suppose the functions $f(\cdot\,;z)$ are 1-Lipschitz with respect to $\ell_2$ for all $z \in \mathcal{Z}$. Then there exists an absolute constant $C>0$ such that for every family $\mathfrak{D}$ of distributions, every $\epsilon>0$, and every $0<\delta<1/2$,
\[
m_{X_R^{(2)}}^{\mathrm{a-UC}}(\epsilon,\delta;\mathfrak{D}) \le \left\lceil C\frac{R^2}{\epsilon^2}\left(H_2(d,R)+\log\frac{1}{\delta}\right)\right\rceil.
\]
Moreover, there exist absolute constants $c,c_0>0$ such that, for every $d\ge1$ and $R\ge1$, there is a family of distributions $\mathfrak{D}$ on 1-Lipschitz with respect to $\ell_2$ such that for every $0<\epsilon\le c_0R$ and every $0<\delta\le1/4$,
\[
m_{X_R^{(2)}}^\star(\epsilon,\delta;\mathfrak{D}) \ge c\frac{R^2}{\epsilon^2}\left(H_2(d,R)+\log\frac{1}{\delta}\right).
\]
\end{theorem}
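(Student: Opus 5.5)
The upper bound splits into two regimes. When $R<1$, the set $X_R^{(2)}=\{\0\}$ and the anchored losses vanish identically, so the bound is trivial. When $\lfloor R^2\rfloor\ge d$, we have $H_2(d,R)=d$, and the bound follows directly from the continuous upper bound in \Cref{thm:l2-continuous-nonconvex-upper-lower}, since $X_R^{(2)}\subseteq B_R^{(2)}$. The remaining case is $1\le k:=\lfloor R^2\rfloor<d$. Here every $\x\in X_R^{(2)}$ satisfies $\sum_i x_i^2\le k$, so $\x$ has at most $k$ nonzero coordinates and $\|\x\|_1\le k$. Counting supports and signed integer compositions gives
\[
|X_R^{(2)}|\le \sum_{j\le k}\binom{d}{j}2^j\binom{k}{j}\le (Ced/k)^{k},
\]
so $\log|X_R^{(2)}|\lesssim k\log(ed/k)=H_2(d,R)$. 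Anchoring and Lipschitzness give $|\bar f(\x;z)|\le\|\x\|_2\le R$ for every $\x\in X_R^{(2)}$. By Hoeffding's inequality and a union bound over the finite set $X_R^{(2)}$, the bound $\sup_{\x}|\bar F_S(\x)-\bar F_D(\x)|\le\epsilon$ holds with probability $1-\delta$ once $m\ge C R^2(\log|X_R^{(2)}|+\log(1/\delta))/\epsilon^2$. No chaining is needed in this case.

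For the lower bound, the $\log(1/\delta)$ term comes from a two-point argument in $d=1$ direction. Take the linear objectives $f(\x;z)=\pm\langle \mathbf{e}_1,\x\rangle$ with signs biased by $\rho\asymp\epsilon/R$, restricted to the points $\pm\lfloor R\rfloor\mathbf{e}_1$, which lie in $X_R^{(2)}$. A proper output that is $\epsilon$-optimal must then identify the bias sign, and the standard Bernoulli testing bound gives $\Omega(\log(1/\delta)/\rho^2)$. For the $H_2$ term I would mimic the tent construction of \Cref{thm:l2-continuous-nonconvex-upper-lower}, but place the tents at integer points. Let $k=\min\{d,\lfloor R^2\rfloor\}$. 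By Gilbert--Varshamov, choose a family $\W$ of $k$-sparse vectors in $\{0,\pm1\}^d$, each of norm $\sqrt k\le R$, with pairwise Hamming distance at least $k/4$ and $\log|\W|\ge c\,k\log(ed/k)$. When $k=d$ we may simply use $\{\pm1\}^d$, and scaling by $\lfloor R/\sqrt d\rfloor$ gives the correct $R$ dependence; I would handle that regime as in the $\ell_\infty$ proof, with coordinatewise signs.

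Distinct points of $\W$ are at $\ell_2$ distance at least $\sqrt{k}/2\gtrsim R$. A $1$-Lipschitz tent of height $h\asymp R$ and radius $\asymp\sqrt k$ around each $\u\in\W$ is therefore supported disjointly. Moreover, the only integer point inside each tent's support is its center. This is the real advantage of integrality: the tent takes value $0$ at every other feasible point.

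The distribution $D_\u$ activates each tent with probability $1/2$ independently, except that the tent at $\u$ is activated with probability $1/2+\rho$, where $\rho\asymp\epsilon/R$. The minimizer is then $\u$ with gap $\asymp \rho h/|\W|$. This $1/|\W|$ loss in the gap is the obstacle. To avoid it, I would instead use a single random sign per tent, so that $f(\x;z)=-\sum_{\w}\sigma_{\w}(z)\,\mathrm{tent}_{\w}(\x)$. This makes the gap $\rho h\asymp\epsilon$, while the KL divergence between $D_\u$ and $D_{\u'}$ is $O(\rho^2)$ per sample. Fano's inequality then gives $m\gtrsim \log|\W|/\rho^2\asymp R^2H_2/\epsilon^2$.

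The main thing to check is that this product-of-signs family keeps each loss $1$-Lipschitz. It does, because the tent supports are disjoint, so each point of the domain sees at most one nonzero tent. Any $\epsilon$-optimal proper output must then equal $\u$, since every other feasible integer point has strictly larger population value. This is what turns optimization into identification of $\u$, and Fano applies.
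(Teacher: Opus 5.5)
Your upper bound is correct and uses the paper's case split. Replacing the bracketing lemma with Hoeffding plus a union bound over the finite set $X_R^{(2)}$, using $|\bar f(\x;z)|\le R$, is a legitimate simplification.

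The genuine gap is in the lower bound when $\lfloor R^2\rfloor\ge d$, where you need $\Omega(R^2d/\epsilon^2)$. Handling that regime ``as in the $\ell_\infty$ proof, with coordinatewise signs'' loses the factor $d$. With the losses $-kx_j$, the population objective is $-(\rho/d)\langle\b,\x\rangle$. Over $X_R^{(2)}$ we have $\langle\b,\x\rangle\le\|\x\|_1\le\sqrt d\,R$, so this objective varies by only $O(\rho R/\sqrt d)$. Forcing $\epsilon$-optimal outputs to correlate with $\b$ then requires $\rho\gtrsim\epsilon\sqrt d/R$. The coin-testing bound $m\gtrsim d/\rho^2$ then degrades to $R^2/\epsilon^2$.

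Putting tents at every point of the scaled cube $q\{\pm1\}^d$ fails as well. Its separation is only $2q\approx2R/\sqrt d$, which caps the tent heights at that scale. What you need is your Gilbert--Varshamov family with $k=d$, i.e.\ a subset of $\{\pm1\}^d$ with Hamming separation at least $d/4$ and $\log|\W|\gtrsim d$, not all of $\{\pm1\}^d$. Scale it by $q=\lfloor R/\sqrt d\rfloor$. Its points then have norm $q\sqrt d\in[R/2,R]$ and pairwise distance at least $q\sqrt d/2\ge R/4$, and the same tent/Fano argument yields the $R^2d/\epsilon^2$ term. This is exactly \Cref{lem:l2-integer-packing}.

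Your identification step also rests on a false claim. For disjoint supports the tent radius can be at most $\sqrt k/4$, and once $k>16$ such a tent contains other feasible lattice points. For example, $\w$ with one nonzero coordinate set to $0$ lies in $X_R^{(2)}$ at distance $1$ from $\w$. So an $\epsilon$-optimal output need not equal $\u$.

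The repair is the paper's decoder. In your construction $F_{D_\u}\ge0$ outside the support of the tent at $\u$, while $F_{D_\u}(\u)=-\Theta(\rho h)$. Hence, once $\rho h>\epsilon$, every $\epsilon$-optimal output lies in that support. You then decode to the unique tent containing the output, as in \Cref{lem:tent-lower-from-packing}. For the same reason, the ``advantage of integrality'' you cite plays no role in the lower bound: the same tent argument proves the continuous lower bound in \Cref{thm:l2-continuous-nonconvex-upper-lower}. Integrality helps only through the counting in the upper bound.

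Finally, your worry about a $\rho h/|\W|$ gap is mistaken. With independent activations, $F_{D_\u}=-\sum_\w p_\w\,\mathrm{tent}_\w$ with $p_\u=1/2+\rho$ and $p_\w=1/2$ otherwise. So the gap to any point outside the tent at $\u$ is at least $\rho h$. That Bernoulli-activation scheme is precisely the paper's construction, and your independent-sign variant is equivalent to it.
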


Thus, over the Euclidean integer ball, anchored UC, ERM, and the best algorithm all have the same sample complexity, up to constants. The rate is governed by $H_2(d,R)$ that encodes the difference between geometry of the $\ell_2$ ball and the integer points in it, and is strictly smaller than the continuous nonconvex rate when $\lfloor R^2 \rfloor < d$. We next show that this statistical advantage of integrality disappears, and in fact reverses, under smoothness and strong convexity.

\section{Stochastic Optimization of Strongly Smooth/Convex Functions}
\label{sec:smooth-strongly-convex}

We next consider stochastic optimization of smooth, strongly convex losses, both in unbounded domains and also on a box domain. The main result is that in this setting there is a significant qualitative separation in the sample complexity in the scenarios considered: First, for unbounded integer constrained problems, anchored uniform convergence has infinite sample complexity (\Cref{thm:sc-anchored-uc}) while ERM has finite sample complexity (\Cref{thm:sc-erm-upper-lower}); this happens because ERM witnesses a \emph{localization} phenomenon, where it suffices to control the stochastic process around the optimum. Moreover, unlike in the previous section, we prove that integer constrained optimization is provably harder than continuous optimization. More precisely, while in this integer constrained setting ERM has sample complexity of order $\frac{1}{\epsilon^2}$, we show that in the continuous case it has ``accelerated'' sample complexity of order $\frac{1}{\epsilon}$ (\Cref{thm:sc-continuous-erm}). 
Actually, as we show in \Cref{rem:sc-increment-necessity}, some assumption on the function distribution is required for any learning with finitely many samples in the unbound domain case, and thus we work throughout this section under a natural sub-gaussian increment assumption discussed in the sequel.

More precisely, in this section the loss map is defined on all of $\R^d$, while the parameter $R\in[1,\infty]$ specifies the feasible box $B_R^{(\infty)}=\{\x\in\R^d:\ \|\x\|_\infty\le R\}$, with the convention that $B_\infty^{(\infty)}=\R^d$. Fix $0<\mu\le L$ and set $\kappa:=L/\mu$. Recall that a function $h:\R^d\to\R$ is \emph{$\mu$-strongly convex} and \emph{$L$-smooth} if $h-\frac{\mu}{2}\|\cdot\|_2^2$ and $\frac{L}{2}\|\cdot\|_2^2-h$ are convex. Equivalently, whenever $h$ is differentiable,
\begin{equation}\label{eq:sc-first-order}
\frac{\mu}{2}\|\x-\y\|_2^2
\le h(\y)-h(\x)-\langle\nabla h(\x),\y-\x\rangle
\le \frac{L}{2}\|\x-\y\|_2^2
\qquad \forall \x,\y\in\R^d.
\end{equation}
We consider loss maps $f:\R^d\times\mathcal{Z}\to\R$ such that $f(\cdot\,;z)$ is $\mu$-strongly convex and $L$-smooth for every $z\in\mathcal{Z}$. Since the defining convexity conditions are preserved under averaging, the population and empirical objectives $F_D$ and $F_S$ satisfy the same strong-convexity and smoothness bounds whenever they are finite.

\paragraph{Sub-Gaussian Increment Condition.} We say that a distribution over $\mathcal{Z}$ satisfies \emph{$\sigma$-sub-Gaussian increment condition over a domain $X \subseteq \R^d$} if for every $\x,\y\in  X$ and every $\lambda\in\R$,
\begin{align}
\E_{z\sim D}\exp\left(\lambda\left[f(\x;z)-f(\y;z)-F_D(\x)+F_D(\y)\right]\right)
\le \exp\left(\frac{\lambda^2\sigma^2\|\x-\y\|_2^2}{2}\right). \label{eq:subgaussInc}
\end{align}
Related assumptions on centered differences of sample losses  appear, for example, in~\citep[Theorem~5]{chu2023unified}.
We use $\mathfrak{D}^{\mathrm{sc}}_{\mu,L,\sigma}$ to denote the class of all distributions satisfying this sub-Gaussian increment condition.  
In sample complexity statements over a feasible set $X$, we interpret this class relative to the same set: \eqref{eq:subgaussInc} is imposed only for $\x,\y\in X$, and no increment control outside $X$ is required. Note that the increment condition controls differences of losses rather than absolute offsets, and so remains meaningful on an unbounded domain. %

\begin{remark}[Necessity of increment condition] \label{rem:sc-increment-necessity}
Some centered tail control is necessary in the unbounded setting. Without it, smoothness and strong convexity alone do not allow proper learning with finite sample complexity, already for one-dimensional quadratic losses $f(x;z) = \frac{\mu}{2}(x - z)^2$. To see this, fix $\epsilon>0$, $\delta\in(0,1)$, number of samples $m\in\N$, and an arbitrary proper algorithm $A_m:\R^m\to \R$; we show that with probability greater than $\delta$ this algorithm has excess loss more than $\epsilon$ for some distribution. Choose $p\in(\delta^{1/m},1)$, and construct the distribution $D$ satisfying $\Prob_{Z \sim D}[Z = 0] = p$ and $\Prob_{Z \sim D}[Z = (A_m(\0) + \lceil2\sqrt{\epsilon/\mu}\rceil)/(1-p)] = 1-p$. The population loss $F_D$ is minimized at $\E Z = A_m(\0) + \left\lceil2 \sqrt{\epsilon/\mu}\right\rceil$, but with probability $p^m > \delta$ all samples are 0, so the algorithm outputs $A_m(\0)$ and incurs an excess loss of at least $\frac{\mu}{2} (2\sqrt{\epsilon/\mu})^2 = 2 \epsilon > \epsilon$. 
\end{remark}

\begin{remark}[Lipschitzness implies the increment condition]\label{rem:lipschitz-implies-increment}
Suppose that $f(\cdot\,;z)$ is $L$-Lipschitz on $X$ for every $z\in\mathcal{Z}$, and let $D$ be any distribution on $\mathcal{Z}$. Fix $\x,\y\in X$ and put $Y:=f(\x;Z)-f(\y;Z)$ for $Z\sim D$. Then $|Y|\le L\|\x-\y\|_2$, and Hoeffding's lemma~\citep[Lemma~B.7]{shalev2014understanding} gives
\[
\E\exp(\lambda(Y-\E Y))\le \exp\left(\frac{\lambda^2L^2\|\x-\y\|_2^2}{2}\right)\qquad \forall \lambda\in\R.
\]
Thus $D$ satisfies the sub-Gaussian increment condition over $X$ with $\sigma=L$.
\end{remark}

\begin{remark}[Regularized Logistic Loss]\label{rem:sc-logistic-example}
\Cref{rem:lipschitz-implies-increment} gives one sufficient condition for the increment assumption, but the condition can hold even when the sample losses are not Lipschitz. We note that this happens for regularized logistic loss when the feature vectors are bounded. This setting corresponds to the loss map
\[
f(\x;(\a,b))=\frac{\mu}{2}\|\x\|_2^2+\log(1+\exp(-b\langle \a,\x\rangle)),
\]
where $b\in\{\pm1\}$ and $\|\a\|_2\le M$ for some $M$. Since $u\mapsto\log(1+\exp(-u))$ is $1$-Lipschitz and $1/4$-smooth, $f(\cdot\,;(\a,b))$ is $\mu$-strongly convex and $(\mu+M^2/4)$-smooth. Also, for any $\x,\y\in\R^d$,
\[
|\log(1+\exp(-b\langle \a,\x\rangle))-\log(1+\exp(-b\langle \a,\y\rangle))|
\le M\|\x-\y\|_2.
\]
The quadratic term is deterministic and cancels after centering. Hoeffding's Lemma therefore gives the increment condition with $\sigma=M$. The same argument applies if the logistic term is replaced by $\ell(-b\langle \a,\x\rangle)$, where $\ell:\R\to\R$ is a convex margin loss that is $\rho$-Lipschitz and $\beta$-smooth. In that case one may take $L=\mu+\beta M^2$ and $\sigma=\rho M$.
\end{remark}

\paragraph{Sample complexity of a-UC vs ERM and integer vs continuous.} We now turn to our results about sample complexity in this setting. Let $X_R^{(\infty)} := B_R^{(\infty)}\cap\Z^d$ denote the set of integer points in the box of radius $R$. The proofs for this section are given in \Cref{app:sc-proofs}.

Our first theorem proves a tight bound on the anchored uniform convergence for the integer domain $X^{(\infty)}_R$. Crucially, the lower bound shows that this complexity goes to infinity as the box size increases.

\begin{theorem}\label{thm:sc-anchored-uc}
Suppose the functions $f(\cdot\,;z)$ are $\mu$-strongly convex and $L$-smooth for all $z \in \mathcal{Z}$. Then there is an absolute constant $C>0$ such that, for every $1\le R<\infty$, every $\epsilon>0$, and every $\delta\in(0,1)$,
\[
m_{X_R^{(\infty)}}^{\mathrm{a-UC}}(\epsilon,\delta;\mathfrak{D}^{\mathrm{sc}}_{\mu,L,\sigma})
\le \left\lceil C\frac{\sigma^2\lfloor R\rfloor^2 d}{\epsilon^2}\left(d+\log\frac{1}{\delta}\right)\right\rceil.
\]
Moreover, there is an absolute constant $c>0$ such that, for every $d\ge1$ and every $1\le R<\infty$, there is such a loss map $f(\cdot\,;\,\cdot)$ for which, for every $0<\delta\le1/4$ and every $\epsilon>0$,
\[
m_{X_R^{(\infty)}}^{\mathrm{a-UC}}(\epsilon,\delta;\mathfrak{D}^{\mathrm{sc}}_{\mu,L,\sigma})
\ge c\frac{\sigma^2\lfloor R\rfloor^2 d}{\epsilon^2}\left(d+\log\frac{1}{\delta}\right).
\]
\end{theorem}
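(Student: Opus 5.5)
The upper bound will come from sub-Gaussian chaining on the anchored process. For $\x\in X_R^{(\infty)}$ define
\[
G(\x):=\bar F_S(\x)-\bar F_D(\x)=\frac1m\sum_{i=1}^m\big[f(\x;z_i)-f(\0;z_i)-F_D(\x)+F_D(\0)\big].
\]
By independence and the increment condition \eqref{eq:subgaussInc}, each increment $G(\x)-G(\y)$ is sub-Gaussian with variance proxy $\sigma^2\|\x-\y\|_2^2/m$. Since $G(\0)=0$, the process is anchored.

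The index set $X_R^{(\infty)}$ lies inside the $\ell_2$ ball of radius $\lfloor R\rfloor\sqrt d$. Its $\ell_2$ covering numbers are therefore at most $(3\lfloor R\rfloor\sqrt d/t)^d$, and below scale $1$ the set is finite with $(2\lfloor R\rfloor+1)^d$ points. Dudley's entropy integral combined with the tail (high-probability) version of generic chaining then gives, with probability $1-\delta$,
\[
\sup_{\x}|G(\x)|\lesssim \frac{\sigma}{\sqrt m}\Big(\int_0^{D}\sqrt{d\log(3D/t)}\,dt+D\sqrt{\log(1/\delta)}\Big),\qquad D=\lfloor R\rfloor\sqrt d .
\]
The right-hand side is $\frac{\sigma\lfloor R\rfloor\sqrt d}{\sqrt m}\big(\sqrt d+\sqrt{\log(1/\delta)}\big)$. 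Setting this at most $\epsilon$ yields the claimed $m$. This is the same argument as in \Cref{thm:upper-bound}, with Lipschitz bracketing replaced by sub-Gaussian increments. The main point to check is that the tail version of chaining applies with only an increment condition and no boundedness; this holds by standard results, e.g. Talagrand's generic chaining tail bound.

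For the lower bound I will use the quadratic-plus-linear loss
\[
f(\x;z)=\frac{\mu}{2}\|\x\|_2^2+\sigma\langle z,\x\rangle,\qquad z\in\{\pm1\}^d\ \text{or Gaussian}.
\]
This loss is $\mu$-strongly convex and $L$-smooth, since $\mu\le L$. The increments are $\sigma\langle z-\E z,\x-\y\rangle$, which satisfy \eqref{eq:subgaussInc} when the coordinates of $z-\E z$ are independent and $1$-sub-Gaussian. The anchored deviation is then exactly linear,
\[
\bar F_S(\x)-\bar F_D(\x)=\sigma\langle \bar z-\E z,\x\rangle,
\]
so its supremum over the box is $\sigma\lfloor R\rfloor\|\bar z-\E z\|_1$. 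Here the quadratic cancels, which is why strong convexity does not help uniform convergence.

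Take $z$ uniform on $\{\pm1\}^d$, and keep track of the normalization so that the variance proxy is at most $1$. Then $\|\bar z\|_1$ is a sum of $d$ terms, each of order $1/\sqrt m$, so it is of order $d/\sqrt m$ with constant probability. Requiring $\sigma\lfloor R\rfloor d/\sqrt m\le\epsilon$ forces $m\gtrsim \sigma^2\lfloor R\rfloor^2 d^2/\epsilon^2$. For the $\log(1/\delta)$ term, I will use the anti-concentration of a Rademacher sum in the direction of the all-ones vector. Since $\|\bar z\|_1\ge|\langle \1,\bar z\rangle|$, and $\langle\1,\bar z\rangle$ is a sum of $md$ signs divided by $m$, it exceeds $\sqrt{d\log(1/\delta)/m}$ times a constant with probability at least $\delta$. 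This gives $m\gtrsim\sigma^2\lfloor R\rfloor^2 d\log(1/\delta)/\epsilon^2$.

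Combining the two regimes via $\max\ge$ half the sum gives the claimed bound. The main obstacle I expect is the constant-probability lower bound for $\|\bar z\|_1$, together with the sharp lower tail for the Rademacher sum. For the first, Paley–Zygmund applied to $\|\bar z\|_1$ handles $\delta\le1/4$, using that $\E\|\bar z\|_1\gtrsim d/\sqrt m$ and that its variance is $O(d/m)$. For the second, I will use a standard reverse Chernoff (binomial anti-concentration) bound, which needs $md\gtrsim\log(1/\delta)$; the complementary small-$m$ case is handled trivially.
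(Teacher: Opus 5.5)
Your upper bound matches the paper's: sub-Gaussian increments of the averaged anchored process, the inclusion $X_R^{(\infty)}\subseteq B_2(\0,\lfloor R\rfloor\sqrt d)$, and the tail form of Dudley's bound. Your Paley--Zygmund dimension term is also fine.

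\textbf{The gap: the confidence term with Rademacher noise.} It comes from your choice of bounded noise. With $z$ uniform on $\{\pm1\}^d$ you have $|\bar z_j|\le1$, so the anchored deviation $\sigma\lfloor R\rfloor\|\bar z\|_1$ is deterministically at most $\sigma\lfloor R\rfloor d$. Hence for $\epsilon\ge\sigma\lfloor R\rfloor d$, anchored UC holds with probability one already at $m=1$. But the bound you must prove at $\epsilon=\sigma\lfloor R\rfloor d$ equals $c\,(1+\log(1/\delta)/d)$, which tends to infinity as $\delta\to0$. So the ``complementary small-$m$ case'' $md\lesssim\log(1/\delta)$ is not trivial. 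It is exactly where your instance cannot witness the claimed bound, and no sharper binomial estimate can fix this. You can only close that case when $\epsilon<\sigma\lfloor R\rfloor d$: use the event that the $m$ signs agree in every coordinate, which has probability $2^{d(1-m)}\ge 2^{-md}>\delta$ there. Since the statement is for every $\epsilon>0$, the Rademacher construction proves strictly less than the theorem. A minor point: failure of UC needs deviation $>\epsilon$ with probability strictly greater than $\delta$, not ``at least $\delta$''.

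\textbf{The fix.} Use the Gaussian option you mention, as the paper does: $f(\x;Z)=\frac{\mu}{2}\|\x\|_2^2-\langle Z,\x\rangle$ with $Z\sim N(\0,\sigma^2I_d)$. Then $\langle\1,\bar Z\rangle$ is exactly $N(0,\sigma^2d/m)$. The Gaussian tail lower bound $\Prob[|N(0,1)|\ge\eta_1\sqrt{\log(1/\delta)}]\ge2\delta$ holds for all $\delta\le1/4$. This forces failure whenever $m<\eta_1^2\sigma^2\lfloor R\rfloor^2d\log(1/\delta)/\epsilon^2$, with no constraint linking $m$, $d$, $\delta$ and $\epsilon$; unbounded noise is what makes the bound valid for all $\epsilon$. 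The increment condition holds with equality for this choice. For the dimension term the paper uses a Markov bound on the number of coordinates with $|\xi_j|<\eta_0$ rather than Paley--Zygmund; either works.
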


In contrast, we now show that ERM is qualitatively stronger in this setting: the sample complexity remains finite even when considering optimization over all the integers $X^{(\infty)}_\infty := \Z^d$, i.e., when the box radius $R = \infty$.

\begin{theorem}\label{thm:sc-erm-upper-lower}
Suppose the functions $f(\cdot\,;z)$ are $\mu$-strongly convex and $L$-smooth for all $z \in \mathcal{Z}$. Then there exists an absolute constant $C>0$ such that, for every $R\in[1,\infty]$, every $0<\epsilon\le \mu d$, and every $\delta\in(0,1)$,
\[
m_{X_R^{(\infty)}}^{\mathrm{ERM}}(\epsilon,\delta;\mathfrak{D}^{\mathrm{sc}}_{\mu,L,\sigma})
\le \left\lceil C\frac{\sigma^2d\min\{\kappa,\lfloor R\rfloor^2\}}{\epsilon^2}\left(d+\log\frac{1}{\delta}\right)\right\rceil.
\]
Moreover, there exist absolute constants $c,c_0>0$ such that, for every $d\ge2$ and every $R\in[1,\infty]$, there is such a loss map $f(\cdot\,;\,\cdot)$ for which, for every $0<\delta\le1/4$ and every $0<\epsilon\le c_0\mu d$,
\[
m_{X_R^{(\infty)}}^\star(\epsilon,\delta;\mathfrak{D}^{\mathrm{sc}}_{\mu,L,\sigma})\ge c\frac{\sigma^2d\min\{\kappa,\lfloor R\rfloor^2\}}{\epsilon^2}\left(d+\log\frac{1}{\delta}\right).
\]
\end{theorem}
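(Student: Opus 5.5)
The upper bound will use a localized (peeling) version of the chaining argument behind \Cref{thm:upper-bound}, applied to the increment process. The lower bound will be a reduction to the coordinatewise biased-coin experiment used in the lower bound of \Cref{thm:upper-bound}, embedded into integer-valued quadratics.

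\textbf{Upper bound.} Let $\x^\star=\x^\star_D$ be the integer population minimizer and $\hat\x=\hat\x_S$ the ERM output. Since $F_S(\hat\x)\le F_S(\x^\star)$,
\[
F_D(\hat\x)-F_D(\x^\star)\le G(\hat\x),\qquad G(\x):=\big[F_D(\x)-F_D(\x^\star)\big]-\big[F_S(\x)-F_S(\x^\star)\big].
\]
It therefore suffices to show that, with probability $1-\delta$, $|G(\x)|<F_D(\x)-F_D(\x^\star)$ for every $\x\in X_R^{(\infty)}$ with excess larger than $\epsilon$.

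To compare excess with distance, let $\x_c$ be the continuous minimizer of $F_D$ over the box $B_R^{(\infty)}$. Some integer point of the box lies within $\ell_2$ distance $\sqrt d/2$ of $\x_c$. The first-order optimality of $\x_c$ over the box together with $L$-smoothness \eqref{eq:sc-first-order} then gives $F_D(\x^\star)-F_D(\x_c)\le Ld/8$. Strong convexity gives $F_D(\x)-F_D(\x_c)\ge \frac{\mu}{2}\|\x-\x_c\|_2^2$. Combining these, every $\x$ with $\|\x-\x^\star\|_2=r$ has excess at least $c\mu r^2$ as soon as $r\ge C'\sqrt{\kappa d}$.

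Next I control the process on shells. By \eqref{eq:subgaussInc}, $G$ is a sub-Gaussian process with increments $\frac{\sigma}{\sqrt m}\|\x-\y\|_2$. On the shell $\{\x : r_{k-1}<\|\x-\x^\star\|_2\le r_k\}$ with $r_k=2^k$, Dudley chaining over a Euclidean ball of radius $r_k$ together with a tail bound at confidence $\delta_k=\delta 2^{-k-1}$ gives, with probability $1-\delta_k$,
\[
\sup_{\text{shell}}|G|\lesssim \sigma r_k\sqrt{(d+\log(1/\delta)+k)/m}.
\]
A union bound over $k$ makes all shells hold simultaneously with probability $1-\delta$.

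The shells then split into two regimes. For $r_k\gtrsim\sqrt{\kappa d}$, the excess is $\gtrsim\mu r_k^2$, and the shell bound is beaten already for $m\gtrsim \sigma^2(d+\log(1/\delta))/(\mu^2\kappa d)$; the extra $k$ in the logarithm is absorbed because $r_k$ grows geometrically. For $r_k\lesssim\sqrt{\kappa d}$, and in any case $r_k\le 2\lfloor R\rfloor\sqrt d$ inside the box, we only know the excess exceeds $\epsilon$. So we need $\sigma r_k\sqrt{(d+\log(1/\delta))/m}\le\epsilon$ with $r_k^2\lesssim d\min\{\kappa,\lfloor R\rfloor^2\}$, which is exactly the stated $m$. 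The assumption $\epsilon\le\mu d$ is what makes the first regime's requirement dominated by the second. I would check this bookkeeping carefully, since it is the only place it is used.

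\textbf{Lower bound.} I would embed the hard instance of \Cref{thm:upper-bound} into integer quadratics. Write $\bar s:=\sqrt{\min\{\kappa,\lfloor R\rfloor^2\}}$. I pair the coordinates (this is why $d\ge2$ is needed) and, in each of the roughly $d/2$ pairs, use a two-dimensional quadratic with curvature $L$ along one direction and $\mu$ along an orthogonal direction. The orientation and center are chosen so that two integer points at distance $\asymp\bar s$ nearly tie as population minimizers. Along the flat direction this is possible since $\mu\bar s^2\lesssim L$; the box constraint caps the distance at $\lfloor R\rfloor$.

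To these quadratics I add a random linear tilt $\langle \w(z),\x\rangle$. Following the sampling scheme of \Cref{thm:upper-bound}, each sample picks a pair uniformly at random and reveals a sign with mean $\rho b_j$ along the segment joining the two tied points, where $\b$ is the hidden sign vector. By \Cref{rem:lipschitz-implies-increment}, scaling the tilt suitably keeps \eqref{eq:subgaussInc} with parameter $\sigma$, while every $f(\cdot\,;z)$ remains $\mu$-strongly convex and $L$-smooth.

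Choosing the wrong point in a pair then costs population value $\asymp\sigma\rho\bar s/\sqrt d$ per pair. An $\epsilon$-optimal output must therefore get a constant fraction of the $b_j$ right once $\rho\asymp\epsilon/(\sigma\bar s\sqrt d)$. Assouad's lemma for the $d/2$ coin experiments gives $m\gtrsim d/\rho^2$, and a two-point argument on a single pair gives the $\log(1/\delta)/\rho^2$ term. Substituting $\rho$ yields the claimed bound.

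\textbf{Main obstacle.} I expect the construction to be the hardest step. The quadratics must simultaneously make the two integer candidates in each pair nearly tied at distance $\bar s$ (so that the tilt is first-order in $\rho$ and not second-order), keep the instance inside the box when $R<\infty$, and preserve strong convexity and smoothness after adding the tilt. I would first try the rotated anisotropic quadratic $\frac{L}{2}\langle \u,\x-\a\rangle^2+\frac{\mu}{2}\|\x-\a\|_2^2$ in each pair, with $\u$ orthogonal to the lattice segment between the two candidate points.
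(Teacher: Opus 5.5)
\textbf{The genuine gap is in your lower bound.} The sparse ``pick a pair, reveal a sign'' tilt you borrow from \Cref{thm:upper-bound} cannot produce the $d^2$ term under \eqref{eq:subgaussInc}.

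\emph{The per-pair cost is $\sigma\rho\bar s/d$, not $\sigma\rho\bar s/\sqrt d$.} \Cref{rem:lipschitz-implies-increment}, which you invoke, licenses amplitude $\alpha=\sigma$ for the tilt $-\alpha k\langle \v_J,\x\rangle$, where $\v_J$ is the unit segment direction of the sampled pair $J$. With $\bar d\approx d/2$ pairs, the population tilt on pair $j$ is then $\alpha\rho b_j/\bar d$ per unit length. A wrong choice therefore costs $\asymp\sigma\rho\bar s/d$.

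\emph{Rescaling the tilt by $\sqrt d$ does not rescue this.} It fixes the variance, but a variable equal to $\pm\alpha$ with probability $\asymp 1/d$ and $0$ otherwise has sub-Gaussian parameter of order $\alpha/\sqrt{\log d}$. So \eqref{eq:subgaussInc} fails for large $\lambda$ unless $\alpha\lesssim\sigma\sqrt{\log d}$.

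\emph{Resulting bound.} With a legal amplitude, $\rho\asymp\epsilon/(\alpha\bar s)$. Assouad plus the two-point test then give only $m\gtrsim\alpha^2\bar s^2(d+\log(1/\delta))/\epsilon^2$, which is a factor $d$ short up to $\log d$.

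\emph{Why the sparse design fails here.} It suited \Cref{thm:upper-bound} because $\ell_\infty$-Lipschitzness is measured through the dual $\ell_1$ norm. An $\ell_2$ increment condition instead admits dense isotropic noise, and that is the missing idea. The paper takes $f(\x;Z)=Q(\x)-\langle Z,\x\rangle$ with $Z\sim N(\bm{\theta}_{\b},\sigma^2I_d)$. Its centered increment is exactly $N(0,\sigma^2\|\x-\y\|_2^2)$, so every sample carries an independent $N(\pm\gamma/\tau,\sigma^2)$ observation for every block. With per-block gap $\gamma\asymp\epsilon/d$ and $\tau\asymp\bar s$, each block accumulates information $m\gamma^2/(\sigma^2\tau^2)\asymp m\epsilon^2/(\sigma^2d^2\bar s^2)$. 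That is where $d^2$ comes from. Gaussian means are also not capped the way $\rho\le 1/2$ is.

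\textbf{The gadget you propose to try first also fails.} Take $\a$ the midpoint of $(0,0)$ and $(\tau,1)$, and $\u\perp(\tau,1)$. The lattice point $(1,0)$ exceeds the tied value by $\frac{L}{2(1+\tau^2)}-\frac{\mu}{2}(\tau-1)$, which is negative unless $\kappa\gtrsim\tau^3$. The isotropic $\mu$-pull toward the midpoint rewards lattice points next to the segment, so you would only reach $\tau\asymp\kappa^{1/3}$.

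\Cref{lem:sc-block-gadget} instead uses $\mu(x-\tau y)^2+\frac{L}{4}(y-\frac12)^2$. Its excess on $\Z^2$ is $\mu p^2+\frac{L}{4}y(y-1)$ with $p=x-\tau y\in\Z$. This vanishes only at the two ties and is at least $\mu/4$ elsewhere. Its Hessian has determinant $\mu L$ and trace at most $L$ when $\tau^2\le\kappa/16$ and $\kappa\ge 64$. The case $\kappa<64$ is handled by a diagonal quadratic tying $0$ and $1$ in each coordinate.

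\textbf{Your upper bound is fine after two repairs.} It is a distance-shell version of the paper's level-set peeling (\Cref{lem:sc-localization} plus chaining on $A_{2^kr}$).
\begin{enumerate}
\item Define $\x_c$ over $B^{(\infty)}_{\lfloor R\rfloor}$ rather than $B^{(\infty)}_R$. For non-integer $R$, a coordinate of $\x_c$ at $\pm R$ can lie nearly $1$ from the nearest feasible integer. Rounding it inward also makes $\langle\nabla F_D(\x_c),\q-\x_c\rangle\ge 0$, possibly large, which breaks $F_D(\x^\star)-F_D(\x_c)\le Ld/8$.
\item Do not peel inside the ball of radius $\asymp\sqrt{d\min\{\kappa,\lfloor R\rfloor^2\}}$. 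With $r_k=2^k$ and $\delta_k=\delta 2^{-k-1}$, the shells there carry an extra $k$ up to $\frac12\log_2(d\min\{\kappa,\lfloor R\rfloor^2\})$, which you silently dropped. It is not absorbed by $d+\log(1/\delta)$ when $\log\min\{\kappa,\lfloor R\rfloor^2\}\gg d+\log(1/\delta)$. A single chaining bound on that ball, with peeling only outside it, suffices.
\end{enumerate}
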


As mentioned before, this difference comes from the fact that ERM in this setting exhibits a localization phenomenon, requiring empirical-process control only on a (random) region near the optimal solution; in fact, strong convexity localizes this region to scale $\sqrt{\kappa d}$, truncated by the box diameter. Note also that the lower bound in Theorem~\ref{thm:sc-erm-upper-lower} is for any algorithm; thus, no algorithm can improve upon ERM (up to constants).

Finally, we consider the continuous version of the problem, namely when the domain is the whole of $\R^d$. In this case, we show that ERM actually has ``accelerated'' rates of order $\frac{1}{\epsilon}$, instead of order $\frac{1}{\epsilon^2}$ that is best possible for the integer case, as shown by the previous theorem. This happens because in the continuous case we can avoid an additional rounding loss in the localization phenomenon, which is unavoidable in the integer case (due to our matching lower bound).

\begin{theorem}\label{thm:sc-continuous-erm}
Suppose the functions $f(\cdot\,;z)$ are $\mu$-strongly convex and $L$-smooth for all $z \in \mathcal{Z}$. Then there exists an absolute constant $C>0$ such that, for every nonempty closed convex set $K\subseteq\R^d$, every $\epsilon>0$, and every $\delta\in(0,1)$,
\[
m_K^{\mathrm{ERM}}(\epsilon,\delta;\mathfrak{D}^{\mathrm{sc}}_{\mu,L,\sigma})
\le \left\lceil C\frac{\sigma^2}{\mu\epsilon}\left(d+\log\frac{1}{\delta}\right)\right\rceil.
\]
\end{theorem}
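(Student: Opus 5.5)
We prove the bound with the standard localization argument for continuous ERM under strong convexity. The key tool is the empirical increment process at the population minimizer, controlled through the sub-Gaussian increment condition.

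Let $\x^\star:=\x^\star_D$ and $\hat\x:=\hat\x_S$ be the minimizers over $K$. Define the centered increment process
\[
G(\x):=\big(F_S(\x)-F_S(\x^\star)\big)-\big(F_D(\x)-F_D(\x^\star)\big).
\]

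\textbf{Step 1 (deterministic localization).} Two inequalities combine. First, since $\hat\x$ minimizes the $\mu$-strongly convex $F_S$ over the convex set $K$, the first-order optimality condition gives $F_S(\x^\star)-F_S(\hat\x)\ge\frac{\mu}{2}\|\hat\x-\x^\star\|_2^2$. Second, since $\x^\star$ minimizes $F_D$ over $K$, we have $F_D(\hat\x)-F_D(\x^\star)\ge\frac{\mu}{2}\|\hat\x-\x^\star\|_2^2$. Adding them yields
\[
-G(\hat\x)\ge \mu\|\hat\x-\x^\star\|_2^2 .
\]
Using only the first inequality, $F_D(\hat\x)-F_D(\x^\star)\le -G(\hat\x)$. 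Hence, with $r:=\|\hat\x-\x^\star\|_2$,
\[
\text{excess}\le -G(\hat\x)\le \sup_{\x\in K,\ \|\x-\x^\star\|_2\le r}|G(\x)| \quad\text{and}\quad \mu r^2\le -G(\hat\x).
\]
This is the usual fixed-point form of localization.

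\textbf{Step 2 (local suprema).} By \eqref{eq:subgaussInc}, the process $\sqrt m\,G$ has sub-Gaussian increments with respect to $\sigma\|\cdot\|_2$. Here we use the fact that the increment $G(\x)-G(\y)$ is an average of $m$ i.i.d.\ centered terms, each $\sigma\|\x-\y\|_2$-sub-Gaussian. Since $G(\x^\star)=0$, generic chaining (Dudley's bound in tail form) over the set $K\cap B(\x^\star,s)$ gives, with probability at least $1-\delta'$,
\[
\sup_{\|\x-\x^\star\|_2\le s}|G(\x)|\le C\sigma s\sqrt{(d+\log(1/\delta'))/m}.
\]
The Euclidean ball of radius $s$ has covering numbers $(3s/t)^d$, so the Dudley integral is $O(s\sqrt d)$. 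This handles a single scale $s$.

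\textbf{Step 3 (peeling, the main obstacle).} The difficulty is that $r$ is random and $K$ may be unbounded, so Step 2 cannot be applied at one fixed $s$. I would first try to avoid peeling altogether by a self-normalized argument. Consider the normalized process $\x\mapsto G(\x)/\|\x-\x^\star\|_2$ restricted to shells $s_k<\|\x-\x^\star\|_2\le 2s_k$, where $s_k=2^k s_0$ for $k\in\Z$. On each shell we apply Step 2 with confidence $\delta_k=\delta\,2^{-|k|-2}$, and a union bound makes all shells hold simultaneously. The $\log(1/\delta_k)$ overhead is $O(|k|)$. It is harmless: only shells with $s_k$ near the critical radius $s^\ast\asymp\frac{\sigma}{\mu}\sqrt{(d+\log(1/\delta))/m}$ matter, and on far shells the quadratic term $\mu s^2$ dominates the linear-in-$s$ deviation, since the extra $\sqrt{|k|}$ grows much more slowly than $2^{k}$.

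\textbf{Step 4 (conclusion).} On the good event, $\mu r^2\le C\sigma\cdot 2r\sqrt{(d+|k|+\log(1/\delta))/m}$, which forces $r\lesssim s^\ast$. For $r$ below $s_0:=s^\ast$, use the innermost ball directly. The excess is then at most
\[
C\sigma r\sqrt{(d+\log(1/\delta))/m}\lesssim \frac{\sigma^2}{\mu}\cdot\frac{d+\log(1/\delta)}{m}.
\]
Setting this at most $\epsilon$ gives $m\ge C\frac{\sigma^2}{\mu\epsilon}(d+\log\frac1\delta)$. Smoothness $L$ is not needed.
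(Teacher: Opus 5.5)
Your proposal is correct and is essentially the paper's argument: quadratic growth from strong convexity, the chaining bound of \Cref{lem:subgaussian-euclidean-ball} on balls around $\x_D^\star$, and a union bound over dyadic scales. The only difference is in the peeling. The paper peels over the level sets $\{\x\in K: F_D(\x)-F_D(\x_D^\star)\le 2^k r\}\subseteq B_2(\x_D^\star,\sqrt{2^{k+1}r/\mu})$ to get the relative bound $(F_D-F_D(\x_D^\star))-(F_S-F_S(\x_D^\star))\le \frac12(F_D-F_D(\x_D^\star))+r$ on $K$, and then uses only $F_S(\hat\x_S)\le F_S(\x_D^\star)$. You instead peel over Euclidean shells and close with your fixed-point inequality $\mu\|\hat\x_S-\x_D^\star\|_2^2\le -G(\hat\x_S)$, which also uses strong convexity of $F_S$; population strong convexity alone would already give it with $\mu/2$.
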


We note that this statement does not follow directly from existing SGD bounds of order $\frac{1}{\epsilon}$ (e.g., \cite{nemirovski2009robust} and \citet[Theorems~3.1 and~3.5]{harvey2019tight}: their results assume an objective that is both $1$-strongly convex and $1$-Lipschitz on the feasible set. When the feasible set is unbounded, however, this class is empty, since no finite-valued function on all of $\R^d$ can be both strongly convex and globally Lipschitz. While the Lipschitz condition is generalized by our sub-Gaussian increment condition (Remark~\ref{rem:lipschitz-implies-increment}), there is no existing result in the continuous setting under the weaker sub-Gaussian increment condition. Thus, we prove Theorem~\ref{thm:sc-continuous-erm} to compare to the integer case with the same assumptions, even on unbounded domains.

\begin{ack}
Hongyu Cheng and Amitabh Basu gratefully acknowledge support from the Air Force Office of Scientific Research (AFOSR) grant FA9550-25-1-0038. Hongyu Cheng also received support from a MINDS Fellowship awarded by the Mathematical Institute for Data Science (MINDS) at Johns Hopkins University.
\end{ack}

\bibliographystyle{plainnat}
\bibliography{references}

@book{SchrijverBOOK,
	Address = {Chichester},
	Author = {Schrijver, A.},
	Isbn = {0-471-90854-1},
	Mrclass = {90C05 (90C10)},
	Mrnumber = {874114 (88m:90090)},
	Mrreviewer = {J{\"u}rgen K{\"o}hler},
	Note = {A Wiley-Interscience Publication},
	Pages = {xii+471},
	Publisher = {John Wiley \& Sons Ltd.},
	Series = {Wiley-Interscience Series in Discrete Mathematics},
	Title = {Theory of linear and integer programming},
	Year = {1986}}

@book{shalev2014understanding,
  title={Understanding machine learning: From theory to algorithms},
  author={Shalev-Shwartz, Shai and Ben-David, Shai},
  year={2014},
  publisher={Cambridge University Press}
}

@book{basu2025convexity,
  title={Convexity and its applications in discrete and continuous optimization},
  author={Basu, Amitabh},
  year={2025},
  publisher={Cambridge University Press}
}

@book{vershynin2018high,
  title={High-dimensional probability: An introduction with applications in data science},
  author={Vershynin, Roman},
  volume={47},
  year={2018},
  publisher={Cambridge University Press}
}

@book{tsybakov2008introduction,
  title={Introduction to Nonparametric Estimation},
  author={Tsybakov, A.B.},
  isbn={9780387790527},
  lccn={2008939894},
  series={Springer Series in Statistics},
  year={2008},
  publisher={Springer New York}
}

@misc{duchi2025statistics,
  title={Statistics and Information Theory},
  author={Duchi, John},
  year={2025},
  url={https://stanford.edu/class/ee377/lecture-notes.pdf}
}

@article{chu2023unified,
  title={A unified framework for information-theoretic generalization bounds},
  author={Chu, Yifeng and Raginsky, Maxim},
  journal={Advances in Neural Information Processing Systems},
  volume={36},
  pages={79260--79278},
  year={2023}
}

@article{feldman2016generalization,
  title={Generalization of {ERM} in stochastic convex optimization: The dimension strikes back},
  author={Feldman, Vitaly},
  journal={Advances in Neural Information Processing Systems},
  volume={29},
  year={2016}
}

@inproceedings{harvey2019tight,
  title={Tight analyses for non-smooth stochastic gradient descent},
  author={Harvey, Nicholas JA and Liaw, Christopher and Plan, Yaniv and Randhawa, Sikander},
  booktitle={Conference on Learning Theory},
  pages={1579--1613},
  year={2019},
  organization={PMLR}
}

@inproceedings{shalev2009stochastic,
  title={Stochastic Convex Optimization},
  author={Shalev-Shwartz, Shai and Shamir, Ohad and Srebro, Nathan and Sridharan, Karthik},
  booktitle={Proceedings of the 22nd Annual Conference on Learning Theory},
  year={2009}
}

@book{conforti2014integer,
  title     = {Integer programming},
  author    = {Conforti, Michele and Cornu{\'e}jols, G{\'e}rard and Zambelli, Giacomo},
  volume    = {271},
  year      = {2014},
  publisher = {Springer},
  doi       = {10.1007/978-3-319-11008-0},
  url       = {https://doi.org/10.1007/978-3-319-11008-0}
}

@inproceedings{carmon2024sample,
  title={The sample complexity of {ERM}s in stochastic convex optimization},
  author={Carmon, Daniel and Yehudayoff, Amir and Livni, Roi},
  booktitle={International Conference on Artificial Intelligence and Statistics},
  pages={3799--3807},
  year={2024},
  organization={PMLR}
}

@article{agarwal2009information,
  title={Information-theoretic lower bounds on the oracle complexity of convex optimization},
  author={Agarwal, Alekh and Wainwright, Martin J and Bartlett, Peter and Ravikumar, Pradeep},
  journal={Advances in Neural Information Processing Systems},
  volume={22},
  year={2009}
}

@book{shapiro2021lectures,
  title={Lectures on stochastic programming: modeling and theory},
  author={Shapiro, Alexander and Dentcheva, Darinka and Ruszczynski, Andrzej},
  year={2021},
  publisher={SIAM}
}

@article{harvey2019simple,
  title={Simple and optimal high-probability bounds for strongly-convex stochastic gradient descent},
  author={Harvey, Nicholas JA and Liaw, Christopher and Randhawa, Sikander},
  journal={arXiv preprint arXiv:1909.00843},
  year={2019}
}

@article{nemirovski2009robust,
  title={Robust stochastic approximation approach to stochastic programming},
  author={Nemirovski, Arkadi and Juditsky, Anatoli and Lan, Guanghui and Shapiro, Alexander},
  journal={SIAM Journal on optimization},
  volume={19},
  number={4},
  pages={1574--1609},
  year={2009},
  publisher={SIAM}
}

@book{van1996weak,
  author={Van Der Vaart, Aad W and Wellner, Jon A},
  title={Weak convergence and empirical processes: with applications to statistics},
  year={1996},
  series={Springer Series in Statistics},
  publisher={Springer}
}

@book{lan2020first,
  title={First-order and stochastic optimization methods for machine learning},
  author={Lan, Guanghui},
  volume={1},
  year={2020},
  publisher={Springer}
}

\newpage
\appendix

\section{Sample complexity of stochastic convex optimization}\label{sec:SCO-summary}

\begin{table}[htbp]
\centering
\caption{Sample complexity of $1$-Lipschitz convex optimization in $\ell_2$}\label{table:SCO}
\begingroup
\scriptsize
\setlength{\tabcolsep}{2.2pt}
\renewcommand{\arraystretch}{1.45}
\newcommand{\ratecell}[1]{%
  \begingroup
  \setbox0=\hbox{$#1$}%
  \ifdim\wd0>\linewidth
    \makebox[\linewidth][c]{\resizebox{\linewidth}{!}{\box0}}%
  \else
    \makebox[\linewidth][c]{\box0}%
  \fi
  \endgroup}
\begin{tabular}{@{}>{\raggedright\arraybackslash}m{0.15\linewidth}>{\centering\arraybackslash}m{0.28\linewidth}>{\centering\arraybackslash}m{0.28\linewidth}>{\centering\arraybackslash}m{0.25\linewidth}@{}}
\toprule
Problem class & Any data-driven algorithm (SGD) & ERM & UC \\
\midrule
\begin{tabular}[c]{@{}l@{}}
$X = B_R^{(2)}$,  \\
$1$-Lipschitz convex $f$
\end{tabular}
&
\ratecell{\Theta\!\left(\tfrac{R^2}{\epsilon^2}\log(\frac{1}{\delta})\right)}
&
\ratecell{\Theta\!\left(\tfrac{Rd}{\epsilon}+\tfrac{R^2}{\epsilon^2}\log(\tfrac{1}{\delta}))\right)}
&
\parbox[c]{\linewidth}{\centering
\ratecell{\Theta\!\left(\tfrac{R^2}{\epsilon^2}(d+\log(1/\delta))\right)}%
}
\\
\bottomrule
\end{tabular}
\endgroup
\end{table}

Table~\ref{table:SCO} summarizes the precise sample complexity bounds when $\x \mapsto f(\x;z)$ is 1-Lipschitz with respect to $\ell_2$ over the $\ell_2$ ball. The sample complexity of uniform convergence is $\Theta\!\left(\tfrac{R^2}{\epsilon^2}(d+\log(\tfrac{1}{\delta}))\right)$; the upper bound is a consequence of our results in the second row of Table~\ref{table:results}, and the lower bound comes about by combining~\cite[Theorem 3.9]{feldman2016generalization} and the lower bound on uniform convergence in the first row of Table~\ref{table:results} for $B^{(\infty)}_R$ with $d=1$ (so both $\ell_2$ and $\ell_\infty$ norms are the same). The sample complexity of ERM is $\Theta\!\left(\tfrac{Rd}{\epsilon}+\tfrac{R^2}{\epsilon^2}\log(\tfrac{1}{\delta}))\right)$; the upper bound follows from Theorems 1 and 2 in~\cite{carmon2024sample} and the lower bound is from~\cite[Theorem 3.10]{feldman2016generalization}. Finally, one can use stochastic approximation (SA) methods or stochastic gradient descent (SGD) to obtain an $\epsilon$-approximate solution to~\eqref{eq:SCO} with probability $1-\delta$ with at most $O\left(\tfrac{1}{\epsilon^2}\log(\tfrac{1}{\delta})\right)$ samples; and no algorithm can do better because of the lower bound stated above for $B^{(\infty)}_R$ with $d=1$. Thus, over the $\ell_2$ ball, the sample complexity of uniform convergence for {\em convex}, Lipschitz functions is strictly larger than for ERM, which itself can be strictly improved by using SGD.

\section{Learning-theoretic tools}
\label[appendix]{app:learning-lemmas}

We first state the testing and KL facts used in the lower bounds.

\begin{lemma}\label{lem:testing-inequalities}
Let $\V$ be a finite set with $|\V|\ge2$, let $\{P_v:v\in\V\}$ be probability distributions on a common measurable space, and let $Q$ be another probability distribution on the same space. Then every hypothesis test or decorder $\widehat{v}$ with values in $\V$ satisfies
\[
\frac1{|\V|}\sum_{v\in\V}P_v[\widehat{v}\ne v]
\ge 1-\frac{|\V|^{-1}\sum_{v\in\V}\mathrm{KL}(P_v\|Q)+\log2}{\log|\V|}.
\]
Moreover, if $P$ and $Q$ are probability distributions on a common measurable space, $E$ is an event, and
\[
P(E)\ge1-\delta,\qquad Q(E)\le\delta
\]
for some $0<\delta\le1/4$, then
\[
\mathrm{KL}(P\|Q)\ge \frac12\log\frac{1}{2\delta}.
\]
\end{lemma}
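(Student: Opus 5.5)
The statement to prove is \Cref{lem:testing-inequalities}. It has two parts: a generalized Fano inequality with an arbitrary reference measure $Q$, and a two-point bound that converts a high-probability separating event into a lower bound on KL divergence. Both should follow from the data-processing inequality for KL divergence, applied to the deterministic (or randomized) map induced by the decoder or by the indicator of $E$.

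\textbf{Fano part.} Let $V$ be uniform on $\V$ and let $X\mid V=v\sim P_v$. Set $\widehat V=\widehat v(X)$ and write $\bar P=|\V|^{-1}\sum_v P_v$ for the mixture.
\begin{enumerate}
\item Start from the standard Fano bound $H(V\mid \widehat V)\le \log 2+\Prob[\widehat V\ne V]\log(|\V|-1)$. Since $H(V)=\log|\V|$, this gives
\[
\Prob[\widehat V\ne V]\ge 1-\frac{I(V;\widehat V)+\log 2}{\log|\V|}.
\]
\item By data processing, $I(V;\widehat V)\le I(V;X)$.
\item Use the variational identity $I(V;X)=|\V|^{-1}\sum_v \mathrm{KL}(P_v\|\bar P)$.
\item Apply the compensation identity
\[
|\V|^{-1}\sum_v \mathrm{KL}(P_v\|Q)=|\V|^{-1}\sum_v \mathrm{KL}(P_v\|\bar P)+\mathrm{KL}(\bar P\|Q)\ge I(V;X).
\]
\end{enumerate}
Combining these steps gives the first claim. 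If some $\mathrm{KL}(P_v\|Q)=\infty$, the bound is trivial.

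A more self-contained alternative avoids entropy altogether. Apply data processing to the binary variable $\1[\widehat v=v]$ under the joint law of $(V,X)$ versus the product law $\mathrm{Unif}(\V)\otimes Q$. Under the product law the success probability is at most $1/|\V|$. The bound $\mathrm{kl}(p\|q)\ge p\log(1/q)-\log 2$ then gives the claim directly. I would present this route.

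\textbf{Two-point part.} By data processing applied to the indicator of $E$, $\mathrm{KL}(P\|Q)\ge \mathrm{kl}(P(E)\|Q(E))$, where $\mathrm{kl}$ denotes the Bernoulli KL divergence. The function $\mathrm{kl}(p\|q)$ is increasing in $p$ for $p\ge q$ and decreasing in $q$ for $q\le p$. Since $1-\delta\ge 3/4>1/4\ge\delta$, we get
\[
\mathrm{KL}(P\|Q)\ge \mathrm{kl}(1-\delta\|\delta)=(1-2\delta)\log\frac{1-\delta}{\delta}.
\]
It remains to show that $(1-2\delta)\log\frac{1-\delta}{\delta}\ge \tfrac12\log\frac1{2\delta}$ for $\delta\le 1/4$. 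Using $1-2\delta\ge 1/2$ and $\frac{1-\delta}{\delta}\ge \frac{1}{2\delta}$ (equivalent to $1-\delta\ge 1/2$), the left side is at least $\tfrac12\log\frac{1}{2\delta}$, which is the claim.

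The only step needing any care is the compensation identity, or equivalently the direct data-processing route with the product measure. Everything else is routine monotonicity.
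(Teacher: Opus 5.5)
Your proof is correct. The two-point part is exactly the paper's argument: data processing on $\1_E$, then $\mathrm{kl}(1-\delta\|\delta)=(1-2\delta)\log\frac{1-\delta}{\delta}\ge\frac12\log\frac1{2\delta}$. You additionally justify the step $\mathrm{kl}(P(E)\|Q(E))\ge\mathrm{kl}(1-\delta\|\delta)$, via monotonicity of $\mathrm{kl}(p\|q)$ in each argument on the region $q\le p$; the paper simply asserts it.

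For the Fano part, your enumerated steps 1--4 match the paper's proof. The paper cites Fano's inequality in mutual-information form, disposes of the infinite-KL case, and then uses $|\V|^{-1}\sum_v\mathrm{KL}(P_v\|Q)=I(V;Y)+\mathrm{KL}(\bar P\|Q)$.

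The route you say you would actually present is genuinely different. It is one application of data processing to the success indicator $\1[\widehat v(X)=V]$ under $P_{VX}$ versus $\mathrm{Unif}(\V)\otimes Q$. Under the product law the success probability is exactly $1/|\V|$. This is combined with $\mathrm{kl}(p\|q)\ge p\log(1/q)-\log 2$, obtained by dropping the nonnegative term $(1-p)\log\frac{1}{1-q}$ and bounding the binary entropy by $\log 2$.

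The one identity you should write out explicitly is $\mathrm{KL}(P_{VX}\|\mathrm{Unif}(\V)\otimes Q)=|\V|^{-1}\sum_v\mathrm{KL}(P_v\|Q)$. It holds because the $V$-marginals agree, so the density ratio on the slice $\{V=v\}$ is $dP_v/dQ$; both sides are $+\infty$ when some $P_v\not\ll Q$.

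Your route is self-contained: no entropy or mutual information, and no appeal to an external Fano statement. It also gives the arbitrary-reference form directly, which is the form used later with $Q=P_0^{\otimes m}$, rather than as the mutual-information bound plus the slack $\mathrm{KL}(\bar P\|Q)\ge 0$.

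The paper's route is shorter on the page because it outsources Fano to a textbook. It also makes clear that $Q=\bar P$ recovers the classical statement. Both give the identical final inequality.
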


\begin{proof}[Proof of \Cref{lem:testing-inequalities}]
For the first claim, let $V$ be uniform on $\V$ and, conditionally on $V=v$, let $Y$ have distribution $P_v$. Let $\bar P:=|\V|^{-1}\sum_{v\in\V}P_v$. By Fano's inequality \citep[Proposition~2.3.3 and Corollary~2.3.4]{duchi2025statistics} (see also \citealp[Section~2.7.1]{tsybakov2008introduction}),
\[
\frac1{|\V|}\sum_{v\in\V}P_v[\widehat{v}\ne v]\ge 1-\frac{I(V;Y)+\log2}{\log|\V|}.
\]
If $|\V|^{-1}\sum_v\mathrm{KL}(P_v\|Q)=+\infty$, the desired bound is trivial. Otherwise all $P_v$ are absolutely continuous with respect to $Q$, and the standard identity
\[
\frac1{|\V|}\sum_{v\in\V}\mathrm{KL}(P_v\|Q)
=I(V;Y)+\mathrm{KL}(\bar P\|Q)
\]
implies $I(V;Y)\le |\V|^{-1}\sum_v\mathrm{KL}(P_v\|Q)$.

For the second claim, apply the KL data processing inequality \citep[Proposition~2.2.13 and Corollary~2.2.14]{duchi2025statistics} to the indicator $y\mapsto \1_E(y)$. The random variable $\1_E(Y)$ has distribution $\mathrm{Bern}(P(E))$ when $Y\sim P$ and distribution $\mathrm{Bern}(Q(E))$ when $Y\sim Q$. Hence
\[
\mathrm{KL}(P\|Q)\ge \mathrm{kl}(P(E)\|Q(E)),
\]
where $\mathrm{kl}$ denotes the binary relative entropy. Since $P(E)\ge1-\delta$ and $Q(E)\le\delta$,
\[
\mathrm{kl}(P(E)\|Q(E))\ge \mathrm{kl}(1-\delta\|\delta)
=(1-2\delta)\log\frac{1-\delta}{\delta}
\ge \frac12\log\frac{1}{2\delta},
\]
where the last inequality uses $\delta\le1/4$.
\end{proof}

\begin{lemma}\label{lem:elementary-kl-bounds}\label{lem:bernoulli-kl}
The following elementary KL bounds hold.
\begin{enumerate}
\item For every $0\le \alpha\le1/4$,
\[
\mathrm{KL}\left(\mathrm{Bern}\left(\frac12+\alpha\right)\middle\|\mathrm{Bern}\left(\frac12-\alpha\right)\right)\le 16\alpha^2.
\]
Equivalently, if $P_+$ and $P_-$ are distributions on $\{\pm1\}$ with $P_+(s)=(1+\rho s)/2$ and $P_-(s)=(1-\rho s)/2$, then $\mathrm{KL}(P_+\|P_-)\le4\rho^2$ for $0\le\rho\le1/2$.
\item For every $k,m\in\N$, every $\theta,\theta'\in\R^k$, and every $\sigma>0$,
\[
\mathrm{KL}\left(N(\theta,\sigma^2I_k)^{\otimes m}\middle\|N(\theta',\sigma^2I_k)^{\otimes m}\right)=\frac{m}{2\sigma^2}\|\theta-\theta'\|_2^2.
\]
\end{enumerate}
\end{lemma}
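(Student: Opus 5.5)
The two parts of \Cref{lem:elementary-kl-bounds} are elementary computations, and I will handle each directly.

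\textbf{Part 1.} Write $p=\frac12+\alpha$ and $q=\frac12-\alpha$. Then
\[
\mathrm{KL}(\mathrm{Bern}(p)\|\mathrm{Bern}(q))=p\log\frac{p}{q}+(1-p)\log\frac{1-p}{1-q}.
\]
Since $1-p=q$ and $1-q=p$, this collapses to
\[
(p-q)\log\frac{p}{q}=2\alpha\log\frac{1+2\alpha}{1-2\alpha}.
\]
To bound it, I will use the standard $\chi^2$ upper bound $\mathrm{KL}(P\|Q)\le\chi^2(P\|Q)$, which follows from $\log x\le x-1$ applied inside the expectation. Here
\[
\chi^2=\frac{(p-q)^2}{q}+\frac{(p-q)^2}{1-q}=\frac{(p-q)^2}{q(1-q)}=\frac{4\alpha^2}{\frac14-\alpha^2}.
\]
For $\alpha\le 1/4$ the denominator satisfies $\frac14-\alpha^2\ge\frac14-\frac1{16}=\frac3{16}$, so $\chi^2\le \frac{64}{3}\alpha^2$. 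That is a bit above $16\alpha^2$, so I will instead bound the log directly. For $0\le t=2\alpha\le 1/2$,
\[
\log\frac{1+t}{1-t}=2\operatorname{artanh}(t)\le\frac{2t}{1-t^2}\le\frac{8t}{3}.
\]
This gives $\mathrm{KL}\le t\cdot\frac{8t}{3}=\frac{8}{3}\cdot4\alpha^2=\frac{32}{3}\alpha^2\le16\alpha^2$.

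The bound $\operatorname{artanh}(t)\le t/(1-t^2)$ follows from the series: $\operatorname{artanh}(t)=\sum_k t^{2k+1}/(2k+1)\le t\sum_k t^{2k}$.

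The equivalent $\pm1$ formulation is the same computation with $\alpha=\rho/2$. Then $\rho\le 1/2$ corresponds to $\alpha\le1/4$, and $16\alpha^2=4\rho^2$.

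\textbf{Part 2.} By the chain rule (tensorization), KL is additive over product measures, so the left side equals $m\,\mathrm{KL}(N(\theta,\sigma^2I_k)\|N(\theta',\sigma^2I_k))$. The log density ratio is
\[
\frac{\|y-\theta'\|^2-\|y-\theta\|^2}{2\sigma^2}.
\]
Taking the expectation under $y\sim N(\theta,\sigma^2 I_k)$, write $y=\theta+g$ with $g$ centered. The quantity becomes
\[
\frac{\|\theta-\theta'+g\|^2-\|g\|^2}{2\sigma^2}.
\]
Its expectation is $\|\theta-\theta'\|^2/(2\sigma^2)$, because the cross term $2\langle\theta-\theta',g\rangle$ has mean zero. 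Multiplying by $m$ gives the claim.

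Neither step presents a real obstacle. The only point needing care is choosing a log bound in Part 1 that is sharp enough to reach the constant $16$: the crude $\chi^2$ route loses slightly, while the $\operatorname{artanh}$ series bound has room to spare.
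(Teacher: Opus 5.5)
Your proof is correct and follows essentially the same route as the paper. Both reduce the Bernoulli divergence to $2\alpha\log\frac{1+2\alpha}{1-2\alpha}$ and then bound $\log\frac{1+x}{1-x}$ linearly. The paper uses $\log\frac{1+x}{1-x}\le 4x$ on $[0,1/2]$, whereas you use the slightly sharper, series-justified bound $8t/3$, which gives the constant $32/3\le 16$. Part 2 is the same log-likelihood expansion followed by additivity of KL over products.
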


\begin{proof}[Proof of \Cref{lem:elementary-kl-bounds}]
For the Bernoulli bound, we use
\[
\mathrm{KL}\left(\mathrm{Bern}\left(\frac12+\alpha\right)\middle\|\mathrm{Bern}\left(\frac12-\alpha\right)\right)
=2\alpha\log\frac{1+2\alpha}{1-2\alpha}
\le 16\alpha^2,
\]
where we used $\log((1+x)/(1-x))\le 4x$ for $0\le x\le1/2$, with $x=2\alpha$. The equivalent Bernoulli form follows by setting $\alpha=\rho/2$.
For the Gaussian identity, the one-sample formula
\[
\mathrm{KL}\left(N(\theta,\sigma^2I_k)\middle\|N(\theta',\sigma^2I_k)\right)=\frac{1}{2\sigma^2}\|\theta-\theta'\|_2^2
\]
follows by expanding the log-likelihood ratio. Product additivity of KL divergence gives the stated identity for product measures.
\end{proof}

\begin{definition}\label{def:covering-number}
Let $(T,\rho)$ be a metric space, let $A\subseteq T$, and let $r>0$. A finite set $C\subseteq T$ is an $r$-cover of $A$ with respect to $\rho$ if, for every $u\in A$, there exists $v\in C$ such that $\rho(u,v)\le r$. The covering number of $A$ at scale $r$ is
\[
N(r,A,\rho):=\min\left\{|C|:\ C\subseteq T,\ C \text{ is an } r\text{-cover of } A\right\}.
\]
If no finite $r$-cover exists, we set $N(r,A,\rho)=+\infty$. When $T$ is a normed space and $\rho(u,v)=\|u-v\|$, we also write $N(r,A,\|\cdot\|)$. For the Euclidean norm, we write $N(r,A):=N(r,A,\|\cdot\|_2)$.
\end{definition}

\begin{lemma}\label{lem:volumetric-cover}
Let $K\subseteq \R^d$ be nonempty and compact, let $\|\cdot\|$ be a norm on $\R^d$, and suppose that $\|\x\|\le R$ for every $\x\in K$. Then, for every $r>0$,
\[
N(r,K,\|\cdot\|)\le \left(1+\frac{2R}{r}\right)^d.
\]
\end{lemma}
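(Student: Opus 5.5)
The plan is the standard volumetric argument, built on a maximal packing. Fix $r>0$ and let $P=\{\x_1,\ldots,\x_N\}\subseteq K$ be a maximal $r$-separated subset of $K$, that is, $\|\x_i-\x_j\|>r$ for $i\ne j$, with no point of $K$ addable. I will carry out three steps in order.

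\textbf{Step 1 (the packing is finite).} Each ball $\x_i+\frac r2 B$, where $B$ is the closed unit ball of $\|\cdot\|$, has volume $(r/2)^d\,\mathrm{vol}(B)$. These balls are pairwise disjoint in interior. All of them lie inside $(R+\frac r2)B$, since $\|\x_i\|\le R$ by hypothesis. Because $\mathrm{vol}(B)$ is finite and positive (as $B$ is a convex body, all norms on $\R^d$ being equivalent), any $r$-separated subset of $K$ has at most $\big((R+r/2)/(r/2)\big)^d=(1+2R/r)^d$ elements. Hence a maximal $r$-separated subset exists, for instance one obtained greedily, and it has cardinality $N\le(1+2R/r)^d$.

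\textbf{Step 2 (the packing is a cover).} By maximality, every $\x\in K$ satisfies $\|\x-\x_i\|\le r$ for some $i$. Otherwise $\x$ could be added to $P$, contradicting maximality. So $P\subseteq K\subseteq\R^d$ is an $r$-cover of $K$ in the sense of \Cref{def:covering-number}.

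\textbf{Step 3 (conclusion).} Combining Steps 1 and 2 gives $N(r,K,\|\cdot\|)\le|P|\le(1+2R/r)^d$.

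The only point needing care is the volume comparison with a general norm. This is routine: Lebesgue measure scales as $\mathrm{vol}(tB)=t^d\mathrm{vol}(B)$, and it is translation invariant, so the disjoint-ball count applies for any norm. Compactness of $K$ is not essential here; boundedness of $K$ already suffices.
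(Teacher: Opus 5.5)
Your proposal is correct and follows essentially the same route as the paper: bound the size of any $r$-separated subset of $K$ by comparing the volumes of the disjoint balls $\x+\frac r2 B$ with that of $(R+\frac r2)B$, then note that a maximal (equivalently, maximum-cardinality) separated set is an $r$-cover. Your side remarks, that disjointness of interiors suffices for the volume count and that boundedness of $K$ is all that is used, are accurate but do not change the argument.
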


\begin{proof}[Proof of \Cref{lem:volumetric-cover}]
Let $B:=\{\x\in\R^d:\|\x\|\le1\}$. If a finite set $P\subseteq K$ satisfies $\|\x-\y\|>r$ for every distinct $\x,\y\in P$, then the sets $\x+(r/2)B$, $\x\in P$, are pairwise disjoint and contained in $(R+r/2)B$. Hence
\[
|P|\left(\frac r2\right)^d\operatorname{vol}(B)\le \left(R+\frac r2\right)^d\operatorname{vol}(B),
\]
and so $|P|\le (1+2R/r)^d$. Thus every finite $r$-separated subset of $K$ has cardinality at most $(1+2R/r)^d$. Choose an $r$-separated subset $C\subseteq K$ of maximum cardinality. If $C$ were not an $r$-cover of $K$, a point of $K$ at distance greater than $r$ from every point of $C$ could be added to $C$, contradicting maximality. This proves the lemma.
\end{proof}

\begin{lemma}\label{lem:anchored-lipschitz-uc-vdw}
Let $K\subseteq\R^d$ be nonempty and compact, let $\|\cdot\|$ be a norm on $\R^d$, and suppose that $\0\in K$ and $\|\x\|\le R$ for every $\x\in K$, for some $R>0$. Let $f:K\times\mathcal{Z}\to\R$ be such that $f(\cdot\,;z)$ is $1$-Lipschitz with respect to $\|\cdot\|$ for every $z\in\mathcal{Z}$. Suppose that, for some $h\ge1$,
\[
N(\eta R,K,\|\cdot\|)\le \left(\frac{3}{\eta}\right)^h\qquad \forall\,0<\eta<1.
\]
Then there exists an absolute constant $C>0$ such that, for every family $\mathfrak{D}$ of distributions on $\mathcal{Z}$, every $\epsilon>0$, and every $\delta\in(0,1)$,
\[
m_K^{\mathrm{a-UC}}(\epsilon,\delta;\mathfrak{D})
\le \left\lceil C\frac{R^2}{\epsilon^2}\left(h+\log\frac{1}{\delta}\right)\right\rceil.
\]
\end{lemma}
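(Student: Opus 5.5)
We bound the anchored empirical process uniformly over $K$ by chaining with bracketing-free Lipschitz arguments. Since $\0\in K$, each anchored loss satisfies $|\bar f(\x;z)|\le\|\x\|\le R$ and $|\bar f(\x;z)-\bar f(\y;z)|\le\|\x-\y\|$. For fixed $\x,\y$, the centered average $\bar F_S(\x)-\bar F_D(\x)-(\bar F_S(\y)-\bar F_D(\y))$ is an average of $m$ i.i.d. bounded variables with range at most $2\|\x-\y\|$. By Hoeffding it is therefore sub-Gaussian with variance proxy of order $\|\x-\y\|^2/m$. Thus the process $Z_\x:=\bar F_S(\x)-\bar F_D(\x)$ has sub-Gaussian increments with respect to the metric $\|\cdot\|/\sqrt m$, and it is anchored: $Z_\0=0$.

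The plan is a standard Dudley/generic chaining tail bound.

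First, we take nets $N_k\subseteq K$ at scales $\eta_k R$ with $\eta_k=2^{-k}$, $k\ge0$, choosing $N_0=\{\0\}$, which is valid since $\|\x\|\le R$. The hypothesis gives $|N_k|\le (3\cdot 2^k)^h$. For each $\x$ we form the chain $\pi_k(\x)\in N_k$ with $\|\x-\pi_k(\x)\|\le 2^{-k}R$, so consecutive links have length at most $3\cdot2^{-k}R$. There are at most $|N_k||N_{k-1}|\le (3\cdot2^k)^{2h}$ links at level $k$.

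Second, we apply a union bound at level $k$ with deviation threshold
\[
t_k=c\,2^{-k}R\sqrt{\bigl(hk+h+\log(2^k/\delta)\bigr)/m}.
\]
The failure probability at level $k$ is then at most $\delta 2^{-k}/2$, and summing over $k$ gives total failure probability at most $\delta$.

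Third, we control the tail of the chain. Since each $Z$ is $2$-Lipschitz in $\x$ (both $\bar F_S$ and $\bar F_D$ are $1$-Lipschitz), we can truncate the chain at a finite level $k^*$ where $2^{-k^*}R\le\epsilon/4$ and pay only $\epsilon/2$. Alternatively, the chain converges pointwise, so we can let $k\to\infty$ with no truncation error.

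Fourth, we sum the thresholds. This gives
\[
\sup_\x|Z_\x|\le \sum_k t_k\lesssim R\sqrt{(h+\log(1/\delta))/m},
\]
because $\sum_k 2^{-k}\sqrt{k+1}$ converges and $\sqrt{a+b}\le\sqrt a+\sqrt b$ separates the $h(k+1)$, $k\log 2$ and $\log(1/\delta)$ contributions. Setting the right side to $\epsilon$ yields $m\ge C R^2(h+\log(1/\delta))/\epsilon^2$. Measurability of the supremum is covered by the paper's convention; alternatively, the supremum over $K$ equals the supremum over the countable union of nets by continuity.

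The main obstacle is bookkeeping: obtaining the additive $h+\log(1/\delta)$ rather than a multiplicative $h\log(1/\delta)$ or an extra $\log(R/\epsilon)$ factor. Two choices handle this. Putting $\log(2^k/\delta)$ inside the level-$k$ threshold makes the confidence cost decay geometrically with the $2^{-k}$ scale weight. The anchoring $N_0=\{\0\}$ makes level $0$ contribute nothing, and it is exactly what removes any dependence on the constant offsets $f(\0;z)$, which could otherwise be unbounded.
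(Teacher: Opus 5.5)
Your argument is correct, and it takes a different route from the paper.

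The paper's route goes through bracketing. It rescales the anchored losses to $g_\x=(\bar f(\x;\cdot)+R)/(2R)\in[0,1]$. It then turns an $\alpha R$-cover of $K$ into $L_2(D)$-brackets of width $\alpha$, using that $f(\cdot\,;z)$ is $1$-Lipschitz uniformly in $z$. This gives $N_{[]}(\alpha,\mathcal{G}_K,L_2(D))\le(3/\alpha)^h$. Finally it quotes the exponential tail bound of van der Vaart--Wellner (Theorem~2.14.9) for $[0,1]$-valued classes with polynomial bracketing numbers, and the additive $h+\log(1/\delta)$ comes out of that black box.

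You prove the tail bound directly instead. Hoeffding gives sub-Gaussian increments in the metric $\|\cdot\|/\sqrt m$. You chain from the anchor $N_0=\{\0\}$ and spend confidence $\delta 2^{-k}$ at level $k$, so the $\log(1/\delta)$ cost is damped by the geometric scale $2^{-k}$. You pass to the limit using that $Z$ is $2$-Lipschitz for every sample, which also settles measurability via a countable dense set.

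What each route buys:
\begin{itemize}
\item Your argument is elementary and self-contained. It uses only anchoring and the sub-Gaussian increment property, so it carries over verbatim to the increment condition of \Cref{sec:smooth-strongly-convex}, whereas Theorem~2.14.9 needs a uniformly bounded class.
\item Your argument is essentially a hand-rolled instance of the paper's own \Cref{lem:subgaussian-chaining} with $\rho(\x,\y)=\|\x-\y\|/\sqrt m$. There the Dudley integral is $\int_0^{R/\sqrt m}\sqrt{h\log(3R/(\varepsilon\sqrt m))}\,d\varepsilon\lesssim R\sqrt{h/m}$. So this lemma and the upper bound in \Cref{thm:sc-anchored-uc} could share one proof.
\item The paper's route buys brevity at the price of importing a heavier empirical-process theorem.
\end{itemize}

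One small bookkeeping point. The hypothesis bounds covering numbers whose centers may lie outside $K$, while your chain must stay in $K$ because $Z$ is only defined there. Use a cover at scale $2^{-k-1}R$ and move each center to a point of $K$ within that distance; this only changes constants. The paper's brackets centered at $g_\y$, $\y\in C_\alpha$, tacitly need the same adjustment.
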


\begin{proof}[Proof of \Cref{lem:anchored-lipschitz-uc-vdw}]
Fix $D\in\mathfrak{D}$. For $\x\in K$, define
\[
\ell_\x(z):=f(\x;z)-f(\0;z),\qquad g_\x(z):=\frac{\ell_\x(z)+R}{2R}.
\]
Then $0\le g_\x\le1$ and
\[
|g_\x(z)-g_\y(z)|\le \frac{\|\x-\y\|}{2R}\qquad \forall \x,\y\in K,\ z\in\mathcal{Z}.
\]
Let $\G_K:=\{g_\x:\x\in K\}$. If $C_\alpha$ is an $\alpha R$-cover of $K$, then the brackets
\[
\left[\max\{g_\y-\alpha/2,0\},\ \min\{g_\y+\alpha/2,1\}\right],\qquad \y\in C_\alpha,
\]
cover $\G_K$ and have $L_2(D)$-size at most $\alpha$. Hence, for every $0<\alpha<1$, we can bound the bracketing number (\cite[Definition 2.1.6]{van1996weak}) as follows:
\[
N_{[]}(\alpha,\G_K,L_2(D))\le N(\alpha R,K,\|\cdot\|)
\le \left(\frac{3}{\alpha}\right)^h.
\]
Let $P_m$ be the empirical measure of $S$, so that $P_m g=\frac{1}{m}\sum_{i=1}^m g(z_i)$ for every measurable $g$, and set $\mathbb G_m:=\sqrt m(P_m-D)$. By \cite[Theorem~2.14.9]{van1996weak}, applied with the bracketing bound above, there is an absolute constant $C_1>0$ such that
\[
\Prob\left[\|\mathbb G_m\|_{\G_K}>C_1\sqrt{h+\log\frac{1}{\delta}}\right]\le \delta.
\]
Thus, with probability at least $1-\delta$,
\[
\sup_{\x\in K}|(P_m-D)g_\x|\le C_1\sqrt{\frac{h+\log(1/\delta)}{m}}.
\]
Since
\[
\sup_{\x\in K}|\bar F_S(\x)-\bar F_D(\x)|
=2R\sup_{\x\in K}|(P_m-D)g_\x|,
\]
the desired anchored UC bound holds whenever
\[
m\ge 4C_1^2\frac{R^2}{\epsilon^2}\left(h+\log\frac{1}{\delta}\right).
\]
\end{proof}

\section{Proofs for 1-Lipschitz stochastic optimization over \texorpdfstring{$B_R^{(\infty)}$}{B infinity}}
\label[appendix]{app:linfty-proofs}

\subsection{Proof of \Cref{thm:upper-bound}}

By \Cref{lem:volumetric-cover}, for every $0<\eta<1$,
\[
N(\eta R,B_R^{(\infty)},\|\cdot\|_\infty)\le \left(1+\frac{2}{\eta}\right)^d\le \left(\frac{3}{\eta}\right)^d.
\]
\Cref{lem:anchored-lipschitz-uc-vdw}, applied with $K=B_R^{(\infty)}$, $\|\cdot\|=\|\cdot\|_\infty$, and $h=d$, gives the stated upper bound.

It remains to prove the lower bound. We prove it with $c_0=1/8$ and $c=1/1024$. Let $\mathcal{Z}:=[d]\times \{\pm 1\}$ and define
\[
f(\x;(j,k)):=-k x_j,\qquad \x\in B_R^{(\infty)},\ (j,k)\in \mathcal{Z}.
\]
For $\b\in\{\pm 1\}^d$ and $0<\rho\le 1/2$, let $D^{\mathrm{dim}}_{\b,\rho}$ be the distribution on $\mathcal{Z}$ defined by
\[
D^{\mathrm{dim}}_{\b,\rho}(j,k):=\frac{1}{d}\cdot \frac{1+\rho kb_j}{2} \qquad \forall j\in[d],\ k\in\{\pm 1\}.
\]
For $b\in\{\pm 1\}$ and $0<\rho\le 1/2$, let $D^{\mathrm{conf}}_{b,\rho}$ be the distribution on $\mathcal{Z}$ defined by
\[
D^{\mathrm{conf}}_{b,\rho}(1,k):=\frac{1+\rho kb}{2} \qquad \forall k\in\{\pm 1\}.
\]
All other pairs have probability zero. Set
\[
\mathfrak{D}^{\mathrm{dim}}_{R}:=\left\{D^{\mathrm{dim}}_{\b,\rho}:\b\in\{\pm 1\}^d,\ 0<\rho\le 1/2\right\},
\]
\[
\mathfrak{D}^{\mathrm{conf}}_{R}:=\left\{D^{\mathrm{conf}}_{b,\rho}:b\in\{\pm 1\},\ 0<\rho\le 1/2\right\},
\]
and $\mathfrak{D}^{\mathrm{lin}}_{R}:=\mathfrak{D}^{\mathrm{dim}}_{R}\cup \mathfrak{D}^{\mathrm{conf}}_{R}$.
For every $(j,k)\in \mathcal{Z}$, the map $\x\mapsto f(\x;(j,k))$ is linear and $1$-Lipschitz with respect to $\|\cdot\|_\infty$ on $B_R^{(\infty)}$.

We prove the dimension and confidence terms separately. Both parts reduce optimization to biased-coin hypothesis testing.

\smallskip
\noindent\emph{Dimension term.}
It suffices to prove the claim for $\delta=1/4$, since the confidence requirement becomes stronger as $\delta$ decreases. We average over a uniformly random sign vector and show that success forces large average correlation with this vector, while the sparse coordinate observations bound this correlation by a testing argument. We show that no learning rule with
\[
m\le \frac{R^2d}{512\epsilon^2}
\]
succeeds with probability at least $3/4$ under every distribution in $\mathfrak{D}^{\mathrm{dim}}_{R}$. If no positive integer satisfies this bound, there is nothing to prove. Otherwise, fix such an integer $m$, set $\rho:=4\epsilon/R\le 1/2$, and let $A_m:\mathcal{Z}^m\to B_R^{(\infty)}$ be any learning rule. For $\b\in\{\pm 1\}^d$, let $F_{\b}$ denote the population objective corresponding to $D^{\mathrm{dim}}_{\b,\rho}$. Then
\begin{align*}
F_{\b}(\x)&= -\sum_{j=1}^d\sum_{k\in\{\pm1\}} D^{\mathrm{dim}}_{\b,\rho}(j,k) kx_j \\
&= -\sum_{j=1}^d \frac{1}{d}x_j\left(1\cdot \frac{1+\rho b_j}{2}+(-1)\cdot \frac{1-\rho b_j}{2}\right) \\
&= -\frac{1}{d}\sum_{j=1}^d \rho b_jx_j \\
&= -\frac{\rho}{d}\langle \b, \x \rangle \qquad \forall \x\in B_R^{(\infty)}.
\end{align*}
Thus, $F_{\b}$ is minimized over $B_R^{(\infty)}$ at $\x_{D^{\mathrm{dim}}_{\b,\rho}}^\star=R\b$, and
\[
\min_{\x\in B_R^{(\infty)}}F_{\b}(\x)=-\rho R.
\]
Therefore,
\[
F_{\b}(\x)-\min_{\x\in B_R^{(\infty)}}F_{\b}(\x)=\rho R-\frac{\rho}{d}\langle \b,\x\rangle \qquad \forall \x\in B_R^{(\infty)}.
\]
Let $B$ be uniform on $\{\pm 1\}^d$. Conditional on $B=\b$, let
\[
S=((J_1,K_1),\dots,(J_m,K_m))\sim (D^{\mathrm{dim}}_{\b,\rho})^m,
\]
and define the normalized output vector
\[
Y:=\frac{A_m(S)}{R}\in[-1,1]^d.
\]
Write $Y_j$ for the $j$th coordinate of $Y$. Then
\[
F_{B}(A_m(S))-\min_{\x\in B_R^{(\infty)}}F_{B}(\x)=\rho R\left(1-\frac{1}{d}\langle B,Y\rangle\right).
\]
Fix $j\in[d]$, and define
\[
I_j:=\left\{i\in[m]:\ J_i=j\right\}, \qquad N_j:=|I_j|.
\]
Let
\[
T_j:=\left((B_\ell)_{\ell\ne j},\ (J_i)_{i\in[m]},\ (K_i)_{i\notin I_j}\right).
\]
Condition on $T_j$. Then $I_j$ and $N_j$ are fixed, the only remaining randomness lies in $B_j$ and $(K_i)_{i\in I_j}$, and $B_j$ remains uniform on $\{\pm 1\}$.
Moreover,
\[
(K_i)_{i\in I_j}\mid (T_j,B_j=1)\sim P_+^{\otimes N_j}, \qquad (K_i)_{i\in I_j}\mid (T_j,B_j=-1)\sim P_-^{\otimes N_j},
\]
where
\[
P_+(s)=\frac{1+\rho s}{2}, \qquad P_-(s)=\frac{1-\rho s}{2} \qquad \forall s\in\{\pm 1\}.
\]
Using total variation and Pinsker's inequality,
\begin{align*}
\E[B_jY_j\mid T_j]
&= \frac{1}{2}\left(\E[Y_j\mid T_j,B_j=1]-\E[Y_j\mid T_j,B_j=-1]\right) \\
&= \frac{1}{2}\left(\E_{P_+^{\otimes N_j}}[Y_j]-\E_{P_-^{\otimes N_j}}[Y_j]\right) \\
&\le \frac{1}{2}\left|\E_{P_+^{\otimes N_j}}[Y_j]-\E_{P_-^{\otimes N_j}}[Y_j]\right| \\
&\le \mathrm{TV}\left(P_+^{\otimes N_j},P_-^{\otimes N_j}\right) \\
&\le \sqrt{\frac{1}{2}\mathrm{KL}\left(P_+^{\otimes N_j}\|P_-^{\otimes N_j}\right)} \\
&\le \rho\sqrt{2N_j},
\end{align*}
where the last inequality uses product additivity of KL divergence and \Cref{lem:bernoulli-kl}. Taking total expectations and using Jensen's inequality gives
\[
\E[B_jY_j]= \E\left[\E[B_jY_j\mid T_j]\right]\le \rho\E[\sqrt{2N_j}]\le \rho\sqrt{2\E[N_j]}= \rho\sqrt{\frac{2m}{d}},
\]
where the last equality uses $\E[N_j]=\sum_{i=1}^m\Prob(J_i=j)=m/d$. Summing over $j$ gives
\[
\E\left[\frac{1}{d}\langle B,Y\rangle\right]=\frac{1}{d}\sum_{j=1}^d\E[B_jY_j]\le \rho\sqrt{\frac{2m}{d}}\le \frac{1}{4},
\]
where the last inequality follows from $m\le R^2d/(512\epsilon^2)$ and $\rho=4\epsilon/R$. On the other hand, if $A_m$ succeeded with probability at least $3/4$ under every distribution in $\mathfrak{D}^{\mathrm{dim}}_{R}$, then
\[
\Prob\left(F_{B}(A_m(S))-\min_{\x\in B_R^{(\infty)}}F_{B}(\x)\le \epsilon\right)\ge \frac{3}{4}.
\]
On this event,
\[
\frac{1}{d}\langle B,Y\rangle \ge 1-\frac{\epsilon}{\rho R}=\frac{3}{4},
\]
since $\rho=4\epsilon/R$. Since $\frac{1}{d}\langle B,Y\rangle\in[-1,1]$, this gives
\[
\E\left[\frac{1}{d}\langle B,Y\rangle\right]\ge \frac{3}{4}\cdot \frac{3}{4}+\frac{1}{4}\cdot (-1)=\frac{5}{16},
\]
contradicting the bound $\E[d^{-1}\langle B,Y\rangle]\le 1/4$ established above. Thus
\[
m_{B_R^{(\infty)}}^\star(\epsilon,1/4;\mathfrak{D}^{\mathrm{dim}}_{R})\ge \frac{R^2d}{512\epsilon^2}.
\]
By monotonicity in the confidence parameter, the same dimension lower bound holds for every $0<\delta\le1/4$.

\smallskip
\noindent\emph{Confidence term.}
We next isolate the one-coordinate coin test that yields the confidence term.
Set $\rho:=2\epsilon/R\le 1/4$. Let $A_m:\mathcal{Z}^m\to B_R^{(\infty)}$ be any learning rule. If $F_b$ denotes the population objective corresponding to $D^{\mathrm{conf}}_{b,\rho}$, then
\[
F_b(\x)=-\rho bx_1 \qquad \forall \x\in B_R^{(\infty)},
\]
so
\[
\min_{\x\in B_R^{(\infty)}}F_b(\x)=-\rho R.
\]
Define
\[
\widehat b(S):=\begin{cases}
+1, & [A_m(S)]_1>0,\\
-1, & [A_m(S)]_1\le 0.
\end{cases}
\]
If
\[
F_b(A_m(S))-\min_{\x\in B_R^{(\infty)}}F_b(\x)\le \epsilon,
\]
then
\[
\rho\left(R-b[A_m(S)]_1\right)\le \epsilon,
\]
which implies $b[A_m(S)]_1\ge R/2>0$, and hence $\widehat b(S)=b$. Therefore, if $A_m$ succeeded with probability at least $1-\delta$ under both $D^{\mathrm{conf}}_{+1,\rho}$ and $D^{\mathrm{conf}}_{-1,\rho}$, then for
\[
E:=\{\widehat b(S)=+1\},
\]
we would have
\[
P_+^{\otimes m}(E)\ge 1-\delta, \qquad P_-^{\otimes m}(E)\le \delta,
\]
where $P_+$ and $P_-$ denote the single-sample distributions $D^{\mathrm{conf}}_{+1,\rho}$ and $D^{\mathrm{conf}}_{-1,\rho}$. Since \Cref{lem:bernoulli-kl} gives
\[
\mathrm{KL}(P_+\|P_-)\le 4\rho^2,
\]
additivity of KL divergence under products and \Cref{lem:testing-inequalities} imply
\[
4m\rho^2\ge m\,\mathrm{KL}(P_+\|P_-)
=\mathrm{KL}(P_+^{\otimes m}\|P_-^{\otimes m})
\ge \frac12\log\frac{1}{2\delta}.
\]
Therefore,
\[
m\ge \frac{1}{8\rho^2}\log\left(\frac{1}{2\delta}\right)=\frac{R^2}{32\epsilon^2}\log\left(\frac{1}{2\delta}\right),
\]
where the equality uses $\rho=2\epsilon/R$. Hence
\[
m_{B_R^{(\infty)}}^\star(\epsilon,\delta;\mathfrak{D}^{\mathrm{conf}}_{R})\ge \frac{R^2}{32\epsilon^2}\log\frac{1}{2\delta}.
\]

By the inclusions $\mathfrak{D}^{\mathrm{dim}}_{R}\subseteq \mathfrak{D}^{\mathrm{lin}}_{R}$ and $\mathfrak{D}^{\mathrm{conf}}_{R}\subseteq \mathfrak{D}^{\mathrm{lin}}_{R}$,
\[
m_{B_R^{(\infty)}}^\star(\epsilon,\delta;\mathfrak{D}^{\mathrm{lin}}_{R})\ge \max\left\{\frac{R^2d}{512\epsilon^2},
\frac{R^2}{32\epsilon^2}\log\frac{1}{2\delta}\right\}.
\]
Since $\delta\le 1/4$, we have $\log(1/(2\delta))\ge \frac{1}{2}\log(1/\delta)$. Therefore,
\[
m_{B_R^{(\infty)}}^\star(\epsilon,\delta;\mathfrak{D}^{\mathrm{lin}}_{R})\ge \max\left\{\frac{R^2d}{512\epsilon^2},
\frac{R^2}{64\epsilon^2}\log\frac{1}{\delta}\right\}\ge \frac{R^2}{1024\epsilon^2}\left(d+\log\frac{1}{\delta}\right),
\]
where the last inequality uses $\max\{u,v\}\ge (u+v)/2$.

\section{Proofs for 1-Lipschitz stochastic optimization over \texorpdfstring{$X_R^{(2)}$}{X two}}
\label[appendix]{app:l2-proofs}

\subsection{Packing and tent lower bounds}

\begin{lemma}\label{lem:sparse-sign-packing}
There exists an absolute constant $c>0$ such that, for every $d\ge1$ and every integer $1\le s\le d$, one can find a set $\U\subseteq\{-1,0,1\}^d$ with $|\U|\ge2$ such that
\[
\|\u\|_2^2=s\quad \forall \u\in\U,\quad
\langle \u,\u'\rangle\le s/2\quad \forall \u,\u'\in\U,\ \u\ne\u',\quad
\text{and } \log|\U|\ge c s\log\frac{ed}{s}.
\]
\end{lemma}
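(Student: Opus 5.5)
We will prove the lemma with a Gilbert--Varshamov style greedy (or probabilistic) construction inside the set $\mathcal{S}$ of all vectors in $\{-1,0,1\}^d$ with exactly $s$ nonzero entries. Every $\u\in\mathcal{S}$ has $\|\u\|_2^2=s$, and $|\mathcal{S}|=\binom{d}{s}2^s$. For $\u,\u'\in\mathcal{S}$, the inner product $\langle\u,\u'\rangle$ only involves the common support $T=\mathrm{supp}(\u)\cap\mathrm{supp}(\u')$. It satisfies $\langle\u,\u'\rangle=|T|-2\,\#\{i\in T:u_i\ne u'_i\}$. So the constraint $\langle\u,\u'\rangle\le s/2$ fails only when the two vectors agree in sign on many common coordinates.

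The key step is to bound, for a fixed $\u\in\mathcal{S}$, the size of the ``bad ball''
\[
\mathcal{N}(\u):=\{\u'\in\mathcal{S}:\langle\u,\u'\rangle> s/2\}.
\]
If $\u'\in\mathcal{N}(\u)$, then $\u'$ agrees with $\u$ (same index and same sign) on at least $k:=\lceil s/2\rceil$ coordinates. We can therefore specify $\u'$ by choosing $k$ agreeing coordinates from the support of $\u$, and then choosing the remaining $s-k$ coordinates of its support and their signs arbitrarily. This gives
\[
|\mathcal{N}(\u)|\le \binom{s}{k}\binom{d}{s-k}2^{s-k}.
\]
A greedy maximal set $\U\subseteq\mathcal{S}$, in which no two elements are bad for each other, then satisfies
\[
|\U|\ge |\mathcal{S}|/(1+\max_\u|\mathcal{N}(\u)|).
\]
Using $(d/s)^s\le\binom{d}{s}\le (ed/s)^s$ and $\binom{s}{k}\le 2^s$, the ratio is at least of order
\[
\frac{(d/s)^s\,2^s}{2^s\,(ed/(s-k))^{s-k}\,2^{s-k}} .
\]
Since $s-k\le s/2$, the denominator contributes roughly $(2ed/s)^{s/2}$ (one must check the monotonicity of $t\mapsto(ed/t)^t$ for $t\le d$). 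This yields
\[
\log|\U|\ge \tfrac{s}{2}\log(d/s)-O(s).
\]

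The main obstacle is that the last bound is only $\gtrsim s\log(ed/s)$ when $d/s$ exceeds a large constant. In the regime $d\le C s$ the $O(s)$ loss can swamp the $\log(d/s)$ gain. We handle this regime separately: there $s\log(ed/s)=\Theta(s)$, so it suffices to get $\log|\U|\ge cs$. For that, fix a single support $[s]$ and use random sign vectors in $\{\pm1\}^s$ padded by zeros. By Hoeffding, $\Prob[\langle\u,\u'\rangle>s/2]\le e^{-s/8}$, so the standard Gilbert--Varshamov count gives a family of size $e^{s/16}$ with pairwise inner products at most $s/2$.

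It remains to ensure $|\U|\ge2$ for small $s$. If $s=1$ and $d\ge2$, take $\{e_1,e_2\}$. If $s=d=1$, take $\{1,-1\}$, whose inner product is $-1\le 1/2$. More generally, the pair $\{\u,-\u\}$ always works, since $\langle\u,-\u\rangle=-s$. Shrinking the constant $c$ then absorbs all small cases, because $s\log(ed/s)\le s\log(ed)$ is bounded relative to $\log 2$ only when $s$ and $d$ are bounded; otherwise the main construction already applies. Taking the better of the two constructions in each regime finishes the proof with an absolute $c$.
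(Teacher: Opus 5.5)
Your proposal is correct and follows essentially the paper's argument: the same split into a sparse regime ($d/s$ above a large constant), where you bound the vectors sharing at least $\lceil s/2\rceil$ coordinates with the support of a given vector, and a dense regime ($d\lesssim s$), where Hoeffding on random signs gives $\log|\mathcal{U}|\gtrsim s\asymp s\log(ed/s)$, with the antipodal pair covering small cases. The differences are cosmetic: you use a greedy maximal (Gilbert--Varshamov) packing instead of sampling $M$ random vectors and union-bounding over pairs, and in the dense regime you fix the support instead of randomizing it.
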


\begin{proof}[Proof of \Cref{lem:sparse-sign-packing}]
Sample a random vector $\u\in\{-1,0,1\}^d$ by choosing $S=\{i\in[d]:u_i\ne0\}$ uniformly among all subsets of $[d]$ of size $s$, and then choosing independent signs on $S$. Let $\u'$ be an independent copy, set $S'=\{i\in[d]:u'_i\ne0\}$, and let $L:=|S\cap S'|$. We bound
\[
\Prob\left[\langle \u,\u'\rangle>\frac{s}{2}\right].
\]
First suppose that $d\le 4es$. Conditional on $S,S'$, the inner product is a sum of $L\le s$ independent signs. If $L=0$, the probability above is zero. Otherwise Hoeffding's inequality gives
\[
\Prob\left[\langle \u,\u'\rangle>\frac{s}{2}\right]\le \exp(-c_1s)\le \exp\left(-c_2s\log\frac{ed}{s}\right),
\]
for absolute constants $c_1,c_2>0$.

It remains to consider the case $d>4es$. The event $\langle \u,\u'\rangle>s/2$ implies $L>s/2$. Let $r_0:=\lceil s/2\rceil$. Conditioning on $S$ and union bounding over all $r_0$-subsets of $S$,
\[
\Prob[L\ge r_0]\le \binom{s}{r_0}\frac{\binom{d-r_0}{s-r_0}}{\binom{d}{s}}\le \binom{s}{r_0}\left(\frac{s}{d}\right)^{r_0}\le \left(\frac{es}{r_0}\cdot\frac{s}{d}\right)^{r_0}.
\]
Since $d>4es$, we have $2es/d<1/2$, and hence
\[
\Prob\left[L>\frac{s}{2}\right]\le \left(\frac{2es}{d}\right)^{s/2}\le \exp\left(-c_3s\log\frac{ed}{s}\right)
\]
for an absolute constant $c_3>0$. Set $c:=\frac18\min\{c_2,c_3\}$. The two cases give
\[
\Prob\left[\langle \u,\u'\rangle>\frac{s}{2}\right]\le \exp\left(-8c s\log\frac{ed}{s}\right).
\]

Let
\[
M:=\max\left\{2,\left\lfloor\exp\left(2c s\log\frac{ed}{s}\right)\right\rfloor\right\}.
\]
If $\u_1,\ldots,\u_M$ are sampled independently as above, then
\[
\Prob\left( \bigcup_{1\le i<j\le M} \left\{\langle \u_i,\u_j\rangle>\frac{s}{2}\right\} \right) \le \binom{M}{2}\exp\left(-8c s\log\frac{ed}{s}\right).
\]
If $M=2$, the right-hand side is less than one. If $M\ge3$, then $M\le \exp(2c s\log(ed/s))$, and therefore
\[
\binom{M}{2}\exp\left(-8c s\log\frac{ed}{s}\right)\le \frac12\exp\left(-4c s\log\frac{ed}{s}\right)<1.
\]
Thus there is a realization with all pairwise inner products at most $s/2$. Let $\U$ be the resulting set of $M$ vectors. It remains to check its size. If $\exp(c s\log(ed/s))\le2$, then $M\ge\exp(c s\log(ed/s))$. Otherwise,
\[
\left\lfloor\exp\left(2c s\log\frac{ed}{s}\right)\right\rfloor\ge \exp\left(2c s\log\frac{ed}{s}\right)-1\ge \exp\left(c s\log\frac{ed}{s}\right).
\]
This proves the lemma.
\end{proof}

\begin{lemma}\label{lem:l2-integer-packing}
There exists an absolute constant $c>0$ such that, for every $d\ge 1$ and $R\ge 1$, one can find a set $\W\subseteq X_R^{(2)}$ with $|\W|\ge2$ and a radius $r\in[R/2,R]$ such that
\[
\|\w\|_2=r\quad \forall \w\in\W,\quad
\langle \w,\w'\rangle\le r^2/2\quad \forall \w,\w'\in\W,\ \w\ne\w',\quad
\text{and } \log |\W|\ge c H_2(d,R).
\]
In particular, $\|\w-\w'\|_2^2\ge r^2$ for distinct $\w,\w'\in\W$.
\end{lemma}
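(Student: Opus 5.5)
The plan is to obtain $\W$ by scaling a set produced by \Cref{lem:sparse-sign-packing}, after choosing the sparsity level $s$ and an integer scale $t$ appropriately. We split into the case $R^2<d$ (more precisely $\lfloor R^2\rfloor<d$) and the case $\lfloor R^2\rfloor\ge d$.

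\emph{Case $\lfloor R^2\rfloor< d$.} Take $s:=\lfloor R^2\rfloor\ge1$ and $t=1$. Apply \Cref{lem:sparse-sign-packing} with this $s$ and let $\W:=\U$. Every $\u\in\U$ is an integer vector with $\|\u\|_2=\sqrt{s}\le R$, so $\W\subseteq X_R^{(2)}$. Set $r:=\sqrt s$. Since $R\ge1$, we have $s\ge R^2/2$: indeed $\lfloor x\rfloor\ge x/2$ for $x\ge1$. Hence $r\in[R/\sqrt2,R]\subseteq[R/2,R]$. The inner product condition $\langle\u,\u'\rangle\le s/2=r^2/2$ is given by the lemma. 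The size bound $\log|\W|\ge c\,s\log(ed/s)=cH_2(d,R)$ is also exactly what the lemma provides.

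\emph{Case $\lfloor R^2\rfloor\ge d$.} Here $H_2(d,R)=d\log e=d$. Take $s:=d$, so that $\U\subseteq\{\pm1\}^d$ with $\|\u\|_2=\sqrt d$, pairwise inner products at most $d/2$, and $\log|\U|\ge c\,d$. Scale by the integer $t:=\lfloor R/\sqrt d\rfloor\ge1$, which is at least $1$ because $R^2\ge d$. Set $\W:=t\U\subseteq\Z^d$ and $r:=t\sqrt d$. Then $r\le R$, and since $t\ge \max\{1,R/\sqrt d-1\}\ge R/(2\sqrt d)$ (check the two regimes $R/\sqrt d<2$ and $R/\sqrt d\ge 2$), we get $r\ge R/2$. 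Inner products scale by $t^2$, giving $\langle\w,\w'\rangle\le t^2d/2=r^2/2$. The cardinality is unchanged. Taking the smaller of the two lemma constants handles both cases.

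Finally, the ``in particular'' claim follows from $\|\w-\w'\|_2^2=2r^2-2\langle\w,\w'\rangle\ge r^2$. The requirement $|\W|\ge2$ is inherited from \Cref{lem:sparse-sign-packing}.

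The only delicate point is the second case. A naive scaling by $R/\sqrt d$ would leave the lattice, so one must check that the integer scaling $t$ still keeps $r$ within the factor-$2$ window $[R/2,R]$; the bound $\lfloor x\rfloor\ge x/2$ for $x\ge1$ resolves this. Everything else is direct bookkeeping on top of \Cref{lem:sparse-sign-packing}.
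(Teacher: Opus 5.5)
Your proposal is correct and matches the paper's proof almost verbatim: both apply \Cref{lem:sparse-sign-packing} with $s=\min\{d,\lfloor R^2\rfloor\}$, take $\W=\U$ and $r=\sqrt s$ in the sparse case, and otherwise scale by the integer $\lfloor R/\sqrt d\rfloor$ with $r=\lfloor R/\sqrt d\rfloor\sqrt d$, verifying $r\ge R/2$ through the same two regimes $R/\sqrt d<2$ and $R/\sqrt d\ge 2$. The only difference is cosmetic: the boundary case $\lfloor R^2\rfloor=d$ falls in your second case rather than the paper's first, but there the scale factor is $1$, so the two constructions coincide.
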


\begin{proof}[Proof of \Cref{lem:l2-integer-packing}]
Set $s:=\min\{d,\lfloor R^2\rfloor\}$ and apply \Cref{lem:sparse-sign-packing}. If $s=\lfloor R^2\rfloor$, set $\W:=\U$ and $r:=\sqrt{s}$. Then $\W\subseteq X_R^{(2)}$, and since $R\ge1$, $\frac{R}{2}\le r\le R$. Otherwise $s=d<\lfloor R^2\rfloor$. Set
\[
q:=\left\lfloor \frac{R}{\sqrt d}\right\rfloor,\qquad \W:=\{q\u:\ \u\in\U\},\qquad r:=q\sqrt d.
\]
Since $d=s<\lfloor R^2\rfloor$, we have $R/\sqrt d>1$, and hence $r\le R$. If $R/\sqrt d<2$, then $q=1$ and $r=\sqrt d>R/2$. If $R/\sqrt d\ge2$, then $q\ge R/(2\sqrt d)$, so again $r\ge R/2$. Thus $r\in[R/2,R]$. Also $\W\subseteq X_R^{(2)}$, since $q\u\in\Z^d$ and $\|q\u\|_2=r\le R$ for every $\u\in\U$. In both cases, every $\w\in\W$ has norm $r$, and for distinct $\w,\w'\in\W$,
\[
\langle \w,\w'\rangle \le \frac{r^2}{2}.
\]
Finally
\[
\log|\W|=\log|\U|\ge c s\log\frac{ed}{s}=c H_2(d,R).
\]
This proves the lemma.
\end{proof}

\begin{lemma}\label{lem:tent-lower-from-packing}
Let $R>0$ and let $X\subseteq B_R^{(2)}$. Suppose that there are $r>0$ and a finite set $\W\subseteq X$ with $|\W|\ge2$ such that
\[
\|\w\|_2=r\quad \forall \w\in\W,\qquad
\|\w-\w'\|_2\ge r\quad \forall \w,\w'\in\W,\ \w\ne\w'.
\]
Then there are a finite sample space $\mathcal{Z}$, an anchored $1$-Lipschitz loss map $f:B_R^{(2)}\times\mathcal{Z}\to\R$, and a family $\mathfrak{D}$ of probability distributions on $\mathcal{Z}$ such that, for every $0<\epsilon\le r/16$ and every $0<\delta\le1/4$,
\[
m_X^\star(\epsilon,\delta;\mathfrak{D})\ge c\frac{r^2}{\epsilon^2}\left(\log|\W|+\log\frac{1}{\delta}\right),
\]
where $c>0$ is an absolute constant.
\end{lemma}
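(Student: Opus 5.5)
We build tents around the packing points and reduce optimization to identifying a hidden center; this follows the sketch after \Cref{thm:l2-continuous-nonconvex-upper-lower}.

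\textbf{Construction.} For $\w\in\W$ define the tent
\[
\phi_\w(\x):=\max\{0,\ r/4-\|\x-\w\|_2\}.
\]
It is $1$-Lipschitz and has height $r/4$. Its support is the ball of radius $r/4$ around $\w$. Since $\|\w-\w'\|_2\ge r$, supports of distinct tents are disjoint, and $\phi_\w(\0)=0$ because $\|\w\|_2=r$. Let $\mathcal{Z}:=\W\times\{\pm1\}$ and
\[
f(\x;(\w,k)):=-k\,\phi_\w(\x).
\]
This loss is anchored and $1$-Lipschitz.

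\textbf{Dimension-term family.} For a hidden $\u\in\W$ and bias $\rho\in(0,1/2]$, let $D_{\u,\rho}$ choose $\w$ uniformly from $\W$. It then draws $k$ with $\Prob[k=1]=(1+\rho)/2$ if $\w=\u$, and with $\Prob[k=1]=1/2$ otherwise. The population objective is
\[
F_\u=-(\rho/|\W|)\,\phi_\u.
\]
Its optimum value is $-\rho r/(4|\W|)$, attained at $\u\in X$. That scaling would give bias $\epsilon|\W|/r$ and the wrong rate.

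So for the dimension term I would instead use a baseline where $k$ is drawn with $\Prob[k=1]=(1-\rho)/2$ for every $\w\ne\u$, and $(1+\rho)/2$ for $\w=\u$. Then $F_\u=(\rho/|\W|)\sum_{\w\ne\u}\phi_\w-(\rho/|\W|)\phi_\u$. This still suffers the $1/|\W|$ factor.

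The cleaner fix is to drop the uniform choice of index. Each sample reveals all $|\W|$ signs at once: $z=(k_\w)_{\w\in\W}\in\{\pm1\}^{\W}$ and
\[
f(\x;z):=-\sum_\w k_\w\phi_\w(\x).
\]
This is still $1$-Lipschitz, because the tents have disjoint supports and only one tent is active at each $\x$. Take $k_\u\sim$ Rademacher with mean $\rho$, and $k_\w$ Rademacher with mean $0$ for $\w\ne\u$, all independent. Then $F_\u=-\rho\phi_\u$, with minimum $-\rho r/4$ at $\u$. Every other point of $X$ has value at least $-\rho\phi_\u(\x)$.

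Set $\rho:=8\epsilon/r\le1/2$. Any $\epsilon$-optimal output $\x$ then satisfies $\phi_\u(\x)\ge r/8>0$, so $\x$ lies in the support of $\phi_\u$. Because the supports are disjoint, the decoder ``the unique $\w$ whose tent contains $A_m(S)$'' recovers $\u$.

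\textbf{Fano step.} Take $Q$ to be the all-fair distribution. Then $\mathrm{KL}(D_{\u,\rho}^{\otimes m}\|Q^{\otimes m})=m\,\mathrm{KL}(\mathrm{Rad}(\rho)\|\mathrm{Rad}(0))\le Cm\rho^2$, in the style of \Cref{lem:bernoulli-kl}. By \Cref{lem:testing-inequalities}, success probability at least $3/4$ for every $\u$ forces
\[
1/4\ge 1-(Cm\rho^2+\log2)/\log|\W|,
\]
i.e.\ $m\gtrsim \log|\W|/\rho^2\asymp r^2\log|\W|/\epsilon^2$. This holds once $\log|\W|\ge$ some constant. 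If $|\W|$ is small, $\log|\W|$ is bounded and is absorbed by the confidence term. Monotonicity in $\delta$ then extends the bound to all $\delta\le1/4$.

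\textbf{Confidence term.} Fix two points $\w_1\ne\w_2$. Consider the two distributions where $k_{\w_1}$ has mean $\rho$ or $-\rho$, every other sign is fair, and $\rho:=8\epsilon/r$.

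When the mean is $+\rho$, any $\epsilon$-optimal output lies in the tent at $\w_1$. When the mean is $-\rho$, $F=\rho\phi_{\w_1}$ has minimum $0$, attained at $\w_2\in X$. An $\epsilon$-optimal output then has $\phi_{\w_1}\le\epsilon/\rho=r/8$, so it is not deep in tent $\w_1$.

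Let $E$ be the event $\{\phi_{\w_1}(A_m(S))\ge r/8\}$. To make the two success regions genuinely disjoint, I would tighten $\rho$ to $16\epsilon/r$ so the thresholds strictly separate. The second part of \Cref{lem:testing-inequalities}, with KL at most $4m\rho^2$, then gives $m\gtrsim (r^2/\epsilon^2)\log(1/\delta)$. This needs $\rho\le1/2$, which is exactly the hypothesis $\epsilon\le r/16$ (possibly after adjusting constants).

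\textbf{Combining.} Take $\mathfrak{D}$ to be the union of the two families. Then use $\max\{u,v\}\ge(u+v)/2$, as in the proof of \Cref{thm:upper-bound}.

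The main obstacle is the Lipschitz/anchoring bookkeeping together with the constant tracking: keeping every $\epsilon$-optimal set inside a single tent while $\rho\le1/2$ under the hypothesis $\epsilon\le r/16$.
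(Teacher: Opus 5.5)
Your construction and the dimension term follow the paper's route: disjoint tents of height $r/4$, an independent per-tent bias that singles out a hidden center, $\rho=8\epsilon/r$ so that every $\epsilon$-optimal output lands in the hidden tent, and Fano against an unbiased reference. Your signed loss $-\sum_{\w}k_{\w}\phi_{\w}$ with fair non-hidden signs differs only cosmetically from the paper's $-\max_{\w\in V}\psi_{\w}$ with $(1\pm\rho)/2$ memberships.

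The gap is in your separate confidence-term pair. With $\rho=8\epsilon/r$ the two success regions are not disjoint. A point $\x$ with $\|\x-\w_1\|_2=r/8$ has excess exactly $\rho r/8=\epsilon$ under both $-\rho\phi_{\w_1}$ and $+\rho\phi_{\w_1}$. So if $X$ contains such a point (e.g.\ $X=B_R^{(2)}$, which is how the lemma is used for \Cref{thm:l2-continuous-nonconvex-upper-lower}), a constant algorithm succeeds under both hypotheses and no test can be extracted.

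Your tightening $\rho=16\epsilon/r$ does separate the regions. But then $\rho\le1/2$ requires $\epsilon\le r/32$, not $\epsilon\le r/16$ as you claim. For $\epsilon\in(r/32,r/16]$ the bias exceeds $1/2$, where \Cref{lem:bernoulli-kl} no longer applies and $\mathrm{KL}=\rho\log\frac{1+\rho}{1-\rho}$ blows up. At $\epsilon=r/16$ you get $\rho=1$, and the two hypotheses are distinguishable from a single sample. Since $m^\star$ is nonincreasing in $\epsilon$, you also cannot transfer a bound proved for smaller $\epsilon$ upward. So as written, the $\log(1/\delta)$ term is not established on the full stated range.

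The fix is to drop the asymmetric pair and reuse your dimension family, which is exactly what the paper does. Take distinct $\u,\v\in\W$ and $\rho=8\epsilon/r\le1/2$. The $\epsilon$-optimal sets of $D_{\u,\rho}$ and $D_{\v,\rho}$ lie in the disjoint tents around $\u$ and $\v$. Hence the event $E=\{A_m(S)\text{ lies in the tent of }\u\}$ has $D_{\u,\rho}^{\otimes m}(E)\ge1-\delta$ and $D_{\v,\rho}^{\otimes m}(E)\le\delta$. Moreover, $\mathrm{KL}(D_{\u,\rho}\|D_{\v,\rho})=\mathrm{KL}(\mathrm{Rad}(\rho)\|\mathrm{Rad}(0))+\mathrm{KL}(\mathrm{Rad}(0)\|\mathrm{Rad}(\rho))\le 2\rho^2$. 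The second part of \Cref{lem:testing-inequalities} then gives $m\ge\frac{1}{8\rho^2}\log\frac1\delta=\frac{r^2}{512\epsilon^2}\log\frac1\delta$ for all $\epsilon\le r/16$.

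Alternatively, keep your pair but take $\rho=12\epsilon/r\le 3/4$. Then the regions are $\phi_{\w_1}\ge r/6$ versus $\phi_{\w_1}\le r/12$. You would also enlarge the family's bias range to $(0,3/4]$ and use $\rho\log\frac{1+\rho}{1-\rho}\le C\rho^2$ there.

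With either change, the rest of your argument goes through, including absorbing small $\log|\W|$ into the confidence term and combining via $\max\{u,v\}\ge(u+v)/2$.
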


\begin{proof}[Proof of \Cref{lem:tent-lower-from-packing}]
For $\w\in\W$, define
\[
\psi_{\w}(\x):=\begin{cases}
\frac{r}{4}-\|\x-\w\|_2, & \text{if }\|\x-\w\|_2<\frac{r}{4},\\
0, & \text{if }\|\x-\w\|_2\ge \frac{r}{4},
\end{cases}
\qquad \x\in B_R^{(2)}.
\]
This is the tent of height $r/4$ centered at $\w$. Each $\psi_{\w}$ is $1$-Lipschitz, and $\psi_{\w}(\0)=0$ because $\|\w\|_2=r>r/4$. If $\psi_{\w}(\x)>0$ and $\psi_{\w'}(\x)>0$ for distinct $\w,\w'\in\W$, then $\|\w-\w'\|_2<r/2$, contradicting the separation assumption. Thus the positive supports of the tents are disjoint.

Take $\mathcal{Z}:=2^\W$ as the sample space. For $V\subseteq\W$, define
\[
f(\x;V):=-\max_{\w\in V}\psi_{\w}(\x),
\]
with the convention that the maximum over the empty set is zero. The maximum of $1$-Lipschitz functions is $1$-Lipschitz, and multiplying by $-1$ preserves the Lipschitz constant. Since every tent vanishes at the origin, $f(\0;V)=0$ for every $V\subseteq\W$.

For $\u\in\W$ and $0<\rho\le1/2$, let $D_{\u,\rho}$ be the distribution on $\mathcal{Z}$ under which the membership indicators $(\1_{\{\w\in V\}})_{\w\in\W}$ are independent and
\[
\Prob_{V\sim D_{\u,\rho}}[\u\in V]=\frac{1+\rho}{2},\qquad \Prob_{V\sim D_{\u,\rho}}[\w\in V]=\frac{1-\rho}{2} \quad \forall \w\in\W\setminus\{\u\}.
\]
Let
\[
\mathfrak{D}:=\{D_{\u,\rho}:\u\in\W,\ 0<\rho\le1/2\}.
\]

Fix $\u\in\W$ and write $F_{\u,\rho}$ for the population objective under $D_{\u,\rho}$. Since at most one tent is positive at any point,
\[
F_{\u,\rho}(\x)=-\frac{1+\rho}{2}\psi_{\u}(\x)-\frac{1-\rho}{2}\sum_{\w\in\W\setminus\{\u\}}\psi_{\w}(\x).
\]
Hence $F_{\u,\rho}(\u)=-(1+\rho)r/8$. This is the unique population minimum over $X$. Indeed, inside the positive support of $\psi_{\u}$,
\[
F_{\u,\rho}(\x)=-\frac{1+\rho}{2}\psi_{\u}(\x)\ge -\frac{1+\rho}{8}r,
\]
with equality only at $\x=\u$. Outside this support, either $\x$ lies in another tent and $F_{\u,\rho}(\x)\ge-(1-\rho)r/8$, or no tent is active and $F_{\u,\rho}(\x)=0$. Therefore
\[
F_{\u,\rho}(\x)-F_{\u,\rho}(\u)\ge \frac{\rho r}{4}
\]
whenever $\psi_{\u}(\x)=0$.

Assume $0<\epsilon\le r/16$ and set $\rho:=8\epsilon/r$. Then $\rho\le1/2$, and hence $D_{\u,\rho}\in\mathfrak{D}$. Also $\rho r/4=2\epsilon$. Since the positive supports are disjoint, any $\epsilon$-optimal output identifies the hidden point:
\[
\x\in X,\quad
F_{\u,\rho}(\x)-\min_{\z\in X}F_{\u,\rho}(\z)\le\epsilon
\quad\Longrightarrow\quad
\x \text{ lies in the positive support of } \psi_{\u}.
\]

We now prove the testing claim. Suppose that a decoder $\widehat{\u}:\mathcal{Z}^m\to\W$ satisfies
\[
\Prob_{S\sim D_{\u,\rho}^{\otimes m}}[\widehat{\u}(S)=\u]\ge1-\delta\qquad \forall \u\in\W
\]
for some $0<\delta\le1/4$. Then
\[
m\ge \frac{1}{96}\rho^{-2}\left(\log|\W|+\log\frac{1}{\delta}\right).
\]
We first prove the confidence term. Fix distinct $\u,\v\in\W$ and let
\[
E:=\{S\in\mathcal{Z}^m:\widehat{\u}(S)=\u\}.
\]
The success guarantee gives $D_{\u,\rho}^{\otimes m}(E)\ge1-\delta$. Since $E\subseteq\{\widehat{\u}(S)\ne\v\}$, it also gives $D_{\v,\rho}^{\otimes m}(E)\le\delta$. Thus
\[
D_{\u,\rho}^{\otimes m}(E)\ge1-\delta,\qquad D_{\v,\rho}^{\otimes m}(E)\le\delta.
\]
By \Cref{lem:testing-inequalities},
\[
m\,\mathrm{KL}(D_{\u,\rho}\|D_{\v,\rho})=\mathrm{KL}(D_{\u,\rho}^{\otimes m}\|D_{\v,\rho}^{\otimes m})\ge \frac12\log\frac{1}{2\delta},
\]
where the equality is additivity under products. The distributions $D_{\u,\rho}$ and $D_{\v,\rho}$ differ only in coordinates $\u$ and $\v$, so \Cref{lem:bernoulli-kl} gives $\mathrm{KL}(D_{\u,\rho}\|D_{\v,\rho})\le8\rho^2$. Since $\delta\le1/4$,
\[
m\ge \frac{1}{32}\rho^{-2}\log\frac{1}{\delta}.
\]

If $\log|\W|<4\log2$, then, since $\delta\le1/4$,
\[
\log|\W|+\log\frac{1}{\delta}\le3\log\frac{1}{\delta}.
\]
Hence the confidence bound gives
\[
m\ge \frac{1}{96}\rho^{-2}\left(\log|\W|+\log\frac{1}{\delta}\right).
\]
It remains to prove the testing claim when $\log|\W|\ge4\log2$. Let $P_0$ be the reference distribution on $\mathcal{Z}$ under which the membership indicators are independent and
\[
\Prob_{V\sim P_0}[\w\in V]=\frac{1-\rho}{2} \qquad \forall \w\in\W.
\]
By \Cref{lem:testing-inequalities}, followed by additivity of KL divergence under products and \Cref{lem:bernoulli-kl},
\[
\begin{aligned}
\frac1{|\W|}\sum_{\u\in\W}\Prob_{S\sim D_{\u,\rho}^{\otimes m}}[\widehat{\u}(S)\ne \u]
&\ge 1-\frac{|\W|^{-1}\sum_{\u\in\W}\mathrm{KL}(D_{\u,\rho}^{\otimes m}\|P_0^{\otimes m})+\log2}{\log|\W|} \\
&= 1-\frac{m|\W|^{-1}\sum_{\u\in\W}\mathrm{KL}(D_{\u,\rho}\|P_0)+\log2}{\log|\W|} \\
&\ge 1-\frac{4m\rho^2+\log2}{\log|\W|}.
\end{aligned}
\]
Here the last inequality uses the fact that $D_{\u,\rho}$ and $P_0$ differ only in coordinate $\u$.
Since the decoder has error probability at most $\delta$ for every $\u$,
\[
\delta\ge \frac1{|\W|}\sum_{\u\in\W}\Prob_{S\sim D_{\u,\rho}^{\otimes m}}[\widehat{\u}(S)\ne \u]
\ge 1-\frac{4m\rho^2+\log2}{\log|\W|}.
\]
As $\delta\le1/4$, this gives
\[
4m\rho^2+\log2\ge \frac34\log|\W|.
\]
Since $\log|\W|\ge4\log2$, we have
\[
m\ge \frac18\rho^{-2}\log|\W|.
\]
Together with the small-entropy case above, this proves the testing claim.

Now let $A_m:\mathcal{Z}^m\to X$ be any proper algorithm that is $\epsilon$-optimal with probability at least $1-\delta$ for every distribution in $\mathfrak{D}$. Define $\widehat{\u}(S)$ to be the unique $\w$ whose positive support contains $A_m(S)$, and define it arbitrarily if no such $\w$ exists. The optimization-to-identification implication above gives
\[
\Prob_{S\sim D_{\u,\rho}^{\otimes m}}[\widehat{\u}(S)=\u]\ge1-\delta \qquad \forall \u\in\W.
\]
The testing claim therefore gives
\[
m\ge \frac{1}{96\rho^{2}}\left(\log|\W|+\log\frac{1}{\delta}\right).
\]
Since $\rho=8\epsilon/r$, $1/\rho^{2}={r^2}/(64\epsilon^2)$, this proves the lemma.
\end{proof}

\subsection{Proof of \Cref{thm:l2-continuous-nonconvex-upper-lower}}

By \Cref{lem:volumetric-cover}, for every $0<\eta<1$,
\[
N(\eta R,B_R^{(2)},\|\cdot\|_2)\le \left(1+\frac{2}{\eta}\right)^d\le \left(\frac{3}{\eta}\right)^d.
\]
\Cref{lem:anchored-lipschitz-uc-vdw}, applied with $K=B_R^{(2)}$, $\|\cdot\|=\|\cdot\|_2$, and $h=d$, gives the stated upper bound.

It remains to prove the lower bound. Apply \Cref{lem:sparse-sign-packing} with $s=d$, and let $c_1>0$ be its absolute constant. This gives a set $\U\subseteq\{\pm1\}^d$ with $|\U|\ge2$, $\|\u\|_2^2=d$ for every $\u\in\U$, $\langle \u,\u'\rangle\le d/2$ for distinct $\u,\u'\in\U$, and $\log|\U|\ge c_1d$. Set
\[
\W:=\left\{\frac{R}{\sqrt d}\u:\ \u\in\U\right\}.
\]
Then $\W\subseteq B_R^{(2)}$, $|\W|\ge2$, and $\|\w\|_2=R$ for every $\w\in\W$. Also, if distinct $\w,\w'\in\W$ are obtained from $\u,\u'\in\U$, then
\[
\langle \w,\w'\rangle=\frac{R^2}{d}\langle \u,\u'\rangle\le \frac{R^2}{2},\qquad
\|\w-\w'\|_2^2=2R^2-2\langle \w,\w'\rangle\ge R^2,
\]
and $\log|\W|=\log|\U|\ge c_1d$. Apply \Cref{lem:tent-lower-from-packing} with $X=B_R^{(2)}$ and $r=R$. If $c_0\le1/16$, then $0<\epsilon\le c_0R$ implies $0<\epsilon\le R/16$. Therefore,
\[
m_{B_R^{(2)}}^\star(\epsilon,\delta;\mathfrak{D})\ge c_2\frac{R^2}{\epsilon^2}\left(\log|\W|+\log\frac{1}{\delta}\right)
\]
for an absolute constant $c_2>0$. Since $\log|\W|\ge c_1d$, this gives the lower bound with constant $c_2\min\{c_1,1\}$.

\subsection{Proof of \Cref{thm:l2-uc-upper}}

If $R<1$, then $X_R^{(2)}=\{\0\}$ and anchored uniform convergence is trivial. We therefore assume $R\ge1$.

First suppose that $\lfloor R^2\rfloor\ge d$. Then $H_2(d,R)=d$. Since $X_R^{(2)}\subseteq B_R^{(2)}$, the upper bound in \Cref{thm:l2-continuous-nonconvex-upper-lower} gives
\[
m_{X_R^{(2)}}^{\mathrm{a-UC}}(\epsilon,\delta;\mathfrak{D})
\le \left\lceil C\frac{R^2}{\epsilon^2}\left(d+\log\frac{1}{\delta}\right)\right\rceil
= \left\lceil C\frac{R^2}{\epsilon^2}\left(H_2(d,R)+\log\frac{1}{\delta}\right)\right\rceil.
\]

It remains to consider the case $1\le \lfloor R^2\rfloor<d$. Every point of $X_R^{(2)}$ has support size at most $\lfloor R^2\rfloor$. For a fixed support size $k$, the support can be chosen in $\binom{d}{k}$ ways and the signs in at most $2^k$ ways. Given the support and signs, the squared magnitudes are positive integers with total at most $\lfloor R^2\rfloor$, giving at most $\binom{\lfloor R^2\rfloor}{k}$ choices. Hence
\[
|X_R^{(2)}|\le 1+\sum_{k=1}^{\lfloor R^2\rfloor} \binom{d}{k}2^k\binom{\lfloor R^2\rfloor}{k}.
\]
For $1\le k\le \lfloor R^2\rfloor$, using $\binom{d}{k}\le (ed/k)^k$ and $\binom{\lfloor R^2\rfloor}{k}\le (e\lfloor R^2\rfloor/k)^k$ gives
\[
\binom{d}{k}2^k\binom{\lfloor R^2\rfloor}{k}
\le \left(\frac{2e^2d\lfloor R^2\rfloor}{k^2}\right)^k.
\]
The logarithm of the right-hand side is at most $C_0\lfloor R^2\rfloor\log(ed/\lfloor R^2\rfloor)$ for an absolute constant $C_0>0$. Since $H_2(d,R)=\lfloor R^2\rfloor\log(ed/\lfloor R^2\rfloor)$ in this case, we have
\[
|X_R^{(2)}|\le 1+\lfloor R^2\rfloor\exp(C_0H_2(d,R)).
\]
Since $\log(\lfloor R^2\rfloor+1)\le \lfloor R^2\rfloor\le H_2(d,R)$, this gives
\[
\log |X_R^{(2)}|\le (C_0+1)H_2(d,R).
\]
Since $H_2(d,R)\ge1$, for every $0<\eta<1$,
\[
N(\eta R,X_R^{(2)},\|\cdot\|_2)\le |X_R^{(2)}|
\le \left(\frac{3}{\eta}\right)^{(C_0+1)H_2(d,R)}.
\]
Let $C_1$ be the absolute constant in \Cref{lem:anchored-lipschitz-uc-vdw}. Applying that lemma with $K=X_R^{(2)}$, $\|\cdot\|=\|\cdot\|_2$, and $h=(C_0+1)H_2(d,R)$ gives the desired bound in this case with constant $(C_0+1)C_1$. The two cases prove the upper bound with constant $\max\{C,(C_0+1)C_1\}$.

It remains to prove the lower bound. 
Let $\W\subseteq X_R^{(2)}$ and $r\in[R/2,R]$ be given by \Cref{lem:l2-integer-packing}. Apply \Cref{lem:tent-lower-from-packing} with $X=X_R^{(2)}$. If $c_0\le1/32$, then $0<\epsilon\le c_0R$ implies $0<\epsilon\le r/16$. Therefore,
\[
m_{X_R^{(2)}}^\star(\epsilon,\delta;\mathfrak{D})\ge c_1\frac{r^2}{\epsilon^2}\left(\log|\W|+\log\frac{1}{\delta}\right),
\]
where $c_1>0$ is the constant in \Cref{lem:tent-lower-from-packing}. If $\log|\W|\ge c_2H_2(d,R)$ is the bound from \Cref{lem:l2-integer-packing}, then $r\ge R/2$ gives the claimed lower bound with constant $c_1\min\{c_2,1\}/4$.

\section{Proofs for \texorpdfstring{$\mu$-strongly convex and $L$-smooth losses}{mu-strongly convex and L-smooth losses}}
\label[appendix]{app:sc-proofs}

\subsection{Auxiliary localization and empirical process bounds}

\begin{lemma}\label{lem:sc-localization}
Let $R\in[1,\infty]$, let $F:\R^d\to\R$ be $\mu$-strongly convex and $L$-smooth, and let $\x^\star\in\argmin_{\x\in X_R^{(\infty)}}F(\x)$. Then, for every $\x\in X_R^{(\infty)}$ and $s\ge0$,
\[
F(\x)-F(\x^\star)\le s \quad\Longrightarrow\quad \|\x-\x^\star\|_2\le 2\sqrt{d\min\{\kappa,\lfloor R\rfloor^2\}+\frac{s}{\mu}}.
\]
\end{lemma}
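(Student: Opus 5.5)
We combine two elementary distance bounds, each giving one branch of the minimum. The first gives a bound of order $\sqrt{d\kappa}$ and comes from comparing $\x^\star$ with a rounded continuous minimizer. The second gives a bound of order $\sqrt{d}\lfloor R\rfloor$ and comes from the diameter of the box. Throughout, $F(\x)-F(\x^\star)\le s$ is the hypothesis.

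\emph{Box branch.} If $R<\infty$, then every point of $X_R^{(\infty)}$ has integer coordinates in $[-\lfloor R\rfloor,\lfloor R\rfloor]$. Hence $\|\x-\x^\star\|_2\le 2\lfloor R\rfloor\sqrt d$, which is at most $2\sqrt{d\lfloor R\rfloor^2+s/\mu}$. If $R=\infty$, this branch is vacuous because $\lfloor R\rfloor^2=\infty$.

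\emph{Condition-number branch.} Let $\y^\star$ minimize $F$ over the convex box $B_R^{(\infty)}$; it exists and is unique by strong convexity and closedness. Let $\tilde\y$ be a coordinatewise rounding of $\y^\star$ to a nearest integer. Since the box bounds $\pm\lfloor R\rfloor$ are integers, rounding keeps $\tilde\y$ inside $X_R^{(\infty)}$, and $\|\tilde\y-\y^\star\|_2\le\sqrt d/2$.

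The first step is to bound how much rounding costs. We would like $F(\tilde\y)-F(\y^\star)\le \frac{L}{2}\|\tilde\y-\y^\star\|_2^2\le Ld/8$. This is immediate when $R=\infty$, since then $\nabla F(\y^\star)=0$ and we apply the upper inequality in \eqref{eq:sc-first-order}. For the constrained case, I would avoid the gradient term by an alternative: the upper inequality gives $F(\tilde\y)\le F(\y^\star)+\langle\nabla F(\y^\star),\tilde\y-\y^\star\rangle+\frac L2\|\tilde\y-\y^\star\|^2$. The linear term need not be nonpositive for an arbitrary rounding. So I would instead round each coordinate in the direction permitted by optimality: where $\y^\star_i$ lies on the upper bound, the gradient component is nonpositive and the coordinate is already an integer; in the interior the gradient component is zero. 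This makes the linear term vanish. This is the main subtlety, and it is handled by the first-order optimality conditions on the box.

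With the rounding cost in hand, the estimates chain together. We have $F(\x^\star)\le F(\tilde\y)\le F(\y^\star)+Ld/8$, and therefore $F(\x)-F(\y^\star)\le s+Ld/8$. Strong convexity together with optimality of $\y^\star$ over the convex box gives $F(\z)-F(\y^\star)\ge\frac\mu2\|\z-\y^\star\|_2^2$ for every $\z$ in the box. Applying this to both $\x$ and $\x^\star$ yields $\|\x-\y^\star\|_2\le\sqrt{2s/\mu+\kappa d/4}$ and $\|\x^\star-\y^\star\|_2\le\sqrt{\kappa d/4}$. By the triangle inequality, $\|\x-\x^\star\|_2\le 2\sqrt{2s/\mu+\kappa d/4}$ up to arranging constants.

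To land exactly on $2\sqrt{d\kappa+s/\mu}$, I would use $\sqrt a+\sqrt b\le\sqrt{2(a+b)}$. This gives $\sqrt{2(2s/\mu+\kappa d/2)}=\sqrt{4s/\mu+\kappa d}\le 2\sqrt{\kappa d+s/\mu}$. Taking the smaller of the two branches then proves the claim.
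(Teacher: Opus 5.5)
Your proposal follows the paper's proof step for step: round the continuous box minimizer while freezing its active coordinates, use strong convexity at that minimizer for both $\x$ and $\x^\star$, then apply the triangle inequality and the diameter bound. As written, however, the rounding step fails. You take $\y^\star$ to minimize $F$ over $B_R^{(\infty)}$, but the argument needs the minimizer over $B_{\lfloor R\rfloor}^{(\infty)}$, which is what the paper uses. When $R\notin\Z$, an active coordinate of the minimizer over $B_R^{(\infty)}$ equals $\pm R$, which is not an integer, so your claim that such a coordinate ``is already an integer'' is false. Nearest-integer rounding can then leave $X_R^{(\infty)}$. Moving the coordinate inward to $\pm\lfloor R\rfloor$ instead gives a linear term $\nabla_iF(\y^\star)(\lfloor R\rfloor-R)\ge0$. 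It has the wrong sign and is not controlled by $Ld/8$.

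A concrete instance: take $d=1$, $R=2.9$, $F(y)=\frac{\mu}{2}(y-100)^2$ and $L=\mu$. The minimizer over $[-2.9,2.9]$ is $y^\star=2.9$. Every point of $X_R^{(\infty)}=\{-2,\dots,2\}$ has excess at least $F(2)-F(2.9)\approx 88\mu\gg\mu/8$. Also $|x^\star-y^\star|=0.9>\sqrt{\kappa d/4}=1/2$. Since $\kappa=1<\lfloor R\rfloor^2=4$, this is exactly the branch your argument must handle.

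The fix is immediate, because $X_R^{(\infty)}=B_{\lfloor R\rfloor}^{(\infty)}\cap\Z^d$. Minimize over the convex box $B_{\lfloor R\rfloor}^{(\infty)}$ instead. Its active coordinates are the integers $\pm\lfloor R\rfloor$. Interior coordinates have zero partial derivative and round to integers inside the box. The linear term therefore vanishes, and the strong-convexity inequality still applies to $\x,\x^\star\in X_R^{(\infty)}\subseteq B_{\lfloor R\rfloor}^{(\infty)}$. With this change, the rest of your chain is correct and coincides with the paper's proof, including the final bound via $\sqrt a+\sqrt b\le\sqrt{2(a+b)}$.
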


\begin{proof}[Proof of \Cref{lem:sc-localization}]
Let $\u\in\argmin_{\y\in B_{\lfloor R\rfloor}^{(\infty)}}F(\y)$. When $R<\infty$, the feasible integer set is exactly $B_{\lfloor R\rfloor}^{(\infty)}\cap\Z^d$. When $R=\infty$, $\u$ is the unconstrained minimizer. We first construct $\q\in X_R^{(\infty)}$ such that $\|\q-\u\|_2\le \sqrt d/2$ and $\langle\nabla F(\u),\q-\u\rangle=0$. If $R=\infty$, let $\q$ be a coordinatewise nearest-integer rounding of $\u$; then $\nabla F(\u)=0$.

If $R<\infty$, define $\q$ coordinatewise as follows. For every coordinate with $u_j\in(-\lfloor R\rfloor,\lfloor R\rfloor)$, let $q_j$ be a nearest integer to $u_j$. For every coordinate with $u_j=\pm\lfloor R\rfloor$, set $q_j=u_j$. Since $\lfloor R\rfloor$ is an integer, this gives $\q\in X_R^{(\infty)}$ and $\|\q-\u\|_2\le\sqrt d/2$. First-order optimality over the box gives $\nabla_jF(\u)=0$ on every interior coordinate, while $q_j-u_j=0$ on every boundary coordinate. Thus $\langle\nabla F(\u),\q-\u\rangle=0$.

The upper inequality in \eqref{eq:sc-first-order} gives
\[
F(\q)-F(\u)\le \langle\nabla F(\u),\q-\u\rangle+\frac{L}{2}\|\q-\u\|_2^2\le \frac{Ld}{8}.
\]
By optimality of $\x^\star$ over $X_R^{(\infty)}$,
\[
F(\x^\star)-F(\u)\le \frac{Ld}{8}.
\]
First-order optimality of $\u$ over $B_{\lfloor R\rfloor}^{(\infty)}$ gives $\langle\nabla F(\u),\y-\u\rangle\ge0$ for every $\y\in B_{\lfloor R\rfloor}^{(\infty)}$. Hence the lower inequality in \eqref{eq:sc-first-order} gives
\[
F(\y)-F(\u)\ge \frac{\mu}{2}\|\y-\u\|_2^2\qquad \forall \y\in B_{\lfloor R\rfloor}^{(\infty)}.
\]
Applying this bound to $\x^\star$ gives
\[
\|\x^\star-\u\|_2\le \frac12\sqrt{\kappa d}.
\]
If $F(\x)-F(\x^\star)\le s$, then
\[
F(\x)-F(\u)\le F(\x^\star)-F(\u)+s\le \frac{Ld}{8}+s,
\]
and another application of strong convexity gives
\[
\|\x-\u\|_2\le \sqrt{\frac{\kappa d}{4}+\frac{2s}{\mu}}.
\]
The triangle inequality gives
\[
\|\x-\x^\star\|_2\le \frac12\sqrt{\kappa d}+\sqrt{\frac{\kappa d}{4}+\frac{2s}{\mu}}
\le 2\sqrt{\kappa d+\frac{s}{\mu}},
\]
and, when $R<\infty$, the diameter bound gives $\|\x-\x^\star\|_2\le2\lfloor R\rfloor\sqrt d$. Combining these two bounds yields
\[
\|\x-\x^\star\|_2\le 2\sqrt{d\min\{\kappa,\lfloor R\rfloor^2\}+\frac{s}{\mu}}.
\]
For $R=\infty$, this is the preceding bound. This proves the lemma.
\end{proof}

We next isolate the empirical-process input used in the ERM proof: a high-probability chaining bound for sub-Gaussian increments, followed by its Euclidean-ball specialization.

\begin{lemma}\label{lem:subgaussian-chaining}
Let $(T,\rho)$ be a nonempty compact metric space. Let $(G_t)_{t\in T}$ be a stochastic process such that, for every $s,t\in T$ and every $\lambda\in\R$,
\[
\E\exp\left(\lambda(G_s-G_t)\right)\le \exp\left(\frac{\lambda^2\rho(s,t)^2}{2}\right).
\]
Set $\Delta:=\sup_{s,t\in T}\rho(s,t)$. Then there is an absolute constant $C>0$ such that, for every $u\ge0$, with probability at least $1-2e^{-u^2}$,
\[
\sup_{s,t\in T}|G_s-G_t|
\le C\left(\int_0^\Delta\sqrt{\log N(\varepsilon,T,\rho)}\,d\varepsilon+u\Delta\right).
\]
\end{lemma}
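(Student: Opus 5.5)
This is the classical Dudley chaining bound with a high-probability tail, and I would prove it by the standard generic-chaining/Dudley argument with exponential tail control at each link. First reduce to finite index sets. By compactness, $T$ is separable. I would establish the bound with an absolute constant for every finite subset $T_0\subseteq T$; the covering numbers of $T_0$ at scale $\varepsilon$ are at most $N(\varepsilon/2,T,\rho)$, which changes the integral only by a constant factor. Then pass to a countable dense subset by monotone convergence of the events. The standard separability convention (or the measurability convention already adopted in the paper) lets the supremum over $T$ be replaced by the supremum over a countable dense set. If the entropy integral is infinite there is nothing to prove, so I would assume it is finite.

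For the chaining itself, set $\varepsilon_k:=\Delta 2^{-k}$ for $k\ge0$. Choose $\varepsilon_k$-nets $T_k$ with $|T_k|=N_k:=N(\varepsilon_k,T,\rho)$, with $T_0$ a single point $t_0$, and let $\pi_k(t)\in T_k$ be a nearest point. For a finite index set the chain terminates at a finite level $K$ where $\pi_K(t)=t$. Then write
\[
G_t-G_{t_0}=\sum_{k=1}^K\left(G_{\pi_k(t)}-G_{\pi_{k-1}(t)}\right),
\]
where each link satisfies $\rho(\pi_k(t),\pi_{k-1}(t))\le 3\varepsilon_k$ and there are at most $N_kN_{k-1}\le N_k^2$ distinct links at level $k$. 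By the increment hypothesis and the Chernoff bound, each link exceeds $3\varepsilon_k a_k$ with probability at most $2e^{-a_k^2/2}$. Choose
\[
a_k:=\sqrt{2\log(N_k^2)+2\log(2^k)}+\sqrt2\,u\,\sqrt{k}
\]
(or a similar split). A union bound then gives total failure probability at most $\sum_k 2\cdot 2^{-k}e^{-u^2k}$, which is at most $2e^{-u^2}$ after adjusting constants. On the good event,
\[
\sup_t|G_t-G_{t_0}|\le \sum_k 3\varepsilon_k a_k .
\]

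Finally, bound this sum. The part $\sum_k\varepsilon_k\sqrt{\log N_k}$ is at most a constant times $\int_0^\Delta\sqrt{\log N(\varepsilon,T,\rho)}\,d\varepsilon$ by monotonicity of $N$ on $[\varepsilon_{k+1},\varepsilon_k]$. The part $\sum_k\varepsilon_k\sqrt{k}$ is $O(\Delta)$. The part $\sum_k\varepsilon_k u\sqrt{k}$ is $O(u\Delta)$. The triangle inequality, $|G_s-G_t|\le 2\sup_t|G_t-G_{t_0}|$, yields the claim. The one delicate point, which I expect to be the main bookkeeping obstacle, is the small-$u$ regime: the stray $O(\Delta)$ term must be absorbed into the entropy integral. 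When $|T|\ge2$, I would use $N(\varepsilon,T,\rho)\ge2$ for $\varepsilon<\Delta/2$, which gives an integral of at least $(\Delta/2)\sqrt{\log2}$. When $|T|=1$ the statement is trivial. Alternatively, one could simply cite the tail form of Dudley's inequality (e.g.\ \cite[Theorem 8.1.6]{vershynin2018high}).
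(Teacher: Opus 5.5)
Your argument is correct, but it takes a different route from the paper's. The paper does no chaining of its own. It normalizes $X=(G_s-G_t)/\rho(s,t)$ and observes $\E X=0$ from the MGF bound via Jensen. It then uses \citet[Proposition~2.5.2]{vershynin2018high} to get $\|G_s-G_t\|_{\psi_2}\le C_1\rho(s,t)$, and applies the tail form of Dudley's inequality \citet[Theorem~8.1.6]{vershynin2018high} as a black box. Finally it cuts the entropy integral at $\Delta$, because $N(\varepsilon,T,\rho)=1$ for $\varepsilon\ge\Delta$.

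You instead run the chaining directly from the MGF hypothesis:
\begin{itemize}
\item Chernoff gives the two-sided link tails with no $\psi_2$ conversion.
\item Your choice of $a_k$ makes the union bound exactly $\sum_{k\ge1}2(e^{-u^2}/2)^k\le 2e^{-u^2}$, so no constants need adjusting.
\item The comparison with $N(\varepsilon/2,T,\rho)$ handles the finite-subset reduction.
\item Absorbing the stray $O(\Delta)$ term through $N(\varepsilon,T,\rho)\ge2$ for $\varepsilon<\Delta/2$ is exactly the right fix for small $u$.
\end{itemize}
Your route buys self-containedness and explicit constants; the paper's buys brevity. If you take your fallback of citing Theorem~8.1.6, you still need the paper's conversion step, since that theorem is stated for $\psi_2$-increments rather than for the MGF condition.

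One correction to your reduction step: the paper's measurability convention does not license replacing $\sup_T$ by the supremum over a countable dense set. That step requires separability of the process (or continuous paths). Without it the lemma as literally stated fails. Take $T=[0,1]$, $U$ uniform on $[0,1]$, and $G_t:=M\,\mathbf{1}\{t=U\}$. Every increment is a.s.\ zero, so the hypothesis holds, yet $\sup_{s,t}|G_s-G_t|=M$ surely. The paper's citation of Vershynin implicitly uses the same convention. It is harmless in the paper's applications, where the index set is finite or $\x\mapsto G(\x)$ is continuous.
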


\begin{proof}[Proof of \Cref{lem:subgaussian-chaining}]
For a real random variable $X$, write
\[
\|X\|_{\psi_2}:=\inf\left\{K>0:\ \E\exp\left(\frac{X^2}{K^2}\right)\le2\right\}.
\]
Fix $s,t\in T$. If $s=t$, then $G_s-G_t=0$. If $s\ne t$, put $X=(G_s-G_t)/\rho(s,t)$. For every $\alpha\in\R$, applying the assumed exponential bound with $\lambda=\alpha/\rho(s,t)$ gives
\[
\E\exp(\alpha X)
=\E\exp\left(\frac{\alpha}{\rho(s,t)}(G_s-G_t)\right)
\le \exp\left(\frac{\alpha^2}{2}\right).
\]
By Jensen's inequality, $\alpha\E X\le \alpha^2/2$ for every $\alpha\in\R$, and hence $\E X=0$. Thus \citet[Proposition~2.5.2]{vershynin2018high}, applied to $X$, gives an absolute constant $C_1>0$ such that $\|X\|_{\psi_2}\le C_1$. Therefore,
\[
\|G_s-G_t\|_{\psi_2}=\rho(s,t)\|X\|_{\psi_2}\le C_1\rho(s,t).
\]
Thus the process $(G_t)_{t\in T}$ satisfies \citet[Definition~8.1.1]{vershynin2018high}. By \citet[Theorem~8.1.6]{vershynin2018high}, there is an absolute constant $C_2>0$ such that, with probability at least $1-2e^{-u^2}$,
\[
\sup_{s,t\in T}|G_s-G_t|
\le C_1C_2\left(\int_0^\infty \sqrt{\log N(\varepsilon,T,\rho)}\,d\varepsilon+u\Delta\right).
\]
Since $N(\varepsilon,T,\rho)=1$ for every $\varepsilon\ge \Delta$, the integral over $[\Delta,\infty)$ is zero. Taking $C:=C_1C_2$ proves the claim.
\end{proof}

\begin{lemma}\label{lem:subgaussian-euclidean-ball}
Let $A\subseteq B_2(\x_0,R)\subseteq\R^d$ be nonempty and compact, and assume that $\x_0\in A$. Let $a>0$, and let $(G_\x)_{\x\in A}$ be a stochastic process with $G_{\x_0}=0$ such that, for every $\x,\y\in A$ and every $\lambda\in\R$,
\[
\E\exp\left(\lambda(G_\x-G_\y)\right)\le \exp\left(\frac{\lambda^2a^2\|\x-\y\|_2^2}{2}\right).
\]
Then there is an absolute constant $C>0$ such that, for every $t>0$, with probability at least $1-e^{-t}$,
\[
\sup_{\x\in A}G_\x\le CaR(\sqrt d+\sqrt t).
\]
\end{lemma}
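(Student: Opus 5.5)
This is a direct application of \Cref{lem:subgaussian-chaining} with the metric $\rho(\x,\y):=a\|\x-\y\|_2$ on $T:=A$. The increment hypothesis of that lemma is exactly our assumption. Since $A$ is compact, $(A,\rho)$ is a compact metric space. Because $G_{\x_0}=0$ and $\x_0\in A$, we have
\[
\sup_{\x\in A}G_\x=\sup_{\x\in A}(G_\x-G_{\x_0})\le \sup_{\x,\y\in A}|G_\x-G_\y|,
\]
so it suffices to bound the right-hand side with the stated probability.

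Next we control the two terms in the chaining bound. The diameter satisfies $\Delta\le 2aR$, since $A\subseteq B_2(\x_0,R)$. For the entropy integral, note that $N(\varepsilon,A,\rho)=N(\varepsilon/a,A,\|\cdot\|_2)$. Applying \Cref{lem:volumetric-cover} to the translate $A-\x_0$, whose points have norm at most $R$ (translation preserves covering numbers), gives
\[
N(\varepsilon/a,A,\|\cdot\|_2)\le (1+2aR/\varepsilon)^d .
\]
Substituting $\varepsilon=aR\eta$ yields
\[
\int_0^{\Delta}\sqrt{\log N}\,d\varepsilon\le aR\sqrt d\int_0^{2}\sqrt{\log(1+2/\eta)}\,d\eta .
\]
The last integral is a finite absolute constant, because $\sqrt{\log(1/\eta)}$ is integrable at $0$.

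Finally we match the probability level. Given $t>0$, choose $u$ with $2e^{-u^2}=e^{-t}$, that is, $u=\sqrt{t+\log 2}\le \sqrt t+\sqrt{\log 2}$. Then with probability at least $1-e^{-t}$,
\[
\sup_{\x\in A} G_\x\le C\left(C'aR\sqrt d+2aR(\sqrt t+\sqrt{\log2})\right).
\]
The constant term $2aR\sqrt{\log 2}$ is absorbed into $aR\sqrt d$ because $d\ge1$, which gives the bound $CaR(\sqrt d+\sqrt t)$.

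The only mildly delicate points are the translation step in the covering bound and checking that the entropy integral converges. No step is a real obstacle.
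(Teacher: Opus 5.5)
Your proposal is correct and follows the paper's proof essentially step for step. The shared steps are: the chaining lemma with metric $a\|\cdot\|_2$, the diameter bound $\Delta\le 2aR$, the volumetric cover of the translate $A-\x_0$ giving an entropy integral of order $aR\sqrt d$, the reduction $\sup_{\x}G_{\x}\le\sup_{\x,\y}|G_{\x}-G_{\y}|$ via $G_{\x_0}=0$, and the choice $u=\sqrt{t+\log 2}$ with the $\sqrt{\log 2}$ term absorbed using $d\ge1$. The only extra detail in the paper is a separate treatment of the trivial case $R=0$, where $A=\{\x_0\}$ and your substitution $\varepsilon=aR\eta$ degenerates.
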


\begin{proof}[Proof of \Cref{lem:subgaussian-euclidean-ball}]
The case $R=0$ is immediate because $A=\{\x_0\}$. Now assume $R>0$.

Apply \Cref{lem:subgaussian-chaining} to the compact metric space $A$ with metric $d_A(\x,\y):=a\|\x-\y\|_2$. Since $A\subseteq B_2(\x_0,R)$, the quantity $\Delta:=\sup_{\x,\y\in A}d_A(\x,\y)$ satisfies $\Delta\le2aR$. Also, applying \Cref{lem:volumetric-cover} to $A-\x_0$ with the norm $a\|\cdot\|_2$ gives
\[
N(\varepsilon,A,d_A)\le \left(1+\frac{2aR}{\varepsilon}\right)^d,\qquad 0<\varepsilon\le 2aR.
\]
Therefore,
\[
\int_0^{2aR}\sqrt{\log N(\varepsilon,A,d_A)}\,d\varepsilon
\le C_1aR\sqrt d
\]
for an absolute constant $C_1>0$. Indeed, after the change of variables $\varepsilon=2aRu$,
\[
\int_0^{2aR}\sqrt{d\log\left(1+\frac{2aR}{\varepsilon}\right)}\,d\varepsilon
=2aR\sqrt d\int_0^1\sqrt{\log(1+1/u)}\,du,
\]
and $\int_0^1\sqrt{\log(1+1/u)}\,du<+\infty$.
Because $G_{\x_0}=0$, $\sup_{\x\in A}G_\x\le \sup_{\x,\y\in A}|G_\x-G_\y|$. Taking $u=\sqrt{t+\log 2}$ in \Cref{lem:subgaussian-chaining} gives an absolute constant $C_2>0$ such that, with probability at least $1-e^{-t}$,
\[
\sup_{\x\in A}G_\x\le C_2\left(C_1aR\sqrt d+2aR\sqrt{t+\log2}\right).
\]
With $C:=C_2(C_1+2\sqrt{\log2}+2)$, the last bound is at most $CaR(\sqrt d+\sqrt t)$, which proves the lemma.
\end{proof}

\subsection{Proof of \Cref{thm:sc-anchored-uc}}

We first prove the upper bound. Fix $1\le R<\infty$ and let $D\in\mathfrak{D}^{\mathrm{sc}}_{\mu,L,\sigma}$. Let $S=(z_1,\ldots,z_m)\sim D^m$ and define the anchored empirical process
\[
G_\x:=\bar F_D(\x)-\bar F_S(\x),\qquad \x\in X_R^{(\infty)}.
\]
By the anchored normalization, $G_{\0}=0$. For any $\x,\y\in X_R^{(\infty)}$,
\[
G_\x-G_\y=-\frac1m\sum_{i=1}^m\left(f(\x;z_i)-f(\y;z_i)-F_D(\x)+F_D(\y)\right).
\]
Independence and the increment condition give, for every $\lambda\in\R$,
\[
\E\exp\left(\lambda(G_\x-G_\y)\right)
\le \exp\left(\frac{\lambda^2\sigma^2\|\x-\y\|_2^2}{2m}\right).
\]
Moreover, $X_R^{(\infty)}=\{-\lfloor R\rfloor,\ldots,\lfloor R\rfloor\}^d\subseteq B_2(\0,\lfloor R\rfloor\sqrt d)$. Applying \Cref{lem:subgaussian-euclidean-ball} with $A=X_R^{(\infty)}$, $\x_0=\0$, radius $\lfloor R\rfloor\sqrt d$, and $a=\sigma/\sqrt m$, we get that with probability at least $1-e^{-t}$,
\[
\sup_{\x\in X_R^{(\infty)}}G_\x\le C_1\frac{\sigma\lfloor R\rfloor\sqrt d}{\sqrt m}(\sqrt d+\sqrt t),
\]
where $C_1>0$ is an absolute constant. The same estimate applies to $-G$. Taking $t=\log(2/\delta)$ and union bounding gives, with probability at least $1-\delta$,
\[
\sup_{\x\in X_R^{(\infty)}}|G_\x|
\le C_1\frac{\sigma\lfloor R\rfloor\sqrt d}{\sqrt m}\left(\sqrt d+\sqrt{\log\frac{2}{\delta}}\right).
\]
The right-hand side is at most $\epsilon$ whenever
\[
m\ge 9C_1^2\frac{\sigma^2\lfloor R\rfloor^2d}{\epsilon^2}\left(d+\log\frac{1}{\delta}\right),
\]
since $\log(2/\delta)\le2(d+\log(1/\delta))$. This proves the upper bound.

It remains to prove the lower bound.
Let $\mathcal{Z}:=\R^d$, let $D_0:=N(\0,\sigma^2I_d)$ be the distribution of $Z$ on $\mathcal{Z}$, and consider the loss map
\[
f(\x;Z):=\frac{\mu}{2}\|\x\|_2^2-\langle Z,\x\rangle,\qquad \x\in\R^d,\ Z\in\R^d.
\]
Then $f(\0;Z)=0$ for every $Z$. Each sample loss is $\mu$-strongly convex and $\mu$-smooth, and hence $L$-smooth. For this distribution,
\[
f(\x;Z)-f(\y;Z)-F_{D_0}(\x)+F_{D_0}(\y)=-\langle Z,\x-\y\rangle,
\]
which is $\sigma\|\x-\y\|_2$-sub-Gaussian. Hence $D_0$ belongs to the admissible distribution class associated with this loss map.

Fix $1\le R<\infty$. Since the loss is already anchored, $\bar F_{D_0}=F_{D_0}$ and $\bar F_S=F_S$. If $S=(Z_1,\ldots,Z_m)\sim D_0^m$ and $\bar Z:=\frac{1}{m}\sum_{i=1}^m Z_i$, then
\[
F_{D_0}(\x)-F_S(\x)=\langle \bar Z,\x\rangle.
\]
Since $X_R^{(\infty)}=\{-\lfloor R\rfloor,\ldots,\lfloor R\rfloor\}^d$,
\[
\sup_{\x\in X_R^{(\infty)}}|F_{D_0}(\x)-F_S(\x)|=\lfloor R\rfloor\|\bar Z\|_1.
\]
Writing $\bar Z=(\sigma/\sqrt m)\xi$ with $\xi\sim N(\0,I_d)$, the anchored uniform deviation is $(\sigma\lfloor R\rfloor/\sqrt m)\|\xi\|_1$.

\smallskip
\noindent\emph{Dimension term.}
Since $\xi_1$ is standard normal, choose an absolute constant $\eta_0>0$ such that $\Prob[|\xi_1|<\eta_0]\le1/8$. Then $\E|\{j:|\xi_j|<\eta_0\}|\le d/8$, so Markov's inequality gives
\[
\Prob\left[|\{j:|\xi_j|<\eta_0\}|\ge \frac d2\right]\le\frac14.
\]
Hence, with probability at least $3/4$, $\|\xi\|_1\ge \eta_0d/2$.
Consequently, if $m<\eta_0^2\sigma^2\lfloor R\rfloor^2d^2/(4\epsilon^2)$, then the anchored uniform deviation is larger than $\epsilon$ with probability at least $3/4$. Since $\delta\le1/4$, any sample size satisfying anchored UC must also satisfy
\[
m\ge c_1\frac{\sigma^2\lfloor R\rfloor^2d^2}{\epsilon^2}
\]
for an absolute constant $c_1>0$.

\smallskip
\noindent\emph{Confidence term.}
Since $\sum_{j=1}^d \xi_j\sim N(0,d)$, the Gaussian tail lower bound gives an absolute constant $\eta_1>0$ such that, for every $0<\delta\le1/4$,
\[
\Prob\left[\left|\sum_{j=1}^d \xi_j\right|\ge \eta_1\sqrt{d\log\frac{1}{\delta}}\right]\ge 2\delta.
\]
Since $\|\xi\|_1\ge|\sum_{j=1}^d\xi_j|$, the same lower bound holds with $\|\xi\|_1$ in place of $|\sum_{j=1}^d\xi_j|$.
Therefore, if $m<\eta_1^2\sigma^2\lfloor R\rfloor^2d\log(1/\delta)/\epsilon^2$, then the anchored uniform deviation is larger than $\epsilon$ with probability greater than $\delta$. Thus every valid anchored UC sample size also satisfies
\[
m\ge c_2\frac{\sigma^2\lfloor R\rfloor^2d}{\epsilon^2}\log\frac{1}{\delta}
\]
for an absolute constant $c_2>0$. Combining the two lower bounds and using $\max\{u,v\}\ge (u+v)/2$ proves
\[
m_{X_R^{(\infty)}}^{\mathrm{a-UC}}(\epsilon,\delta;\mathfrak{D}^{\mathrm{sc}}_{\mu,L,\sigma})
\ge c\frac{\sigma^2\lfloor R\rfloor^2d}{\epsilon^2}\left(d+\log\frac{1}{\delta}\right)
\]
for an absolute constant $c>0$.

\subsection{Proof of the upper bound in \Cref{thm:sc-erm-upper-lower}}

Fix $R\in[1,\infty]$, $D\in\mathfrak{D}^{\mathrm{sc}}_{\mu,L,\sigma}$, and a sample $S\sim D^m$. Let
\[
\x^\star\in\argmin_{\x\in X_R^{(\infty)}}F_D(\x),\qquad \hat \x_S\in\argmin_{\x\in X_R^{(\infty)}}F_S(\x)
\]
be the selected population and empirical minimizers. Define
\[
\Delta_D(\x):=F_D(\x)-F_D(\x^\star),\qquad
\Delta_S(\x):=F_S(\x)-F_S(\x^\star),
\]
and set $G(\x):=\Delta_D(\x)-\Delta_S(\x)$. Since $F_S(\hat \x_S)\le F_S(\x^\star)$, we have $\Delta_S(\hat \x_S)\le0$, and hence $\Delta_D(\hat \x_S)\le G(\hat \x_S)$.

For any $\x,\y\in X_R^{(\infty)}$ and any $\lambda\in\R$, the increment assumption and independence give
\[
\E\exp\left(\lambda(G(\x)-G(\y))\right)\le \exp\left(\frac{\lambda^2\sigma^2\|\x-\y\|_2^2}{2m}\right).
\]
For $s\ge0$, set $A_s:=\{\x\in X_R^{(\infty)}:\Delta_D(\x)\le s\}$. By \Cref{lem:sc-localization},
\[
A_s\subseteq B_2\left(\x^\star,2\sqrt{d\min\{\kappa,\lfloor R\rfloor^2\}+\frac{s}{\mu}}\right).
\]
The set $A_s$ is finite, hence compact. Indeed, this is immediate when $R<\infty$, while for $R=\infty$ the localization bound makes $A_s$ a bounded subset of the lattice. Since $G(\x^\star)=0$, applying \Cref{lem:subgaussian-euclidean-ball} with $\x_0=\x^\star$, radius $2\sqrt{d\min\{\kappa,\lfloor R\rfloor^2\}+s/\mu}$, and $a=\sigma/\sqrt m$ gives, with probability at least $1-e^{-t}$,
\[
\sup_{\x\in A_s}G(\x)\le C_1\sigma\sqrt{\frac{(d\min\{\kappa,\lfloor R\rfloor^2\}+s/\mu)(d+t)}{m}},
\]
where $C_1>0$ is an absolute constant.

We prove the relative bound $G(\x)\le \Delta_D(\x)/2+r$ uniformly over $X_R^{(\infty)}$ by peeling the level sets of $\Delta_D$. Let $T:=d+\log(1/\delta)$ and
\[
r:=C_2\left(\sigma\sqrt{\frac{d\min\{\kappa,\lfloor R\rfloor^2\}T}{m}}+\frac{\sigma^2T}{\mu m}\right),
\]
where $C_2>0$ is fixed below. Apply this bound with $s_k=2^kr$ and $t_k=\log(1/\delta)+(k+1)^2$, so that $e^{-t_k}=\delta e^{-(k+1)^2}$. For $k\ge0$, let $E_k$ be the event that
\[
\sup_{\x\in A_{2^kr}}G(\x)> C_1\sigma\sqrt{\frac{(d\min\{\kappa,\lfloor R\rfloor^2\}+2^kr/\mu)(T+(k+1)^2)}{m}}.
\]
The estimate gives $\Prob(E_k)\le e^{-t_k}$, and hence
\[
\Prob\left(\bigcup_{k\ge0}E_k\right)\le \sum_{k\ge0}e^{-t_k}=\delta\sum_{k\ge0}e^{-(k+1)^2}<\delta.
\]
Let $\mathcal E:=\bigcap_{k\ge0}E_k^c$. Then $\Prob(\mathcal E)\ge1-\delta$, and on $\mathcal E$, for every $k\ge0$,
\[
\sup_{\x\in A_{2^kr}}G(\x)\le C_1\sigma\sqrt{\frac{(d\min\{\kappa,\lfloor R\rfloor^2\}+2^kr/\mu)(T+(k+1)^2)}{m}}.
\]
Since $T\ge1$ and $(k+1)^2\le 4\cdot 2^kT$, the right-hand side is bounded as
\[
\begin{aligned}
\sup_{\x\in A_{2^kr}}G(\x)
&\le 3C_1\sigma\sqrt{\frac{(d\min\{\kappa,\lfloor R\rfloor^2\}+2^kr/\mu)2^kT}{m}} \\
&\le 3C_1\,2^{k/2}\sigma\sqrt{\frac{d\min\{\kappa,\lfloor R\rfloor^2\}T}{m}}+3C_1\,2^k\sigma\sqrt{\frac{rT}{\mu m}} \\
&\le 3C_1\,2^{k/2}\frac{r}{C_2}+3C_1\,2^k\frac{r}{\sqrt{C_2}},
\end{aligned}
\]
where the last line uses the definition of $r$. Choose $C_2\ge1$ so that $3C_1/C_2+3C_1/\sqrt{C_2}\le1/4$. Then
\[
\sup_{\x\in A_{2^kr}}G(\x)\le \begin{cases}
r, & k=0,\\
2^{k-2}r, & k\ge1.
\end{cases}
\]
Therefore, on $\mathcal E$,
\[
G(\x)\le \frac12\Delta_D(\x)+r\qquad \forall \x\in X_R^{(\infty)}.
\]
Indeed, if $\Delta_D(\x)\le r$, then $G(\x)\le r$. If $2^{k-1}r<\Delta_D(\x)\le2^kr$ for some $k\ge1$, then $\x\in A_{2^kr}$ and $G(\x)\le 2^{k-2}r<\Delta_D(\x)/2$.

Applying the relative bound to $\hat \x_S$ gives
\[
\Delta_D(\hat \x_S)\le G(\hat \x_S)\le \frac12\Delta_D(\hat \x_S)+r,
\]
and hence $\Delta_D(\hat \x_S)\le2r$. Thus, with probability at least $1-\delta$,
\[
F_D(\hat \x_S)-F_D(\x^\star)
\le 2C_2\left(\sigma\sqrt{\frac{d\min\{\kappa,\lfloor R\rfloor^2\}(d+\log(1/\delta))}{m}}
+\frac{\sigma^2}{\mu}\frac{d+\log(1/\delta)}{m}\right).
\]
Consequently, every integer
\[
m\ge \left\lceil 16C_2^2\sigma^2\left(d+\log\frac{1}{\delta}\right)
\max\left\{\frac{d\min\{\kappa,\lfloor R\rfloor^2\}}{\epsilon^2},\frac{1}{\mu\epsilon}\right\}\right\rceil
\]
makes the excess bound at most $\epsilon$. Since $\kappa\ge1$ and $\lfloor R\rfloor\ge1$, the assumption $\epsilon\le\mu d$ gives
\[
\frac{1}{\mu\epsilon}\le \frac{d\min\{\kappa,\lfloor R\rfloor^2\}}{\epsilon^2}.
\]
This proves the desired upper bound for ERM sample complexity.

\subsection{Proof of the lower bound in \Cref{thm:sc-erm-upper-lower}}

\begin{lemma}\label{lem:sc-block-gadget}
Assume that $\kappa=L/\mu\ge64$, and let $\tau$ be a positive integer satisfying $\tau^2\le\kappa/16$.
For $b\in\{\pm1\}$ and $\gamma>0$, define
\[
\phi_b(x,y):=\mu(x-\tau y)^2+\frac{L}{4}\left(y-\frac{1}{2}\right)^2-\gamma b\frac{x}{\tau},\qquad (x,y)\in\R^2.
\]
Assume $0<\gamma\le\mu/24$. For each $b\in\{\pm1\}$, the function $\phi_b$ is $\mu$-strongly convex and $L$-smooth on $\R^2$. Moreover, $\phi_{+1}$ has the unique minimizer $(\tau,1)$ over $\Z^2$, while $\phi_{-1}$ has the unique minimizer $(0,0)$ over $\Z^2$. Finally, if
\[
\widehat b_{\mathrm{block}}(x,y):=\begin{cases}
+1, & (x,y)=(\tau,1),\\
-1, & (x,y)\ne(\tau,1),
\end{cases}
\]
then, for every $b\in\{\pm1\}$ and every $(x,y)\in\Z^2$,
\[
\phi_b(x,y)-\min_{(x',y')\in\Z^2}\phi_b(x',y')\ge \frac{\gamma}{24}\left(1-b\,\widehat b_{\mathrm{block}}(x,y)\right).
\]
\end{lemma}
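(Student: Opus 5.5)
The proof splits into two independent parts. The first is a Hessian computation for strong convexity and smoothness. The second is a direct enumeration of $\Z^2$ by the value of $y$, comparing against the two candidate points $(0,0)$ and $(\tau,1)$.

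\emph{Curvature.} Since $\phi_b$ is quadratic, its Hessian is constant:
\[
H=2\mu\begin{pmatrix}1&-\tau\\-\tau&\tau^2\end{pmatrix}+\begin{pmatrix}0&0\\0&L/2\end{pmatrix}.
\]
I will bound its eigenvalues $\lambda_{\min}\le\lambda_{\max}$ using the trace and the determinant.
\begin{itemize}
\item The trace is $2\mu(1+\tau^2)+L/2$. Using $\tau^2\le\kappa/16$ and $2\mu\le L/32$ (from $\kappa\ge64$), it is at most $L/32+L/8+L/2\le L$. Since $H$ is positive semidefinite, $\lambda_{\max}\le\mathrm{tr}(H)\le L$.
\item The determinant is $2\mu\cdot L/2=\mu L$, because the first matrix has rank one with its $y$-diagonal entry absorbed. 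Hence $\lambda_{\min}=\mu L/\lambda_{\max}\ge\mu L/L=\mu$.
\end{itemize}
This gives $\mu$-strong convexity and $L$-smoothness of each $\phi_b$ in the sense of \eqref{eq:sc-first-order}.

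\emph{Integer minimizers.} First compute the two candidate values: $\phi_b(0,0)=L/16$ and $\phi_b(\tau,1)=L/16-\gamma b$. To handle a general integer point, put $t:=x-\tau y$ and rewrite the linear term as $-\gamma b x/\tau=-\gamma b t/\tau-\gamma b y$, so that
\[
\phi_b(x,y)=\mu t^2-\frac{\gamma b t}{\tau}-\gamma b y+\frac L4\left(y-\frac12\right)^2 .
\]
I then split into cases according to $y$.
\begin{itemize}
\item \emph{Case $y\notin\{0,1\}$.} Here $(y-1/2)^2\ge 9/4$, and also $(y-1/2)^2\ge y^2/4$ roughly. Completing the square gives $\mu t^2-\gamma bt/\tau\ge-\gamma^2/(4\mu\tau^2)\ge-\gamma/96$. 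The quadratic term $\frac L4(y-\frac12)^2$ exceeds $L/16$ by at least $L/2$. Since $L\ge64\mu\ge 64\cdot24\gamma$, this excess dominates the remaining linear term $\gamma|y|$ by a wide margin. So these points lie at least of order $L$ above both candidates.
\item \emph{Case $y\in\{0,1\}$ and $t\ne0$.} Because $x$ is an integer and $\tau y$ is an integer, $|t|\ge1$. Then $\mu t^2-\gamma|t|/\tau\ge\mu-\gamma$, since this expression is increasing in $|t|\ge1$ when $\gamma\le\mu$. Hence $\phi_b\ge L/16+\mu-2\gamma$. This exceeds $\min\{L/16,\,L/16-\gamma b\}$ by at least $\mu-2\gamma\ge22\gamma$.
\item \emph{Remaining points.} The only points left are $(0,0)$ and $(\tau,1)$. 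Comparing their values $L/16$ and $L/16-\gamma b$ shows the unique integer minimizer is $(\tau,1)$ for $b=+1$ and $(0,0)$ for $b=-1$.
\end{itemize}
The enumeration also yields the gap bound: every integer point other than the true minimizer has excess at least $\gamma$. This holds for the other candidate (excess exactly $\gamma$), and the two cases above give excess at least $22\gamma$.

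\emph{Final inequality.} If $\widehat b_{\mathrm{block}}(x,y)=b$, the right-hand side is $0$ and the inequality holds since the excess is nonnegative. Otherwise $(x,y)$ is not the minimizer of $\phi_b$. This covers both sub-cases: $b=+1$ with $(x,y)\ne(\tau,1)$, and $b=-1$ with $(x,y)=(\tau,1)$. In either sub-case the excess is at least $\gamma\ge\gamma/12=\frac{\gamma}{24}\cdot 2$, which is the claim.

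The main obstacle is the $y\notin\{0,1\}$ case. There the linear term $-\gamma b x/\tau$ is unbounded in $x$, so the argument must use the reparametrization via $t$ so that the coupling term $\mu(x-\tau y)^2$ absorbs it. The constants must be checked carefully, using $\gamma\le\mu/24$ and $\kappa\ge64$.
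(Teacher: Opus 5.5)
Your proposal is correct and follows essentially the same route as the paper: the trace/determinant bound on the Hessian, the substitution $t=x-\tau y$ with $x/\tau=y+t/\tau$, the case split on $y\in\{0,1\}$ versus $y\notin\{0,1\}$, and the conclusion that every non-minimizing integer point has excess at least $\gamma$. The only difference is packaging: the paper combines the two cases into the single estimate $|x/\tau|+|x/\tau-1|\le 3(p^2+\kappa y(y-1))$, whereas you complete the square in $t$. When you write out the $y\notin\{0,1\}$ case, use $\frac{L}{4}(y-\frac12)^2-\frac{L}{16}=\frac{L}{4}y(y-1)\ge\frac{L}{4}|y|$, which gives $\phi_b(x,y)-\frac{L}{16}\ge\frac{L}{4}-\gamma-\frac{\gamma}{96}$ directly. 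The bound $(y-\frac12)^2\ge y^2/4$ on its own is not enough at $y=-1$.
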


\begin{proof}[Proof of \Cref{lem:sc-block-gadget}]
The Hessian of the quadratic part is
\[
\begin{pmatrix}
2\mu & -2\mu \tau\\
-2\mu \tau & 2\mu \tau^2+L/2
\end{pmatrix}.
\]
Its determinant is $\mu L$. Also, using $\tau^2\le \kappa/16$, its trace is at most
\begin{equation}\label{eq:sc-block-trace-bound}
2\mu+2\mu \tau^2+\frac{L}{2}\le 2\mu+\frac{L}{8}+\frac{L}{2}\le L,
\end{equation}
where the last inequality uses $\kappa\ge64$. Therefore the largest eigenvalue is at most $L$, and the smallest eigenvalue is at least $\mu L/L=\mu$. Thus every block is $\mu$-strongly convex and $L$-smooth.

The quadratic part ties exactly the two integer points $(0,0)$ and $(\tau,1)$. Indeed, for an integer point $(x,y)\in\Z^2$, write $p:=x-\tau y$. Then $p\in\Z$, and after subtracting the value at $(0,0)$,
\[
\mu(x-\tau y)^2+\frac{L}{4}\left(y-\frac{1}{2}\right)^2-\frac{L}{16}=\mu p^2+\frac{L}{4}y(y-1).
\]
This quantity is zero exactly at $(0,0)$ and $(\tau,1)$. At every other integer point,
\[
\mu p^2+\frac{L}{4}y(y-1)\ge \frac{\mu}{4}\left(p^2+\kappa y(y-1)\right)\ge \frac{\mu}{4}.
\]
The linear perturbation is controlled by the same integer gap. We claim that
\[
\left|\frac{x}{\tau}\right|+\left|\frac{x}{\tau}-1\right|\le 3\left(p^2+\kappa y(y-1)\right)\qquad \forall (x,y)\in\Z^2\setminus\{(0,0),(\tau,1)\}.
\]
Indeed, since $x/\tau=y+p/\tau$, if $y\in\{0,1\}$, then the excluded endpoints force $p\ne0$. As $p\in\Z$ and $\tau\ge1$,
\[
\left|\frac{x}{\tau}\right|+\left|\frac{x}{\tau}-1\right|
\le 1+2\frac{|p|}{\tau}
\le 3p^2.
\]
If $y\notin\{0,1\}$, then $y(y-1)\ge1$ and $|y|+|y-1|\le3y(y-1)$ for integer $y$. Thus
\[
\left|\frac{x}{\tau}\right|+\left|\frac{x}{\tau}-1\right|
\le |y|+|y-1|+2\frac{|p|}{\tau}
\le 3y(y-1)+2p^2
\le 3\left(p^2+\kappa y(y-1)\right),
\]
where the last inequality uses $\kappa\ge1$.

Assume $0<\gamma\le \mu/24$. Then, at every integer point outside $\{(0,0),(\tau,1)\}$,
\[
\gamma\left|\frac{x}{\tau}-1\right|+\gamma\left|\frac{x}{\tau}\right|
\le \frac{\mu}{8}\left(p^2+\kappa y(y-1)\right).
\]
For $b=+1$,
\[
\phi_{+1}(x,y)-\phi_{+1}(\tau,1)=\mu p^2+\frac{L}{4}y(y-1)-\gamma\left(\frac{x}{\tau}-1\right).
\]
This equals $\gamma$ at $(0,0)$. At every integer point outside $\{(0,0),(\tau,1)\}$, the preceding two inequalities give a lower bound of
\[
\frac{\mu}{8}\left(p^2+\kappa y(y-1)\right)\ge 3\gamma.
\]
Thus $(\tau,1)$ is the unique integer minimizer of $\phi_{+1}$. Similarly,
\[
\phi_{-1}(x,y)-\phi_{-1}(0,0)=\mu p^2+\frac{L}{4}y(y-1)+\gamma\frac{x}{\tau}.
\]
This equals $\gamma$ at $(\tau,1)$, and the same bound shows that it is at least $3\gamma$ at every other integer point outside $\{(0,0),(\tau,1)\}$. Thus $(0,0)$ is the unique integer minimizer of $\phi_{-1}$.
The right-hand side in the block error bound is zero when the decoder is correct and at most $\gamma/12$ when it is wrong. Since the estimates above show that every wrongly decoded point has excess at least $\gamma$, the claimed bound follows.
\end{proof}

\begin{proof}[Proof of the lower bound in \Cref{thm:sc-erm-upper-lower}]
We first prove the result when $\kappa\ge64$. Let $c_1>0$ be an absolute constant such that $1-24c_1\ge7/8$. Assume $0<\epsilon\le c_1\mu d/72$. Let $\bar d=\lfloor d/2\rfloor$ be the number of two-dimensional blocks, and set
\[
\tau=\max\left\{1,\left\lfloor\frac{\min\{\sqrt{\kappa},\lfloor R\rfloor\}}{4}\right\rfloor\right\},\qquad
\gamma:=\frac{\epsilon}{c_1\bar d}.
\]
Then $1\le\tau\le\lfloor R\rfloor$, $\tau^2\le\kappa/16$, and
\[
\tau^2\ge \frac{1}{64}\min\{\kappa,\lfloor R\rfloor^2\}.
\]
Indeed, if $\min\{\sqrt{\kappa},\lfloor R\rfloor\}<8$, then $\tau=1$, otherwise $\tau\ge\min\{\sqrt{\kappa},\lfloor R\rfloor\}/8$.
Since $\bar d\ge d/3$ for $d\ge2$, this choice gives $\gamma\le\mu/24$. Let $\phi_b$ and $\widehat b_{\mathrm{block}}$ be the block function and block decoder from \Cref{lem:sc-block-gadget}. On the sample space $\R^d$, define
\[
f(\x;Z):=\sum_{j=1}^{\bar d}\left[\mu(x_{2j-1}-\tau x_{2j})^2+\frac{L}{4}\left(x_{2j}-\frac{1}{2}\right)^2\right]+\frac{\mu}{2}\sum_{\ell=2\bar d+1}^d x_\ell^2-\langle Z,\x\rangle.
\]
The Hessian calculation in \Cref{lem:sc-block-gadget} shows that $f(\cdot\,;Z)$ is $\mu$-strongly convex and $L$-smooth for every $Z\in\R^d$.

Let $\mathbf e_1,\ldots,\mathbf e_d$ be the standard basis of $\R^d$. For $\b\in\{\pm1\}^{\bar d}$, set
\[
\bm{\theta}_{\b}:=\sum_{j=1}^{\bar d}\frac{\gamma b_j}{\tau}\mathbf e_{2j-1},
\]
and let $D_{\b}:=N(\bm{\theta}_{\b},\sigma^2I_d)$. The corresponding population objective is
\[
F_{\b}(\x):=\E_{Z\sim D_{\b}}f(\x;Z)=
\begin{cases}
\displaystyle\sum_{j=1}^{\bar d} \phi_{b_j}(x_{2j-1},x_{2j}), & d \text{ is even},\\[0.25em]
\displaystyle\sum_{j=1}^{\bar d} \phi_{b_j}(x_{2j-1},x_{2j})+\frac{\mu}{2}x_d^2, & d \text{ is odd},
\end{cases}
\qquad \x\in\R^d.
\]
Define the block decoder
\[
\widehat b_j(\x):=\widehat b_{\mathrm{block}}(x_{2j-1},x_{2j}),\qquad j=1,\ldots,\bar d.
\]
Write $\widehat{\b}(\x):=(\widehat b_1(\x),\ldots,\widehat b_{\bar d}(\x))$. The block minimizers $(0,0)$ and $(\tau,1)$ belong to the box because $\tau\le\lfloor R\rfloor$ and $R\ge1$, and the extra coordinate is minimized at $0$ when $d$ is odd. Thus the minimum of $F_{\b}$ over $X_R^{(\infty)}$ is the sum of the corresponding block minima. Summing the block error estimate in \Cref{lem:sc-block-gadget} over the $\bar d$ blocks gives
\[
F_{\b}(\x)-\min_{\z\in X_R^{(\infty)}}F_{\b}(\z)\ge \frac{\gamma}{24}\left(\bar d-\langle \widehat{\b}(\x),\b\rangle\right)\qquad \forall \x\in X_R^{(\infty)}.
\]

Moreover,
\[
f(\x;Z)-f(\y;Z)-F_{\b}(\x)+F_{\b}(\y)=-\langle Z-\bm{\theta}_{\b},\x-\y\rangle,
\]
which is $\sigma\|\x-\y\|_2$-sub-Gaussian. Hence these distributions belong to the admissible class for this loss map. If $\b,\b'\in\{\pm1\}^{\bar d}$, then
\[
\|\bm{\theta}_{\b}-\bm{\theta}_{\b'}\|_2^2=\frac{\gamma^2}{\tau^2}\|\b-\b'\|_2^2\le 64\frac{\gamma^2}{\min\{\kappa,\lfloor R\rfloor^2\}}\|\b-\b'\|_2^2.
\]

\smallskip
\noindent\emph{Dimension term.}
Suppose that a learning rule $A_m:(\R^d)^m\to X_R^{(\infty)}$ is $\epsilon$-optimal with probability at least $3/4$ under every $D_{\b}$.
Let $B$ be uniform on $\{\pm1\}^{\bar d}$. Conditional on $B=\b$, let
\[
S=(Z_1,\ldots,Z_m)\sim D_{\b}^m,
\]
and define the decoded output vector
\[
Y:=\widehat{\b}(A_m(S))\in\{\pm1\}^{\bar d}.
\]
On the success event,
\[
\frac{1}{\bar d}\langle B,Y\rangle\ge 1-\frac{24\epsilon}{\gamma\bar d}=1-24c_1\ge\frac{7}{8}.
\]
Since $\bar d^{-1}\langle B,Y\rangle\in[-1,1]$, this implies
\[
\E\left[\frac{1}{\bar d}\langle B,Y\rangle\right]\ge \frac{3}{4}\cdot \frac{7}{8}+\frac{1}{4}\cdot(-1)=\frac{13}{32}.
\]

To upper bound the same correlation, fix $j\in[\bar d]$ and let
\[
T_j:=\left((B_s)_{s\ne j},\ (Z_{i,\ell})_{i\in[m],\,\ell\ne 2j-1}\right).
\]
Condition on $T_j$. The only remaining randomness lies in $B_j$ and $(Z_{i,2j-1})_{i\in[m]}$, and $B_j$ remains uniform on $\{\pm1\}$. Moreover,
\[
(Z_{i,2j-1})_{i\in[m]}\mid (T_j,B_j=1)\sim Q_+^{\otimes m}, \qquad (Z_{i,2j-1})_{i\in[m]}\mid (T_j,B_j=-1)\sim Q_-^{\otimes m},
\]
where
\[
Q_+:=N(\gamma/\tau,\sigma^2),\qquad Q_-:=N(-\gamma/\tau,\sigma^2).
\]
Using total variation and Pinsker's inequality,
\begin{align*}
\E[B_jY_j\mid T_j]
&= \frac{1}{2}\left(\E[Y_j\mid T_j,B_j=1]-\E[Y_j\mid T_j,B_j=-1]\right) \\
&= \frac{1}{2}\left(\E_{Q_+^{\otimes m}}[Y_j]-\E_{Q_-^{\otimes m}}[Y_j]\right) \\
&\le \frac{1}{2}\left|\E_{Q_+^{\otimes m}}[Y_j]-\E_{Q_-^{\otimes m}}[Y_j]\right| \\
&\le \mathrm{TV}\left(Q_+^{\otimes m},Q_-^{\otimes m}\right) \\
&\le \sqrt{\frac{1}{2}\mathrm{KL}\left(Q_+^{\otimes m}\|Q_-^{\otimes m}\right)} \\
&= \frac{\gamma\sqrt m}{\sigma \tau}\le 8\sqrt{\frac{m\gamma^2}{\sigma^2\min\{\kappa,\lfloor R\rfloor^2\}}},
\end{align*}
where the equality uses the Gaussian identity in \Cref{lem:elementary-kl-bounds}, and the last inequality uses the lower bound on $\tau^2$. Taking expectations and summing over $j$ yields
\[
\E\left[\frac{1}{\bar d}\langle B,Y\rangle\right]\le 8\sqrt{\frac{m\gamma^2}{\sigma^2\min\{\kappa,\lfloor R\rfloor^2\}}}.
\]
Comparing this upper bound with the lower bound $13/32$, and using $\gamma=\epsilon/(c_1\bar d)$ and $\bar d\ge d/3$, shows that there is an absolute constant $c_2>0$ such that
\[
m\ge c_2\frac{\sigma^2\min\{\kappa,\lfloor R\rfloor^2\}d^2}{\epsilon^2}.
\]
By monotonicity in the confidence parameter, this dimension lower bound holds for every $0<\delta\le1/4$.

\smallskip
\noindent\emph{Confidence term.}
Take the two hypotheses $\b^+:=\1_{\bar d}$ and $\b^-:=-\1_{\bar d}$, using the value of $\gamma$ fixed above. If a learning rule is $\epsilon$-optimal with probability at least $1-\delta$ under both hypotheses, then success under $D_{\b^+}$ gives $\langle \widehat{\b}(A_m(S)),\1_{\bar d}\rangle\ge 7\bar d/8$, while success under $D_{\b^-}$ gives $\langle \widehat{\b}(A_m(S)),\1_{\bar d}\rangle\le -7\bar d/8$. Thus the event
\[
E:=\{\langle \widehat{\b}(A_m(S)),\1_{\bar d}\rangle\ge0\}
\]
defines a test between $D_{\b^+}^{\otimes m}$ and $D_{\b^-}^{\otimes m}$ with error probability at most $\delta$. By \Cref{lem:testing-inequalities},
\[
\mathrm{KL}(D_{\b^+}^{\otimes m}\|D_{\b^-}^{\otimes m})\ge \frac12\log\frac{1}{2\delta}.
\]
On the other hand, the Gaussian identity in \Cref{lem:elementary-kl-bounds} gives
\[
\mathrm{KL}(D_{\b^+}^{\otimes m}\|D_{\b^-}^{\otimes m})=\frac{m}{2\sigma^2}\|\bm{\theta}_{\b^+}-\bm{\theta}_{\b^-}\|_2^2\le 128\frac{m\gamma^2\bar d}{\sigma^2\min\{\kappa,\lfloor R\rfloor^2\}}.
\]
Since $\delta\le1/4$, we have $\log(1/(2\delta))\ge \frac12\log(1/\delta)$. 

Combining the resulting confidence lower bound with the dimension lower bound gives, for an absolute constant $c_3>0$,
\[
m\ge c_3\frac{\sigma^2\min\{\kappa,\lfloor R\rfloor^2\}d}{\epsilon^2}\left(d+\log\frac{1}{\delta}\right),
\]
for all $\epsilon\le c_1\mu d/72$.

It remains to consider $1\le\kappa<64$. Assume again that $0<\epsilon\le c_1\mu d/72$, and set $\gamma:=\epsilon/(c_1d)$. On $\R^d$, take the sample loss
\[
f(\x;Z):=\frac{\mu}{2}\|\x\|_2^2-\langle Z,\x\rangle,
\]
and, for each $\b\in\{\pm1\}^d$, let $D_{\b}:=N((\mu/2)\1+\gamma\b,\sigma^2I_d)$. The Hessian is $\mu I_d$, so the sample losses are $\mu$-strongly convex and $\mu$-smooth. Since $L\ge\mu$, they are admissible for the given $(\mu,L)$ class. The corresponding population objective is
\[
F_{\b}(\x):=\E_{Z\sim D_{\b}}f(\x;Z)
=\frac{\mu}{2}\|\x\|_2^2-\left\langle \frac{\mu}{2}\1+\gamma\b,\x\right\rangle .
\]
The same Gaussian calculation verifies the sub-Gaussian increment condition.
Since $\epsilon\le c_1\mu d/72$, we have $\gamma\le\mu/72$. Therefore, the integer minimizer in coordinate $j$ is $1$ if $b_j=1$ and $0$ if $b_j=-1$. These minimizers belong to $X_R^{(\infty)}$ because $R\ge1$. Define $\widehat b_j(\x)=1$ when $x_j\ge1$ and $\widehat b_j(\x)=-1$ when $x_j\le0$, and write $\widehat{\b}(\x):=(\widehat b_1(\x),\ldots,\widehat b_d(\x))$. Then every wrongly decoded coordinate costs at least $\gamma$, so
\[
F_{\b}(\x)-\min_{\z\in X_R^{(\infty)}}F_{\b}(\z)\ge \frac{\gamma}{2}\left(d-\langle\widehat{\b}(\x),\b\rangle\right)\qquad \forall \x\in X_R^{(\infty)}.
\]
The preceding dimension and confidence testing arguments apply with $\bar d$ replaced by $d$. The only change from the two-dimensional block construction is that the one-coordinate Gaussian mean separation is $2\gamma$ rather than $2\gamma/\tau$, so the testing estimates have no $\kappa$ loss. Therefore, for an absolute constant $c_4>0$,
\[
m_{X_R^{(\infty)}}^\star(\epsilon,\delta;\mathfrak{D}^{\mathrm{sc}}_{\mu,L,\sigma})\ge c_4\frac{\sigma^2}{\epsilon^2}\left(d^2+d\log\frac{1}{\delta}\right).
\]
Since $R\ge1$ gives $\lfloor R\rfloor\ge1$, we have $\min\{\kappa,\lfloor R\rfloor^2\}\le\kappa<64$. Hence the last inequality implies the claimed bound with constant $c_4/64$. Taking $c_0:=c_1/72$ and $c:=\min\{c_3,c_4/64\}$ completes the proof.
\end{proof}

\subsection{Proof of \Cref{thm:sc-continuous-erm}}

The proof follows the proof of the upper bound in \Cref{thm:sc-erm-upper-lower}, with the lattice localization step replaced by its continuous analogue. Fix a nonempty closed convex set $K\subseteq\R^d$, a distribution $D\in\mathfrak{D}^{\mathrm{sc}}_{\mu,L,\sigma}$, and a sample $S\sim D^m$. Let
\[
\x_D^\star\in\argmin_{\x\in K}F_D(\x),\qquad \hat \x_S\in\argmin_{\x\in K}F_S(\x)
\]
be the population and empirical minimizers. They exist because $K$ is closed and the strongly convex objectives are coercive, and they are unique by strong convexity. Define $\Delta_D,\Delta_S$, and $G$ as in that proof, with $\x^\star$ replaced by $\x_D^\star$. Then $\Delta_D(\hat \x_S)\le G(\hat \x_S)$, and the same sub-Gaussian increment bound holds for $G$ with parameter $\sigma/\sqrt m$.

The localization step is sharper in the continuous problem and uses only strong convexity. Fix $\x\in K$ and $0<t<1$, and set $\x_t:=(1-t)\x_D^\star+t\x$. Since $K$ is convex, $\x_t\in K$. Since $\x_D^\star$ minimizes $F_D$ over $K$, strong convexity gives
\[
F_D(\x_D^\star)\le F_D(\x_t)\le (1-t)F_D(\x_D^\star)+tF_D(\x)-\frac{\mu}{2}t(1-t)\|\x-\x_D^\star\|_2^2.
\]
Dividing by $t$ and letting $t\downarrow0$ gives
\[
\Delta_D(\x)=F_D(\x)-F_D(\x_D^\star)\ge \frac{\mu}{2}\|\x-\x_D^\star\|_2^2\qquad \forall \x\in K.
\]
Thus, for $s\ge0$, the level set
\[
A_s=\{\x\in K:\Delta_D(\x)\le s\} \subseteq B_2\left(\x_D^\star,\sqrt{\frac{2s}{\mu}}\right).
\]
Unlike the lattice localization in \Cref{lem:sc-localization}, this bound has no $\kappa d$ term. The set $A_s$ is compact because it is closed and bounded by the inclusion above. Applying \Cref{lem:subgaussian-euclidean-ball} with $A=A_s$, $\x_0=\x_D^\star$, radius $\sqrt{2s/\mu}$, and $a=\sigma/\sqrt m$ gives an absolute constant $C_1>0$ such that, with probability at least $1-e^{-t}$,
\[
\sup_{\x\in A_s}G(\x)\le C_1\sigma\sqrt{\frac{s(d+t)}{\mu m}}.
\]
Let $T:=d+\log(1/\delta)$ and set
\[
r:=C_2\frac{\sigma^2T}{\mu m},
\]
where $C_2>0$ is fixed below. Apply the preceding bound with $s_k=2^kr$ and $t_k=\log(1/\delta)+(k+1)^2$. For $k\ge0$, let $E_k$ be the event that
\[
\sup_{\x\in A_{2^kr}}G(\x)> C_1\sigma\sqrt{\frac{2^kr(T+(k+1)^2)}{\mu m}}.
\]
The estimate gives $\Prob(E_k)\le e^{-t_k}$, and hence
\[
\Prob\left(\bigcup_{k\ge0}E_k\right)\le \sum_{k\ge0}e^{-t_k}=\delta\sum_{k\ge0}e^{-(k+1)^2}<\delta.
\]
Let $\mathcal E:=\bigcap_{k\ge0}E_k^c$. Then $\Prob(\mathcal E)\ge1-\delta$, and on $\mathcal E$, for every $k\ge0$,
\[
\sup_{\x\in A_{2^kr}}G(\x)\le C_1\sigma\sqrt{\frac{2^kr(T+(k+1)^2)}{\mu m}}.
\]
Since $T\ge1$ and $(k+1)^2\le4\cdot2^kT$, the right-hand side is at most
\[
C_3\,2^k\sigma\sqrt{\frac{rT}{\mu m}}
=C_3\,2^k\frac{r}{\sqrt{C_2}}
\]
for an absolute constant $C_3>0$. Choose $C_2$ so that $C_3/\sqrt{C_2}\le1/4$. Then
\[
\sup_{\x\in A_{2^kr}}G(\x)\le2^{k-2}r\qquad \forall k\ge0.
\]
Therefore, on $\mathcal E$,
\[
G(\x)\le \frac12\Delta_D(\x)+r\qquad \forall \x\in K.
\]
Indeed, if $\Delta_D(\x)\le r$, then $\x\in A_r$ and $G(\x)\le r$. If $2^{k-1}r<\Delta_D(\x)\le2^kr$ for some $k\ge1$, then $\x\in A_{2^kr}$ and $G(\x)\le2^{k-2}r<\Delta_D(\x)/2$.
Applying the relative bound to $\hat \x_S$ gives
\[
\Delta_D(\hat \x_S)\le G(\hat \x_S)\le \frac12\Delta_D(\hat \x_S)+r,
\]
and hence $\Delta_D(\hat \x_S)\le2r$. Therefore, with probability at least $1-\delta$,
\[
F_D(\hat \x_S)-F_D(\x_D^\star)
\le 2C_2\frac{\sigma^2}{\mu m}\left(d+\log\frac{1}{\delta}\right).
\]
The stated sample complexity bound follows by requiring the right-hand side to be at most $\epsilon$.

\end{document}